\documentclass[a4paper]{article}
\usepackage{a4wide}

\input{packages.sty}
\input{commands.sty}

\begin{document}

\author{Jannik Irmai$^{1,2}$, Shengxian Zhao$^1$, Jannik Presberger$^1$, Bjoern Andres$^{1,2,*}$}
\title{\textbf{A Graph Multi-separator Problem for Image Segmentation}}
\date{$^1$\textit{TU Dresden} \hspace{2ex} $^2$\textit{Center for Scalable Data Analytics and AI Dresden/Leipzig}}
\maketitle
\footnotetext[1]{Correspondence: \texttt{bjoern.andres@tu-dresden.de}}

\begin{abstract}
    We propose a novel abstraction of the image segmentation task in the form of a combinatorial optimization problem that we call the \emph{multi-separator problem}.
    Feasible solutions indicate for every pixel whether it belongs to a segment or a segment separator, and indicate for pairs of pixels whether or not the pixels belong to the same segment.
    This is in contrast to the closely related lifted multicut problem where every pixel is associated to a segment and no pixel explicitly represents a separating structure.
    While the multi-separator problem is \textsc{np}-hard, we identify two special cases for which it can be solved efficiently.
    Moreover, we define two local search algorithms for the general case and demonstrate their effectiveness in segmenting simulated volume images of foam cells and filaments.
    \\[1ex]
    \textbf{Keywords:} Graph separators, combinatorial optimization, image segmentation, complexity
\end{abstract}

\tableofcontents

\section{Introduction}\label{sec:introduction}
Fundamental in the field of image analysis is the task of decomposing an image into distinct objects.
Instances of this task differ with regard to the risk of making specific mistakes:
False cuts are the dominant risk e.g.~for volume images of intrinsically one-dimensional filaments (\Cref{fig:filament-example}).
False joins are the dominant risk e.g.~for volume images of intrinsically three-dimensional foam cells (\Cref{fig:cell-example}).
Mathematical abstractions of the image segmentation task in the form of optimization problems, as well as algorithms for solving these problems, exactly or approximately, typically have parameters for balancing the risk of false cuts and false joins.
Outstanding from these abstractions is the lifted multicut problem (cf.~\Cref{sec:related-work}) in that it treats cuts and joins symmetrically and allows the practitioner to bias solutions toward cuts or joins explicitly.
In the lifted multicut problem, every pixel is associated to an object and no pixel explicitly represents a structure separating these objects.
This is suitable for applications in the field of computer vision where, typically, every pixel can be associated to an object and no pixel explicitly represents a separating structure.

As the first and major contribution of this work, we analyze theoretically a novel abstraction of the image segmentation task in the form of a combinatorial optimization problem that we call the \emph{multi-separator problem}.
Like in the lifted multicut problem, feasible solutions make explicit for pairs of pixels whether these pixels belong to the same or distinct objects, and these two cases are treated symmetrically.
Also like in the lifted multicut problem, the number and size of segments is not constrained by the problem but is instead determined by its solutions.
Unlike for the lifted multicut problem, feasible solutions distinguish between separated and separating pixels in the image.
As the second and minor contribution of this work, we examine empirically the accuracy of the multi-separator problem as a model for segmenting objects that are separated by other objects, specifically, of foam cells or filaments that are separated by foam membranes or void. 
To this end, we define algorithms for finding locally optimal feasible solutions to the multi-separator problem efficiently, apply these to simulated volume images of foams and filaments, and report the accuracy of reconstructed foams or filaments defined by the output, along with absolute computation times.

Throughout this article, we abstract images as graphs whose nodes relate one-to-one to image pixels, whose edges connect adjacent pixels as depicted in \Cref{fig:img-seg-example}, and whose component-inducing node subsets model feasible segments of the image. 
Although these graphs are grid graphs, we define and discuss the multi-separator problem for arbitrary graphs, and we do not exploit the grid structure algorithmically.

The remainder of the article is organized as follows.
In \Cref{sec:related-work}, we discuss related work. 
In \Cref{sec:model}, we define the multi-separator problem, analyze its complexity and establish a connection to the lifted multicut problem.
In \Cref{sec:algorithms}, we define algorithms for finding locally optimal feasible solutions efficiently.
In \Cref{sec:experiments}, we examine the accuracy of the multi-separator problem as a model for segmenting synthetic volume images of simulated foams and filaments.
In \Cref{sec:conclusion}, we draw conclusions and discuss perspectives for future work.

\begin{figure}[t]
    \centering
    \begin{subfigure}[t]{0.19\textwidth}
        \includegraphics[height=5.5cm]{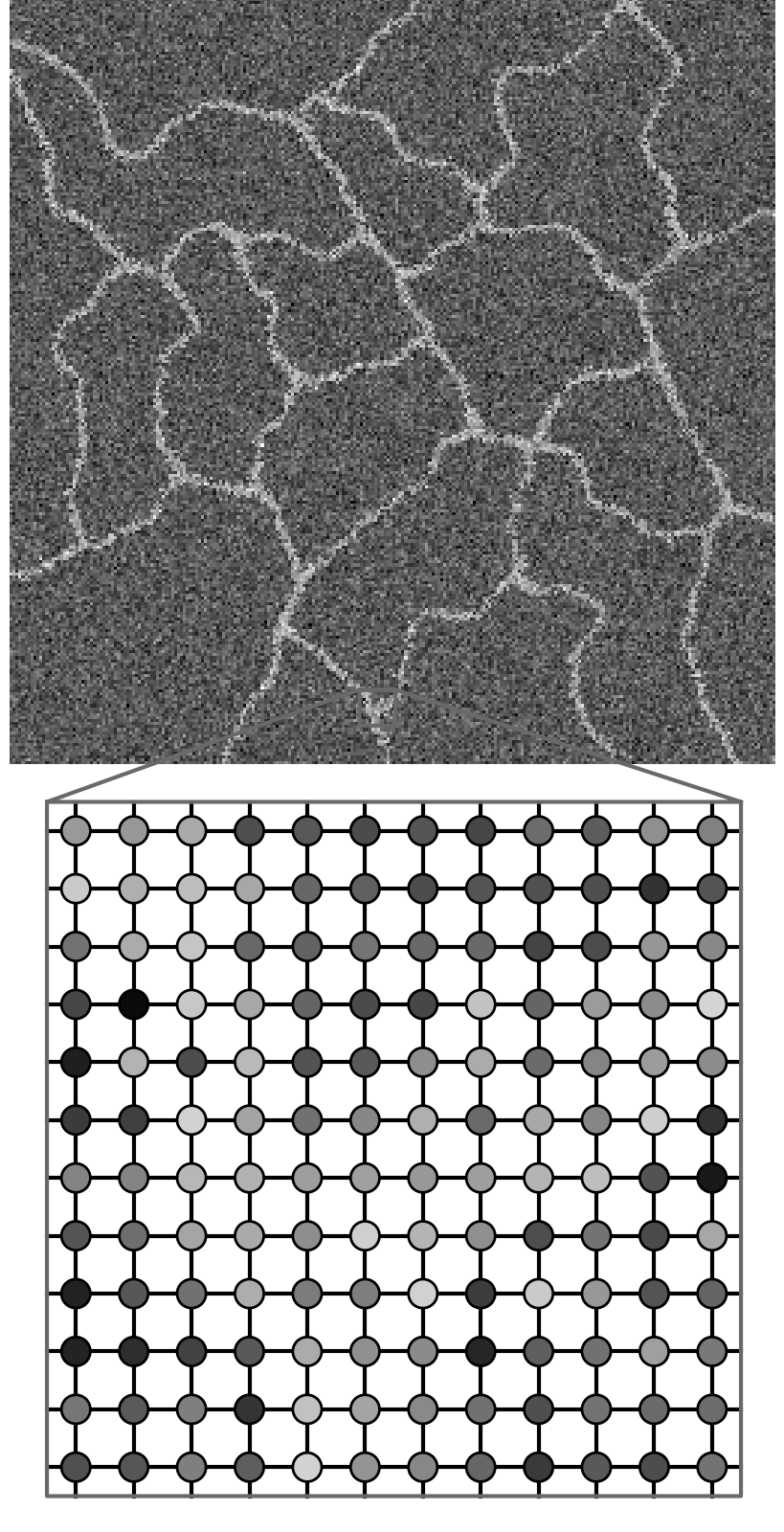}
        \caption{}
        \label{fig:img-seg-example-a}
    \end{subfigure}
    \begin{subfigure}[t]{0.19\textwidth}
        \includegraphics[height=5.5cm]{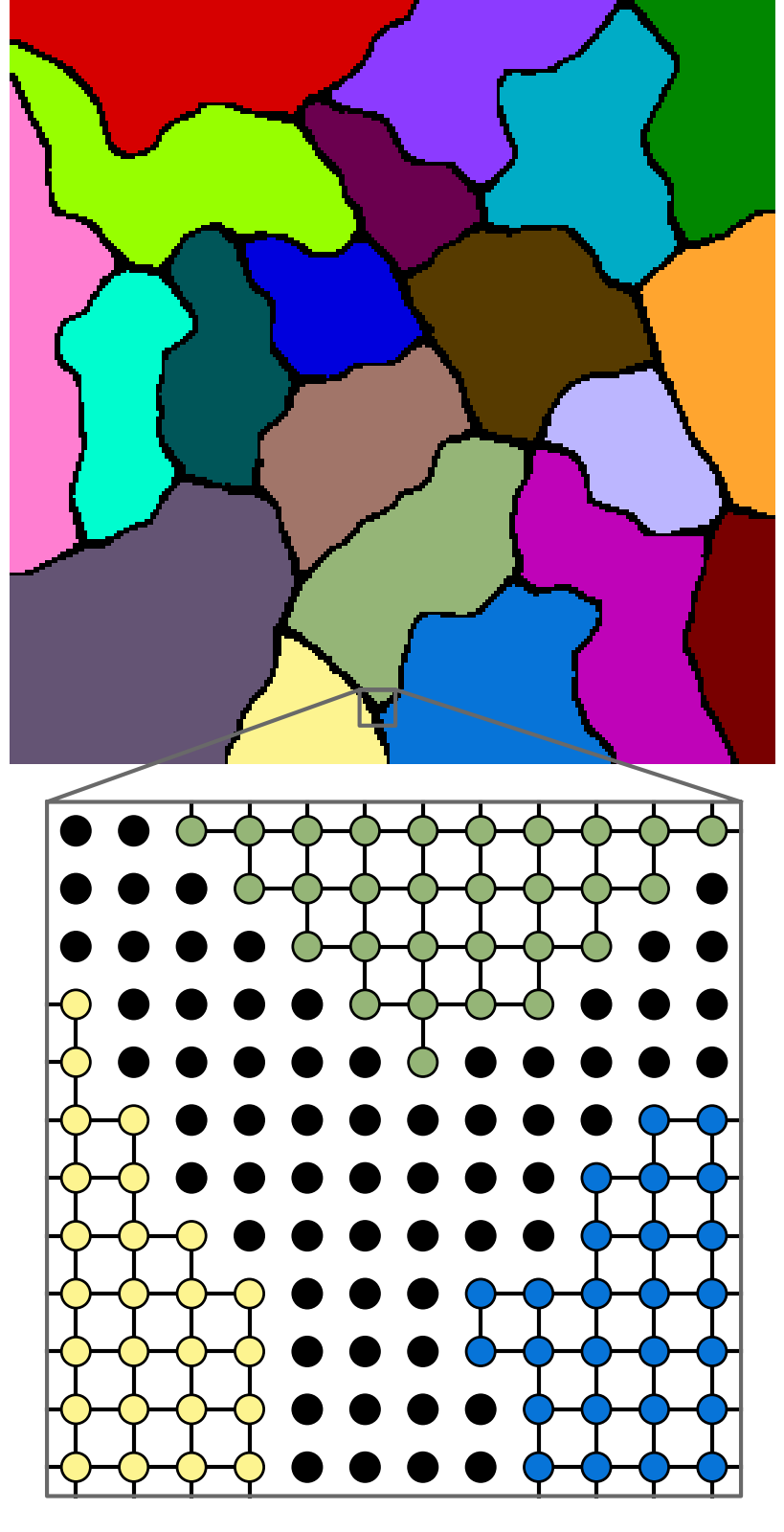}
        \caption{}
        \label{fig:img-seg-example-b}
    \end{subfigure}
    \begin{subfigure}[t]{0.19\textwidth}
        \includegraphics[height=5.5cm]{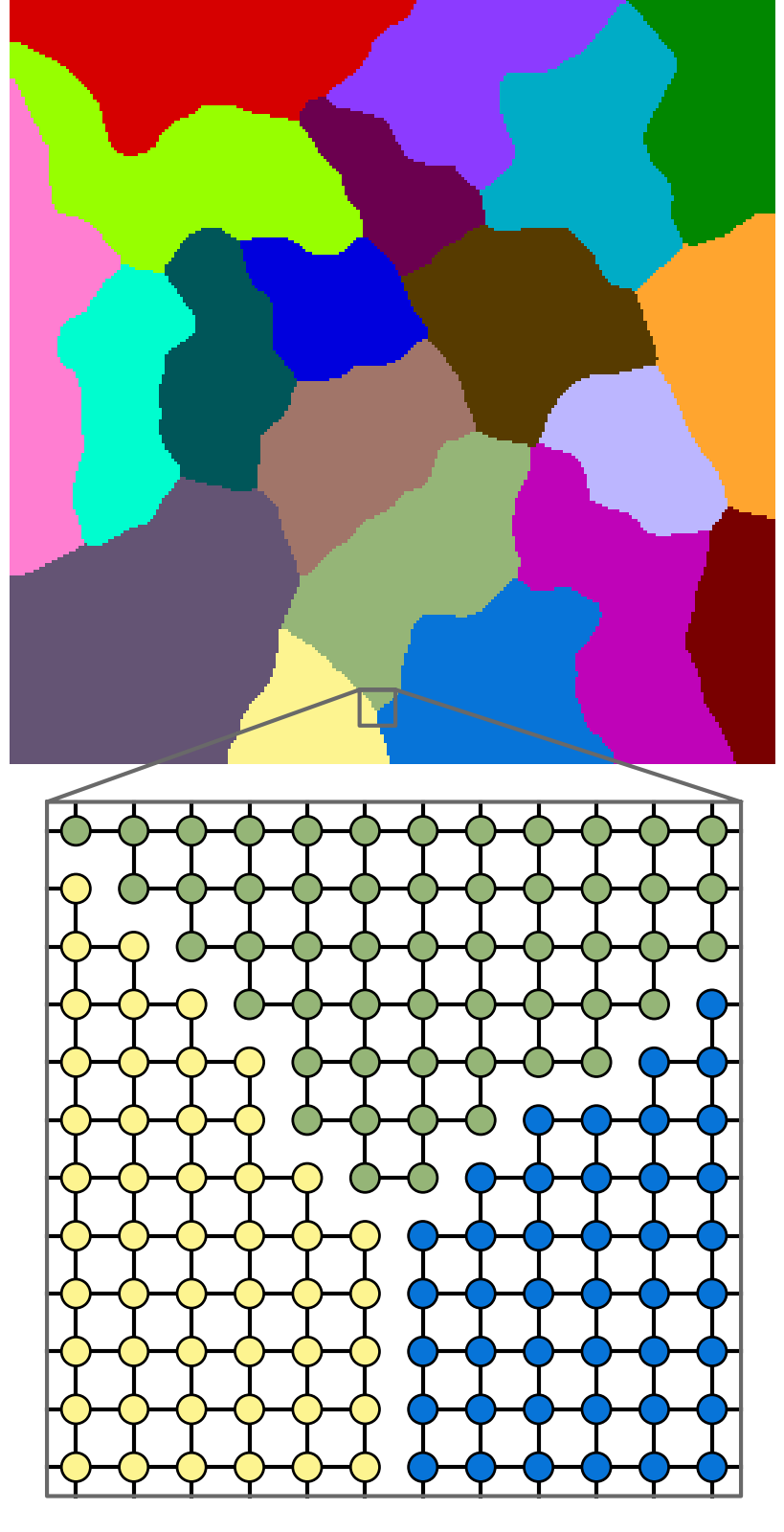}
        \caption{}
        \label{fig:img-seg-example-c}
    \end{subfigure}
    \unskip \hfill \vrule \hfill
    \begin{subfigure}[t]{0.19\textwidth}
        \includegraphics[height=5.5cm]{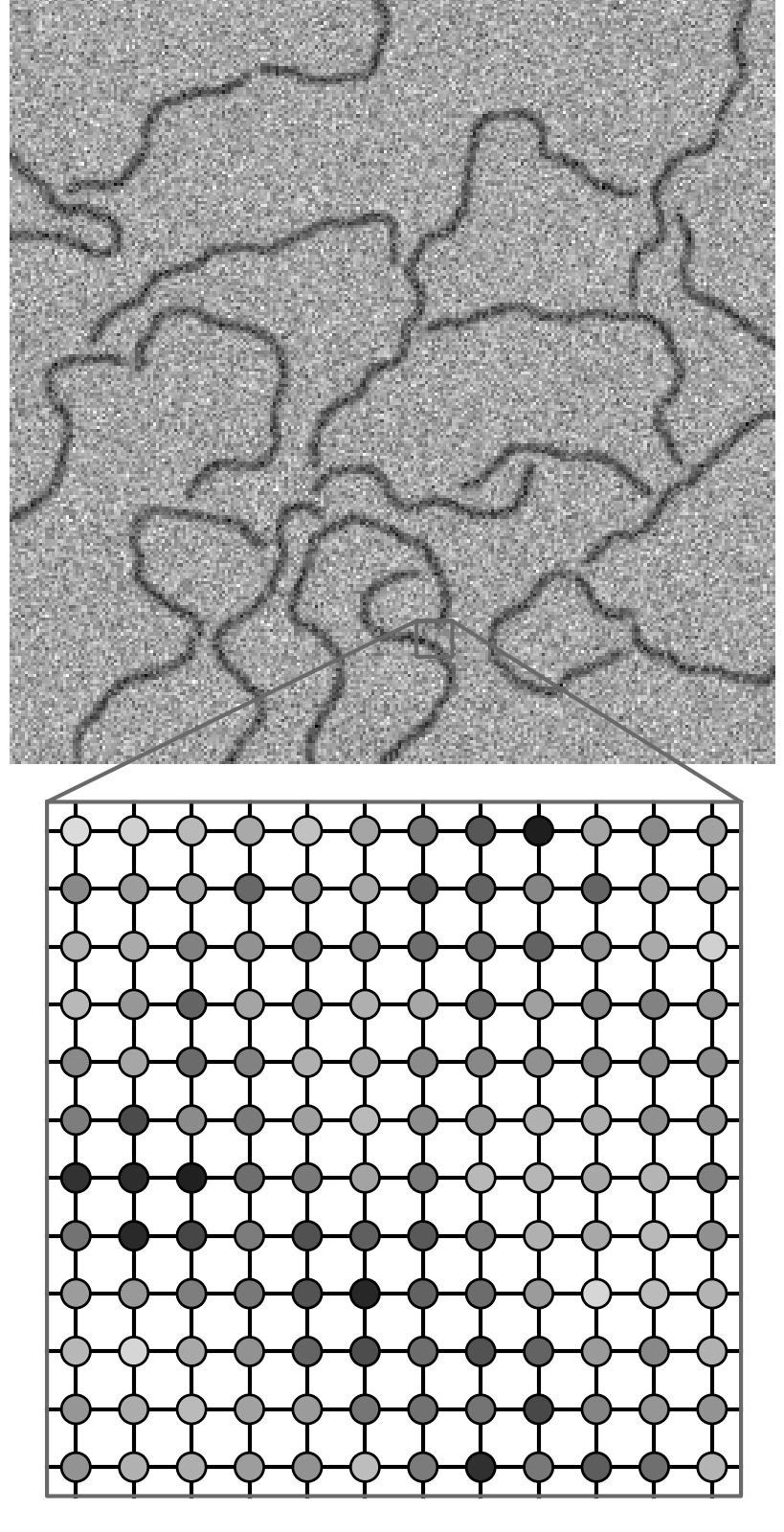}
        \caption{}
        \label{fig:img-seg-example-d}
    \end{subfigure}
    \begin{subfigure}[t]{0.19\textwidth}
        \includegraphics[height=5.5cm]{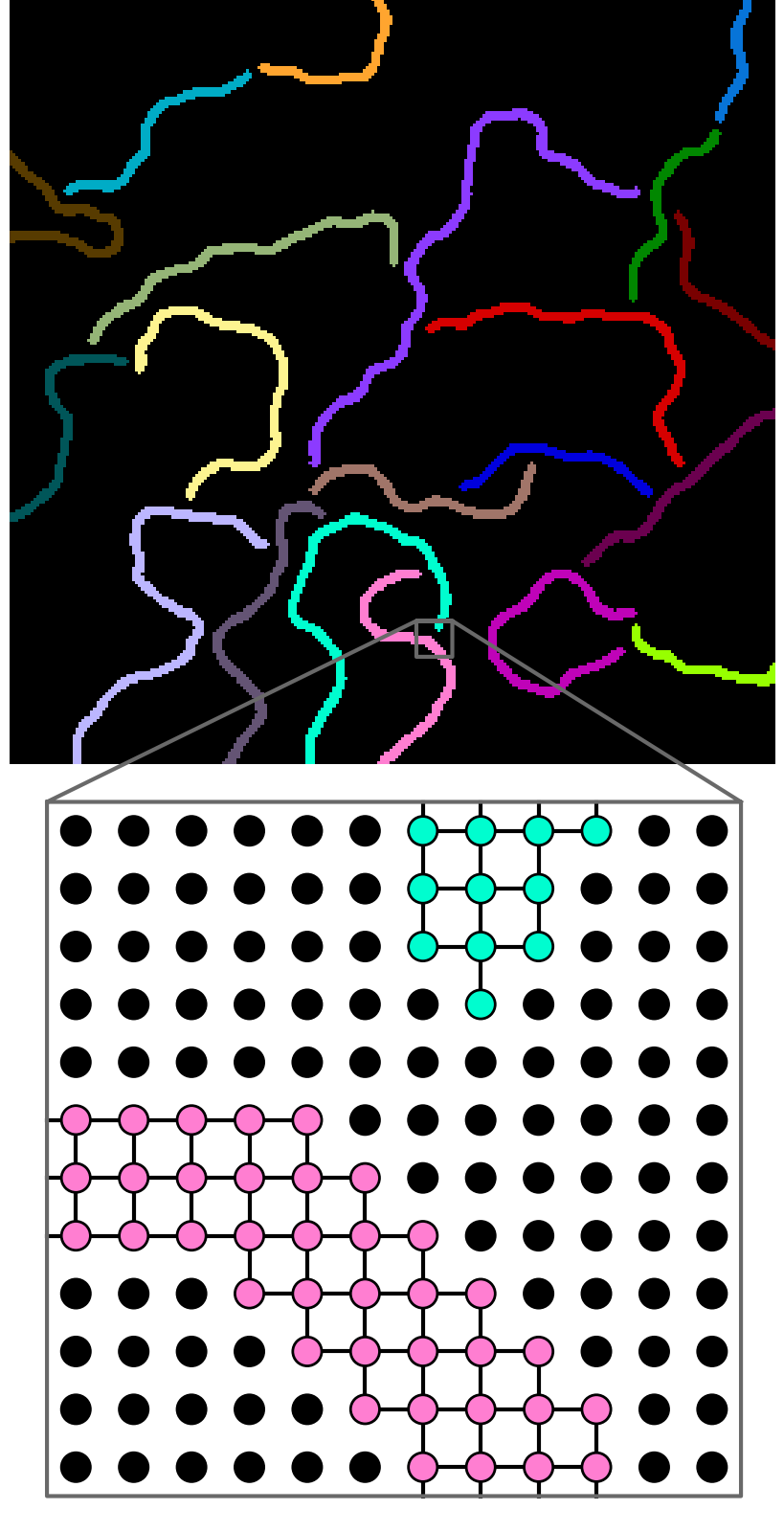}
        \caption{}
        \label{fig:img-seg-example-e}
    \end{subfigure}
    \caption{Depicted above are synthetic gray-scale images of foam cells, in \textbf{(a)}, and filaments, in \textbf{(d)}.
    Depicted in \textbf{(b)} and \textbf{(e)} in color are components of the pixel grid graphs of these image obtained by removing nodes (depicted in black). 
    These nodes represent a separating structure that we call a \emph{multi-separator} of the graph, and by which we treat the segmentation of foam cells and filaments analogously.
	This is in contrast to \textbf{(c)} where components of the pixel grid graph of the image \textbf{(a)} are obtained by removing edges, more specifically, a \emph{multicut} of the graph, and no pixel represents a separating structure.}
    \label{fig:img-seg-example}
\end{figure}

\section{Related work}\label{sec:related-work}
A close connection exists, as we show by \Cref{thm:lmp-msp-reduction,thm:msp-lmp-reduction}, between the multi-separator problem we define and the lifted multicut problem defined by 
\citet{keuper2015efficient} and discussed by \citet{horvnakova2017analysis,kardoost-2018,andres2023polyhedral}.
Both problems are defined with respect to a graph $G = (V,E)$ and a set $F$ of arbitrary node pairs.
The feasible solutions to the lifted multicut problem relate one-to-one to the decompositions of the graph, i.e.~all partitions of the node set into component-inducing subsets.
These are precisely all ways of decomposing the graph by removing edges.
In addition, the feasible solutions to the lifted multicut problem make explicit for all nodes pairs $\{u, v\} \in E \cup F$ whether $u$ and $v$ are in distinct components.
By assigning a positive or negative cost to this decision, the objective function penalizes or rewards decompositions that have this property.
No costs or constraints are imposed on the number or size of components.
Instead, these properties are determined by the solutions.
The lifted multicut problem has applications in the field of image analysis, notably to the tasks of image segmentation \citep{beier2017multicut,wolf2020mutex,lee2021learning}, video segmentation \citep{keuper-2017} and multiple object tracking \citep{tang-2017-multiple}.
The multi-separator problem is similar in that the objective function assigns positive or negative costs to pairs of nodes being in distinct components.
It is similar also in that no costs or constraints are imposed on the number or size of components and, instead, these properties are determined by the solutions.
The multi-separator problem is different, however, in that its feasible solutions do not relate to all ways of decomposing the graph by removing edges but to all ways of separating the graph by removing nodes.

More fundamental is the multicut problem, i.e.~the specialization of the lifted multicut problem with $F = \emptyset$, in which costs are assigned only to pairs of neighboring nodes \citep{chopra1993partition}.
The complexity and approximability of this problem and the closely related correlation clustering and coalition structure generation problems have been studied for signed graphs \citep{bansal2004correlation}, weighted graphs \citep{charikar2005clustering,demaine2006correlation} and planar graphs \citep{voice2012coalition,bachrach2013optimal,klein2023correlation}.
Connections to the task of image segmentation and algorithms are explored e.g.~by \citet{kappes-2011-globally,andres2011probabilistic,yarkony-2012,beier2014cut,kim-2014,zhang-2014,beier-2015-fusion,alush-2016,kappes-2016,reinelt-2016,kirillov2017instancecut,kardoost-2021}.

Of particular interest for applications in image analysis are special cases of the multicut and lifted multicut problem that can be solved efficiently.
As \citet{wolf2020mutex} show, the multicut problem can be solved efficiently for a cost pattern that we refer to here as absolute dominant costs.
Their result implies that also the lifted multicut problem with absolute dominant costs and non-positive (i.e.~cut-rewarding) costs for all non-neighboring node pairs can be solved efficiently.
The efficient algorithm by \citet{wolf2020mutex} is used e.g.~by \citet{lee2021learning} for segmenting volume images.
In contrast, the lifted multicut problem with a non-negative (i.e.~cut-penalizing) cost for even a single pair of non-neighboring nodes is \textsc{np}-hard \citep{horvnakova2017analysis}.
This hampers applications of the lifted multicut problem to the task of segmenting volume images of filaments in which one would like to attribute positive costs to some pairs of non-neighboring nodes in order to prevent false cuts.
Here, in this article, we show by \Cref{thm:ms-abs-dom-efficient}: Unlike the lifted multicut problem, the multi-separator problem can be solved efficiently also for absolute dominant costs and non-negative costs for non-neighboring nodes.

An efficient technique for image segmentation by nodes separators is the computation of watersheds \citep{meyer-1991,soille-1991}; see \citet{roerdink2000watershed} for a survey.
Watersheds depend on weights attributed to individual nodes.
Each component separated by watersheds corresponds to a local minimum of a node-weighted graph, or to a connected node set provided as additional input.
In contrast, the multi-separator problem associates costs also with node pairs, and its solutions are not constrained to local optima.
Watershed segmentation is canonical for images where a good initial estimate of components exists, like for images of foam cells.
It is less canonical for images where such estimates are difficult, like for images of filaments.
So far, fundamentally different models are used for reconstructing filaments, including 
\citep{rempfler-2015,shit-2022,turetken2016reconstructing}.
In our experiments, we empirically compare feasible solutions to a multi-separator problem to watershed segmentations.

Toward more complex models for image segmentation by node separators, an \textsc{np}-hard problem introduced and analyzed by \citet{hornakova-2020} is similar to the multi-separator problem in that it attributes unconstrained costs to pairs of nodes and in that its feasible solutions define components that are node-disjoint.
It is different from the multi-separator problem in that the components defined by its feasible solutions are necessarily paths, and in that these components are not necessarily node-separated.
An \textsc{np}-hard problem introduced and analyzed by \citet{nowozin-2010} is similar to the multi-separator problem in that costs can penalize disconnectedness.
It differs from the multi-separator problem in that feasible solutions define at most one component.

Graph separators have been studied also from a theoretical perspective. 
Structural results include Menger's Theorem \citep{menger1927allgemeinen}, the planar separator theorem \citep{lipton1979separator}, and the observation that all minimal $st$-separators form a lattice \citep{escalante1972schnittverbande}.
An efficient algorithm for enumerating all minimal separators of a graph is by \citet{berry2000generating}.
The problem of finding a partition of the node set of a graph into three sets $A$, $B$, $C$ such that $A$ and $B$ are separated by $C$ and such that $|C|$ is minimal subject to some constraints on $|A|$ and $|B|$ is studied by \citet{balas2005vertex,souza2005vertex,didi2011exact}.
This problem is \textsc{np}-hard even for planar graphs \citep{fukuyama2006np}.
The vertex $k$-cut problem asks for a minimum cardinality subset of nodes whose removal disconnects the graph into at least $k$ components.
\citet{cornaz2019vertex} show that this problem is \textsc{np}-hard for $k \geq 3$.
Exact algorithms for this problem are studied by \citet{furini2020integer}.
For a given set of terminal nodes, the multi-terminal vertex separator problem consists in finding a subset of nodes whose removal disconnects the graph such that no two terminals are in the same component.
This problem is studied by \citet{garg2004multiway,cornaz2019multi,magnouche2021multi}.
More loosely connected variants of the graph separator problem can be found in the referenced articles as well as in the articles referenced there.
The multi-separator problem we propose here is different in that no costs or constraints are imposed on the number of size of components and, instead, these properties are determined by the solutions.

\section{Multi-separator problem}\label{sec:model}
In this section, we define the multi-separator problem as a combinatorial optimization problem over graphs and discuss its connection to the lifted multicut problem.
\subsection{Problem statement}\label{sec:problem} 
Let $G = (V, E)$ be a connected graph, let $S \subseteq V$ be a subset of nodes, and let $u, v \in V$ with $u \neq v$. 
We say $u$ and $v$ are \emph{separated by $S$} if $u \in S$ or $v \in S$ or every $uv$-path in $G$ passes through at least one node in $S$.
Conversely, $u$ and $v$ are \emph{not} separated by $S$ if there exists a component in the subgraph of $G$ induced by $V \setminus S$ that contains both $u$ and $v$.
In this article, we call every node subset $S \subseteq V$ a \emph{multi-separator} or, abbreviating, just a \emph{separator} of $G$.
Examples are depicted in \Cref{fig:img-seg-example-b,fig:img-seg-example-e} where the nodes of different colors are separated by the set of nodes depicted in black.
Given an arbitrary set $F \subseteq \tbinom{V}{2}$ of node pairs, we let $F(S) \subseteq F$ denote the subset of those pairs that are separated by $S$.

\begin{definition}\label{def:multi-separator}
    Let $G=(V, E)$ be a connected graph, let $F \subseteq \binom{V}{2}$ be a set of node pairs called \emph{interactions}, and let $c: V \cup F \to \R$ be called a \emph{cost vector}.
    The \emph{min-cost multi-separator} problem with respect to the graph $G$, the interactions $F$, and the cost vector $c$ consists in finding a node subset $S \subseteq V$ called a \emph{separator} so as to minimize the sum of the costs of the nodes in $S$ plus the sum of the costs of those interactions $F(S) \subseteq F$ that are separated by $S$, i.e.
    \begin{align}\label{eq:msp}
        \min_{S \subseteq V} \quad \sum_{v \in S} c_v + \sum_{f \in F(S)} c_f \enspace . \tag{MSP}
    \end{align}
    We say an interaction $f \in F$ is \emph{repulsive} if its associated costs $c_f$ is negative, for in this case, the objective of \eqref{eq:msp} is decreased whenever $f$ is separated.
    Analogously, we say $f$ is \emph{attractive} if the associated cost if positive.
    Likewise, we call a node $v \in V$ repulsive (attractive) whenever its associated cost $c_v$ is negative (positive).

    For any separator $S$, we define the characteristic vector $x^S: V \cup F \to \{0, 1\}$ 
    such that $x^S_v = 1$ for $v \in S$, such that $x^S_v = 0$ for $v \in V \setminus S$, such that $x^S_f = 1$ for $f \in F (S)$, and such that $x^S_f = 0$ for $f \in F \setminus F(S)$.
    For the set of all such characteristic vectors, we write $\ms(G,F) := \{x^S \mid S \subseteq V\}$.
    With this, \eqref{eq:msp} is written equivalently as
    \begin{align}
        \min_{x \in \ms(G, F)} \quad \sum_{g \in V \cup F} x_g \, c_g \enspace . \tag{MSP'}
    \end{align}
\end{definition}

\subsection{Connection to the lifted multicut problem}\label{sec:connection-to-lmc} 
The multi-separator problem \eqref{eq:msp} is closely related to the \emph{lifted multicut problem} \citep[Definition 5]{andres2023polyhedral}.
In this section, we show that these problems are equally hard in the sense that solving either can be reduced in linear time to solving the other.
In subsequent sections, we establish differences between these problems.

The \emph{multicut problem} asks for a partition of the node set of a graph into component-inducing subsets such that the sum of the costs of the edges that straddle distinct components is minimized. 
The \emph{lifted multicut problem} generalizes the multicut problem by assigning costs not only to edges of the graph but also to pairs of nodes that are not necessarily connected by an edge.
Formally, the lifted multicut problem is defined as follows.

\begin{definition}\label{def:lmc}
    Let $G=(V,E)$ be a connected graph and let $\widehat{G} = (V, E \cup F)$ be an augmentation of $G$ where $F \subseteq \binom{V}{2} \setminus E$ is called a set of \emph{(additional) long-range edges}.
    A subset $M \subseteq E \cup F$ is called a \emph{multicut of $\widehat{G}$ lifted from $G$} if there exists a partition $\{\pi_1,\dots,\pi_n\}$ of $V$ such that each $\pi_i$ induces a component in $G$, and $M = \left\{ \{u,v\} \in E \cup F \mid \forall i \in \{1,\dots,n\} \colon \{u, v\} \not\subseteq \pi_i \right\}$.
    Let $\lmc(G,\widehat{G})$ denote the set of all multicuts of $\widehat{G}$ lifted from $G$.
    
    The \emph{lifted multicut problem} with respect to the graph $G$, the augmented graph $\widehat{G}$ and costs $c: E \cup F \to \R$ consists in finding a multicut of $\widehat{G}$ lifted from $G$ with minimal edge costs, i.e.
    \begin{align}\label{eq:lmp}
        \min_{M \in \lmc(G,\widehat{G})} \sum_{e \in M} c_e \enspace. \tag{LMP}
    \end{align}
\end{definition}

The following two theorems state that the problems \eqref{eq:msp} and \eqref{eq:lmp} can be reduced in linear time to one another.
Examples of these reductions are depicted in \Cref{fig:msp-lmc-reduction}.

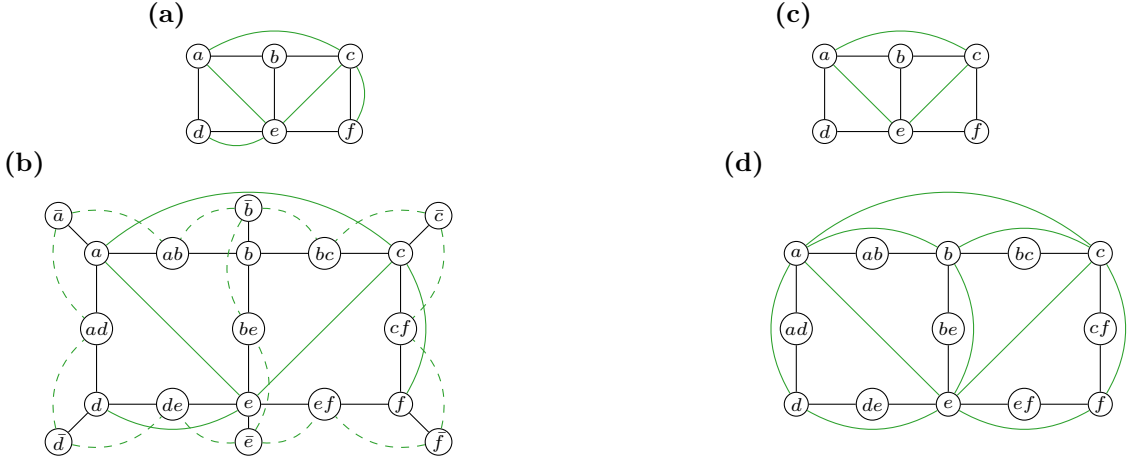
\begin{figure}
    \centering
    \textbf{(a)}\imagetop{\begin{tikzpicture}
    \node[vertex, minimum width=9pt] (a) at (0, 1) {\scriptsize $a$};
    \node[vertex, minimum width=9pt] (b) at (1, 1) {\scriptsize $b$};
    \node[vertex, minimum width=9pt] (c) at (2, 1) {\scriptsize $c$};
    \node[vertex, minimum width=9pt] (d) at (0, 0) {\scriptsize $d$};
    \node[vertex, minimum width=9pt] (e) at (1, 0) {\scriptsize $e$};
    \node[vertex, minimum width=9pt] (f) at (2, 0) {\scriptsize $f$};

    \draw (a) -- (b) -- (c) -- (f) -- (e) -- (d) -- (a);
    \draw (b) -- (e);

    \draw[mygreen] (a) to[bend left] (c);
    \draw[mygreen] (a) -- (e);
    \draw[mygreen] (c) -- (e);
    \draw[mygreen] (c) to[bend left] (f);
    \draw[mygreen] (d) to[bend right] (e);
\end{tikzpicture}}
    \hspace{5cm}
    \textbf{(c)}\imagetop{\begin{tikzpicture}
    \node[vertex, minimum width=9pt] (a) at (0, 1) {\scriptsize $a$};
    \node[vertex, minimum width=9pt] (b) at (1, 1) {\scriptsize $b$};
    \node[vertex, minimum width=9pt] (c) at (2, 1) {\scriptsize $c$};
    \node[vertex, minimum width=9pt] (d) at (0, 0) {\scriptsize $d$};
    \node[vertex, minimum width=9pt] (e) at (1, 0) {\scriptsize $e$};
    \node[vertex, minimum width=9pt] (f) at (2, 0) {\scriptsize $f$};

    \draw (a) -- (b) -- (c) -- (f) -- (e) -- (d) -- (a);
    \draw (b) -- (e);

    \draw[mygreen] (a) to[bend left] (c);
    \draw[mygreen] (a) -- (e);
    \draw[mygreen] (c) -- (e);
\end{tikzpicture}}
    \\
    \textbf{(b)}\imagetop{\begin{tikzpicture}
    \node[vertex, minimum width=9pt] (a) at (0, 2) {\scriptsize $a$};
    \node[vertex, minimum width=9pt] (b) at (2, 2) {\scriptsize $b$};
    \node[vertex, minimum width=9pt] (c) at (4, 2) {\scriptsize $c$};
    \node[vertex, minimum width=9pt] (d) at (0, 0) {\scriptsize $d$};
    \node[vertex, minimum width=9pt] (e) at (2, 0) {\scriptsize $e$};
    \node[vertex, minimum width=9pt] (f) at (4, 0) {\scriptsize $f$};

    \node[vertex, minimum width=10pt] (aa) at (-0.5, 2.5) {\scriptsize $\bar a$};
    \node[vertex, minimum width=10pt] (bb) at (2, 2.6) {\scriptsize $\bar b$};
    \node[vertex, minimum width=10pt] (cc) at (4.5, 2.5) {\scriptsize $\bar c$};
    \node[vertex, minimum width=10pt] (dd) at (-0.5, -0.5) {\scriptsize $\bar d$};
    \node[vertex, minimum width=10pt] (ee) at (2, -0.5) {\scriptsize $\bar e$};
    \node[vertex, minimum width=10pt] (ff) at (4.5, -0.5) {\scriptsize $\bar f$};

    \node[vertex, minimum width=12pt] (ab) at (1, 2) {\scriptsize $ab$};
    \node[vertex, minimum width=12pt] (ad) at (0, 1) {\scriptsize $ad$};
    \node[vertex, minimum width=12pt] (bc) at (3, 2) {\scriptsize $bc$};
    \node[vertex, minimum width=12pt] (be) at (2, 1) {\scriptsize $be$};
    \node[vertex, minimum width=12pt] (cf) at (4, 1) {\scriptsize $cf$};
    \node[vertex, minimum width=12pt] (de) at (1, 0) {\scriptsize $de$};
    \node[vertex, minimum width=12pt] (ef) at (3, 0) {\scriptsize $ef$};

    \draw (a) -- (aa);
    \draw (b) -- (bb);
    \draw (c) -- (cc);
    \draw (d) -- (dd);
    \draw (e) -- (ee);
    \draw (f) -- (ff);

    \draw (a) -- (ab) -- (b) -- (bc) -- (c) -- (cf) -- (f) -- (ef) -- (e) -- (de) -- (d) -- (ad) -- (a);
    \draw (b) -- (be) -- (e);

    \draw[mygreen] (a) to[bend left=40] (c);
    \draw[mygreen] (a) -- (e);
    \draw[mygreen] (c) -- (e);
    \draw[mygreen] (c) to[bend left] (f);
    \draw[mygreen] (d) to[bend right] (e);

    \draw[mygreen, dashed] (aa) to[bend left] (ab);
    \draw[mygreen, dashed] (aa) to[bend right] (ad);

    \draw[mygreen, dashed] (bb) to[bend right] (ab);
    \draw[mygreen, dashed] (bb) to[bend left] (bc);
    \draw[mygreen, dashed] (bb) to[bend right] (be);

    \draw[mygreen, dashed] (cc) to[bend right] (bc);
    \draw[mygreen, dashed] (cc) to[bend left] (cf);

    \draw[mygreen, dashed] (dd) to[bend left] (ad);
    \draw[mygreen, dashed] (dd) to[bend right] (de);

    \draw[mygreen, dashed] (ee) to[bend right] (be);
    \draw[mygreen, dashed] (ee) to[bend left] (de);
    \draw[mygreen, dashed] (ee) to[bend right] (ef);

    \draw[mygreen, dashed] (ff) to[bend right] (cf);
    \draw[mygreen, dashed] (ff) to[bend left] (ef);

\end{tikzpicture}}
    \hfill
    \textbf{(d)}\imagetop{\begin{tikzpicture}
    \node[vertex, minimum width=9pt] (a) at (0, 2) {\scriptsize $a$};
    \node[vertex, minimum width=9pt] (b) at (2, 2) {\scriptsize $b$};
    \node[vertex, minimum width=9pt] (c) at (4, 2) {\scriptsize $c$};
    \node[vertex, minimum width=9pt] (d) at (0, 0) {\scriptsize $d$};
    \node[vertex, minimum width=9pt] (e) at (2, 0) {\scriptsize $e$};
    \node[vertex, minimum width=9pt] (f) at (4, 0) {\scriptsize $f$};

    \node[vertex, minimum width=12pt] (ab) at (1, 2) {\scriptsize $ab$};
    \node[vertex, minimum width=12pt] (ad) at (0, 1) {\scriptsize $ad$};
    \node[vertex, minimum width=12pt] (bc) at (3, 2) {\scriptsize $bc$};
    \node[vertex, minimum width=12pt] (be) at (2, 1) {\scriptsize $be$};
    \node[vertex, minimum width=12pt] (cf) at (4, 1) {\scriptsize $cf$};
    \node[vertex, minimum width=12pt] (de) at (1, 0) {\scriptsize $de$};
    \node[vertex, minimum width=12pt] (ef) at (3, 0) {\scriptsize $ef$};

    \draw (a) -- (ab) -- (b) -- (bc) -- (c) -- (cf) -- (f) -- (ef) -- (e) -- (de) -- (d) -- (ad) -- (a);
    \draw (b) -- (be) -- (e);

    \draw[mygreen] (a) to[bend left=40] (c);
    \draw[mygreen] (a) -- (e);
    \draw[mygreen] (c) -- (e);
    \draw[mygreen] (a) to[bend left] (b);
    \draw[mygreen] (b) to[bend left] (c);
    \draw[mygreen] (c) to[bend left] (f);
    \draw[mygreen] (f) to[bend left] (e);
    \draw[mygreen] (e) to[bend left] (d);
    \draw[mygreen] (d) to[bend left] (a);
    \draw[mygreen] (b) to[bend left] (e);
\end{tikzpicture}}
    \caption{Depicted in \textbf{(a)} are a graph $G$ (in black) and a set of interactions $F$ (in green). 
    Depicted in \textbf{(b)} are the corresponding auxiliary graph $\bar{G}$ (in black) and the set of long-range edges $\bar{F}$ (in green) that occur in the reduction of \eqref{eq:msp} to \eqref{eq:lmp} in \Cref{thm:msp-lmp-reduction}. 
    The dashed edges have a high negative cost.
    This ensures that they are cut in any optimal solution.
    Depicted in \textbf{(c)} are a graph $G$ (black edges) and an augmentation $\widehat{G}$ of $G$ that contains the additional green edges.
    Depicted in \textbf{(d)} are the corresponding auxiliary graph $\bar{G}$ and the set of interactions $\bar{F}$ that occur in the reduction of \eqref{eq:lmp} to \eqref{eq:msp} in \Cref{thm:lmp-msp-reduction}.
    }
    \label{fig:msp-lmc-reduction}
\end{figure}

\begin{theorem}\label{thm:msp-lmp-reduction}
    The multi-separator problem \eqref{eq:msp} can be reduced to the lifted multicut problem \eqref{eq:lmp} in linear time.
\end{theorem}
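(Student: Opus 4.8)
The plan is to realize the node-separator structure of \eqref{eq:msp} inside a lifted multicut instance whose base graph encodes the edges of $G$ as additional nodes, following the construction sketched in \Cref{fig:msp-lmc-reduction}\,(a,b). Concretely, I would build an auxiliary graph $\overline{G} = (\overline{V}, \overline{E})$ with one node $v$ for every $v \in V$, one auxiliary node $\overline{v}$ for every $v \in V$, and one node $v_e$ for every edge $e \in E$; the edges $\overline{E}$ connect $v$ to $\overline{v}$ and connect $v_e$ to both endpoints of $e$. The long-range edges are $\overline{F} = F \cup \{\{\overline{v}, v_e\} : v \in e\}$, where each pair $\{u,w\} \in F$ is retained as a long-range edge carrying its original cost. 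All of these sets have size $O(|V| + |E| + |F|)$, so the instance is constructed in linear time; what remains is to choose costs and verify the correspondence.

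The intended dictionary is: $v \in S$ corresponds to $\overline{v}$ being merged with $v$ into an isolated component $\{v, \overline{v}\}$, while $v \notin S$ corresponds to $\overline{v}$ forming a singleton so that $v$ is free to join its incident edge-nodes. Under this dictionary a long-range edge $\{u,w\} \in F$ is cut exactly when $u,w$ lie in distinct components of $G[V \setminus S]$, i.e.\ exactly when $\{u,w\} \in F(S)$. I would assign each base edge $\{v, v_e\}$ a large positive cost $\beta$, each dashed edge $\{\overline{v}, v_e\}$ a large negative cost $-\gamma$, each edge $\{v, \overline{v}\}$ the cost $\deg(v)\,\beta - c_v$, and preserve $c_f$ on every $f \in F$. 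A direct computation then shows that the canonical decomposition $\Pi_S$ realizing the dictionary has lifted multicut objective equal to $\sum_{v \in S} c_v + \sum_{f \in F(S)} c_f$ plus a constant collecting the $2|E|$ always-cut dashed edges and the term $\sum_v \deg(v)\beta - \sum_v c_v$; crucially, the cost difference between making $v$ \emph{separated} versus \emph{not separated} works out to exactly $c_v$, independently of $\beta$. This yields the inequality $\mathrm{opt}\eqref{eq:lmp} \le \mathrm{opt}\eqref{eq:msp} + \mathrm{const}$.

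The substantive direction is the converse, and it is where I expect the real work. Given any lifted multicut, I would bring it into canonical form $\Pi_S$ without increasing its cost by two exchange arguments. First, choosing $\gamma$ larger than $\max_v(\deg(v)\beta + |c_v|)$ forces every optimal solution to cut all dashed edges: if some $\{\overline{v}, v_e\}$ were uncut, moving $\overline{v}$ into its own component strictly decreases the objective, since the leaf $\overline{v}$ touches no other base edges. Once all dashed edges are cut, each $\overline{v}$ is either a singleton or forms the isolated pair $\{v, \overline{v}\}$, which pins down a separator $S := \{v : \{v,\overline{v}\} \text{ is uncut}\}$. Second, choosing $\beta$ larger than $\sum_{f \in F}|c_f|$ rules out spurious refinement: if a connected component of $\overline{G}$ avoiding the isolated nodes were split, merging it back saves at least $\beta$ on base edges while changing the long-range contribution by at most $\sum_f |c_f|$, a net decrease. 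After both reductions the solution equals $\Pi_S$, so its objective is $\sum_{v\in S}c_v + \sum_{f \in F(S)} c_f$ plus the same constant, giving $\mathrm{opt}\eqref{eq:msp} + \mathrm{const} \le \mathrm{opt}\eqref{eq:lmp}$ and recovering an optimal separator from an optimal multicut $M$ via $S = \{v : \{v,\overline v\}\notin M\}$.

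The main obstacle is thus the converse direction: proving that the two large constants $\gamma \gg \beta \gg \max_f |c_f|$ genuinely force the canonical structure, and in particular that preventing the over-cutting of repulsive interactions does not conflict with the need to cut all $\deg(v)$ base edges around a separated node. The delicate point is that the bookkeeping must make the separated/not-separated exchange contribute precisely $c_v$ while $\beta$ enters only as an additive constant; verifying this cancellation, together with the observation that $F(S) \subseteq \{f : f \text{ is cut in } \Pi\}$ holds for every canonical $\Pi$ (so that only over-cutting, and never under-cutting, must be excluded), is the part I would treat most carefully.
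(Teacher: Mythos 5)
Your proposal is correct and follows essentially the same route as the paper's proof: the identical gadget (a copy $\bar v$ for each node, an edge-node for each edge of $G$, and strongly negative long-range edges between $\bar v$ and the incident edge-nodes, which force each $\bar v$ to be a singleton or to form the isolated pair $\{v,\bar v\}$), the same dictionary recovering the separator as $S = \{v \mid \{v,\bar v\} \notin M\}$, and the same large-constant forcing idea. The only differences are cosmetic: you place the node cost on $\{v,\bar v\}$ as $\deg(v)\,\beta - c_v$ while the paper distributes it as $C + c_v/\deg_G(v)$ over the edges $\{v,vw\}$, and you phrase the converse via exchange arguments showing every optimal lifted multicut is canonical, whereas the paper exhibits a bijection onto all lifted multicuts of cost below an explicit threshold $D$; both verifications are sound.
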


\begin{proof}
    To begin with, observe that a solution to the multi-separator problem can be represented as a solution to the lifted multicut problem by interpreting each node in the separator as a singleton component consisting of just this node.
    In the following, we construct an instance of \eqref{eq:lmp} such that all relevant feasible solutions have this characteristic.

    Let $G=(V,E)$ be a connected graph, let $F \subseteq \binom{V}{2}$ be a set of interactions, and let $c: V \cup F \to \R$ be a cost function that define an instance of \eqref{eq:msp}.
    For $|V| = 1$, the multi-separator problem is trivial, so, from now on, we may assume $|V| \geq 2$.
    We construct an auxiliary graph $\bar{G}=(\bar{V}, \bar{E})$, an augmentation $\widehat{G} = (\bar{V},\bar{E} \cup \bar{F})$ of $\bar{G}$, and costs $\bar{c}: \bar{E} \cup \bar{F} \to \R$ such that the optimal solutions of the instance of \eqref{eq:msp} with respect to $G$, $F$, and $c$ correspond one-to-one to the optimal solutions of the instance of \eqref{eq:lmp} with respect to $\bar{G}$, $\widehat{G}$, and $\bar{c}$.
    For each node $v \in V$, we consider the node $v$ itself and a copy that we label $\bar{v}$. 
    For each edge $\{v,w\} \in E$, we introduce an additional node that we label $vw$. 
    For each $v \in V$, let $N_G(v) = \left\{w \in V \mid \{v,w\} \in E\right\}$ denote the set of neighbors of $v$ in $G$, and let $\deg_G(v) = |N_G(v)|$ denote the degree of $v$ in $G$.
    Altogether, we define nodes and edges as written below and as depicted in \Cref{fig:msp-lmc-reduction} (a) and (b).
    \begin{align*}
        \bar{V} &= 
            V \cup \{\bar{v} \mid v \in V\} \cup 
            \left\{
                vw \mid \{v,w\} \in E
            \right\} \\
        \bar{E} &= 
            \left\{
                \{v,\bar{v}\} \mid v \in V 
            \right\} \cup \left\{
                \{v,vw\} \mid v \in V, w \in N_G(v) 
            \right\} \\
        \bar{F} &= 
            F \cup
            \left\{
                \{\bar{v},vw\} \mid v \in V, w \in N_G(v)
            \right\}
    \end{align*}
    With regard to the costs $\bar{c}$, we define $C = 1 + \sum_{g \in V \cup F} |c_g|$ and
    \begin{align*}
        \bar{c}_{\{v, \bar{v}\}} &= C \cdot \deg_G(v)
            && \forall v \in V \\
        \bar{c}_{\{v,vw\}} &= C + \frac{c_v}{\deg_G(v)}
            && \forall v \in V\ \forall w \in N_G(v) \\
        \bar{c}_{\{v, w\}} &= c_{\{v, w\}} 
            && \forall \{v, w\} \in F \\
        \bar{c}_{\{\bar{v},vw\}} &= - C \cdot |V|
            && \forall v \in V \ \forall w \in N_G(v) \enspace.
    \end{align*}

    We show that there is a bijection $\phi$ from the feasible solutions of the instance of \eqref{eq:msp} with respect to $G$, $F$, and $c$ onto those feasible solutions of the instance of \eqref{eq:lmp} with respect to $\bar{G}$, $\widehat{G}$, and $\bar{c}$ that have cost less than $D = -2C|E|(|V|-1) + C$.
    We remark that $-2C|E|(|V|-1)$ is the cost of the lifted multicut that corresponds to the decomposition of $\bar{G}$ where the nodes $\bar{v}$ for $v \in V$ are in singleton components and all other nodes form one large component (the degree sum formula yields $\sum_{v \in V} \deg_G(v) = 2|E|$).

    Firstly, let $S \subseteq V$ be a feasible solution of \eqref{eq:msp} and let $C_S = \sum_{v \in S} c_v + \sum_{f \in F(S)} c_f$ be its cost.
    Then, by construction of $\bar{G}$ and $\widehat{G}$, the set
    \begin{align*}
        M = \phi(S) = \quad & 
        \left\{
            \{v,\bar{v}\} \mid v \in V \setminus S
        \right\} \\ \cup &\left\{
            \{v,vw\} \mid v \in S, w \in N_G(v)
        \right\} \\ 
        \cup & \; F(S) \\
        \cup & \left\{
            \{\bar{v},vw\} \mid v \in V, w \in N_G(v)
        \right\} 
    \end{align*}
    is a feasible multicut of $\widehat{G}$ lifted from $\bar{G}$.
    By definition of the costs $\bar{c}$, the cost of $M$ is $C_M = -2C|E|(|V|-1) + C_S$. 
    By definition of $C$ follows $C_S < C$, and thus, $C_M < D$.

    Secondly, let $M \subseteq \bar{E} \cup \bar{F}$ be any multicut of $\widehat{G}$ lifted from $\bar{G}$ with cost $C_M < D$.
    Since $C_M < D$ it must hold that $\{\bar{v},vw\} \in M$ for all $v \in V$ and $w \in N_G(v)$.
    In the following, let $v \in V$ arbitrary but fixed.
    We show that either $\{v,\bar{v}\} \in M$ and $\{v,w\} \notin M$ for all $w \in N_G(v)$, or $\{v,\bar{v}\} \notin M$ and $\{v,w\} \in M$ for all $w \in N_G(v)$. 
    (The two cases relate to the decisions of $v$ being part or not of the separator in the solution to the corresponding multi-separator problem).
    If $\{v,\bar{v}\} \notin M$, then $\{v,vw\} \in M$ for all $w \in N_G(v)$, by the definition of lifted multicuts and $\{\bar{v},vw\} \in M$.
    Otherwise, i.e.~if $\{v,\bar{v}\} \in M$, then $\{v,vw\} \notin M$ for all $w \in N_G(v)$ since $C_M < D$.
    Together, we have that either $\{v,\bar{v}\}$ is cut and all other outgoing edges of $v$ are not cut, or that $\{v,\bar{v}\}$ is not cut an all other outgoing edges of $v$ are cut.
    Therefore, the separator $S = \left\{v \in V \mid \{v,\bar{v}\} \notin M \right\}$ is such that $M = \phi(S)$.

    Together, we have shown that $\phi$ is a bijection from the set of feasible solutions of the instance of \eqref{eq:msp} to the set of feasible solutions of the instance of \eqref{eq:lmp} with cost less than $D$. 
    Moreover, the costs of feasible \eqref{eq:msp} solutions differ by the additive constant $-2C|E|(|V|-1)$ from the costs of their images under $\phi$.
    This concludes the proof that \eqref{eq:msp} can be reduced to \eqref{eq:lmp}.
    The time complexity of the reduction is linear in the size of the instance of \eqref{eq:msp}, by construction.
\end{proof}

\begin{theorem}\label{thm:lmp-msp-reduction}
    The lifted multicut problem \eqref{eq:lmp} can be reduced to the lifted multi-separator problem \eqref{eq:msp} in linear time.
\end{theorem}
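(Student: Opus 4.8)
The plan is to mirror the reduction of \Cref{thm:msp-lmp-reduction} in the opposite direction. I will subdivide every edge of $G$ by a fresh node, admit only these subdivision nodes into the separator (so that selecting the node $vw$ plays the role of cutting the edge $\{v,w\}$), and turn both the edges of $G$ and the long-range edges $F$ into interactions whose cost equals the original cost. Concretely, given the \eqref{eq:lmp} instance $(G=(V,E),\widehat G=(V,E\cup F),c)$, and writing $C := 1 + \sum_{e\in E\cup F}|c_e|$, I set
\begin{align*}
    \bar V &= V \cup \{\, vw \mid \{v,w\}\in E \,\}, &
    \bar c_v &= C \quad \forall v \in V, \\
    \bar E &= \{\, \{v,vw\},\{w,vw\} \mid \{v,w\}\in E \,\}, &
    \bar c_{vw} &= 0 \quad \forall \{v,w\} \in E, \\
    \bar F &= E \cup F, &
    \bar c_{\{v,w\}} &= c_{\{v,w\}} \quad \forall \{v,w\} \in \bar F.
\end{align*}
Since subdividing edges preserves connectivity, $\bar G = (\bar V, \bar E)$ is connected, as \eqref{eq:msp} requires; this is the construction shown in \Cref{fig:msp-lmc-reduction}(c,d), and it is computable in linear time.

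The key structural step is a connectivity statement for separators $S$ consisting only of subdivision nodes. For such $S$, set $E_S = \{\, \{v,w\}\in E \mid vw\in S \,\}$. As each subdivision node $vw$ has degree two and lies only on the path from $v$ to $w$ via $vw$, a path between two original nodes survives in $\bar G \setminus S$ if and only if the corresponding path in $G$ avoids $E_S$. Hence the components of $\bar G \setminus S$, restricted to $V$, coincide with the components of $(V, E \setminus E_S)$; denote this partition $P(S)$. Each part is connected in $G$, so $P(S)$ is a feasible \eqref{eq:lmp} partition, and an interaction $\{v,w\}\in\bar F$ is separated by $S$ exactly when $v,w$ lie in distinct parts of $P(S)$, i.e.\ exactly when $\{v,w\}$ belongs to the lifted multicut $M$ induced by $P(S)$. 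Because subdivision nodes are free, the \eqref{eq:msp} objective of $S$ therefore equals $\sum_{e\in M} c_e$, the \eqref{eq:lmp} objective of $P(S)$.

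To locate the optimum I rule out original separator nodes. If $v\in V$ lies in a separator $S$, deleting it can only shrink $F(S)$, so the interaction cost changes by at most $C-1 = \sum_{e\in E\cup F}|c_e|$ in absolute value, while the node cost drops by exactly $C$; thus the objective strictly decreases. Every optimal separator is therefore subdivision-only and, by the previous paragraph, has objective equal to the \eqref{eq:lmp} value of $P(S)$, which is at least the optimum of \eqref{eq:lmp}. Conversely, any feasible partition $P$ is realized by $S_P = \{\, vw \mid \{v,w\}\in E \text{ straddles } P \,\}$, for which $P(S_P)=P$ and whose objective equals the \eqref{eq:lmp} value of $P$. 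Hence the two optima coincide, and from any optimal separator $S$ one recovers an optimal lifted multicut in linear time by computing the components $P(S)$ and the induced cut $M$.

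The main obstacle is the connectivity statement together with the fact that the separator-to-partition map is many-to-one: a subdivision node whose endpoints remain joined by another path does not separate those endpoints and, being free, alters neither $P(S)$ nor the objective. I therefore do not aim for a strict bijection as in \Cref{thm:msp-lmp-reduction}; equality of optima plus linear-time recoverability suffices for a linear-time reduction. Some care is also required to verify that the long-range pairs in $F$---which are not subdivided and enter only as interactions---are separated in $\bar G$ precisely when their endpoints fall in different parts of $P(S)$, so that each contributes the matching term of the \eqref{eq:lmp} objective.
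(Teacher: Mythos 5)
Your proof is correct, and while it reuses the paper's construction of the auxiliary graph---the same subdivision nodes, the same $\bar E$, and the same $\bar F = E \cup F$ shown in \Cref{fig:msp-lmc-reduction}(c,d)---your cost design and correctness argument are genuinely different from the paper's. The paper prices subdivision nodes at $C$, shifts every edge-interaction cost to $c_e - C$ (so that cutting an edge contributes $C + (c_e - C) = c_e$), prices original nodes at $C\cdot|E|$, and then claims a cost-preserving \emph{bijection} $\phi$ between lifted multicuts and the MSP solutions of cost less than $C$. You instead make subdivision nodes free, keep all interaction costs unchanged, price original nodes at $C$, accept that the separator-to-partition map is many-to-one, and replace the bijection by an exchange argument: since $\bar F(S\setminus\{v\}) \subseteq \bar F(S)$, dropping an original node $v$ from any separator lowers the objective by at least $C - (C-1) = 1$, so optimal separators are subdivision-only, the optima of the two problems coincide, and an optimal lifted multicut is recovered in linear time from the components of $\bar G \setminus S$. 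What the paper's version buys is the exact cost-preserving bijection that \Cref{cor:msp-apx-hard} cites verbatim for approximation preservation; with your reduction that corollary still goes through, but only after the extra (easy) step of first mapping an arbitrary separator to a no-more-expensive subdivision-only one, which your exchange argument supplies. What your version buys is robustness: the paper's threshold claim---that cost less than $C$ already forces $v \notin S$ for all $v \in V$---is delicate, because an original node of cost $C\cdot|E|$ can separate many edge-interactions each carrying cost $c_e - C$, and the resulting cancellation can push the total below $C$ (consider the center of a star: node cost $C\cdot|E|$ plus $|E|$ separated interactions of cost $c_e - C$ nets to $\sum_e c_e$), so such solutions are not excluded by the threshold and lie outside the image of $\phi$. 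Your local-improvement argument never relies on a cost threshold and disposes of such separators by showing they are not optimal, which is exactly the right fix.
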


\begin{proof}
    To begin with, observe that any solution to the lifted multicut problem can be represented as a solution to the multi-separator problem with respect to an auxiliary graph in which each edge of the original graph is replaced by two edges incident to an additional auxiliary node representative of the edge, namely as the separator consisting of precisely those auxiliary nodes whose corresponding edge is part of the lifted multicut.
    For an illustration, see \Cref{fig:msp-lmc-reduction} (c) and (d).
	
    More formally, let $G=(V, E)$ be a connected graph, let $\widehat{G}=(V, E \cup F)$ with $F \subseteq \binom{V}{2} \setminus E$ be an augmentation of $G$, and let $c: E \cup F \to \R$ be a cost function that define an instance of \eqref{eq:lmp}.
    Below, we construct an auxiliary graph $\bar{G} = (\bar{V}, \bar{E})$, a set of interactions $\bar{F} \subseteq \binom{\bar{V}}{2}$, and a cost function $\bar{c}: \bar{V} \cup \bar{F}$ such that the optimal solutions of the instance of \eqref{eq:lmp} with respect to $G$, $\widehat{G}$, and $c$ correspond one-to-one to the optimal solutions of the instance of \eqref{eq:msp} with respect to $\bar{G}$, $\bar{F}$, and $\bar{c}$.
    Specifically, we define
    \begin{align*}
        \bar{V} &= V \cup \left\{vw \mid \{v,w\} \in E\right\} \\
        \bar{E} &= \left\{ \{v, vw\} \mid v \in V, \{v,w\} \in E \right\} \\
        \bar{F} &= E \cup F \enspace .
    \end{align*}
    With regard to the costs $\bar c$, we define $C = 1 + \sum_{e \in E \cup F} |c_e|$ and
    \begin{align*}
        \bar{c}_v &= C \cdot |E|
        && \forall v \in V \\
        \bar{c}_{vw} &= C
        && \forall \{v,w\} \in E \\
        \bar{c}_{\{v,vw\}} &= 0 
        && \forall v \in V \ \forall \{v,w\} \in E \\
        \bar{c}_e &= c_e - C
        && \forall e \in E \\
        \bar{c}_f &= c_f
        && \forall f \in F
        \enspace .
    \end{align*}
    
    We establish the existence of a bijection $\phi$ from the feasible solutions of the instance of \eqref{eq:lmp} with respect to $G$, $\widehat{G}$, and $c$ onto those feasible solutions of the instance of \eqref{eq:msp} with respect to $\bar{G}$, $\bar{F}$, and $\bar{c}$ that have cost less than $C$.

    Firstly, let $M \subseteq E \cup F$ be a multicut of $\widehat{G}$ lifted from $G$, and let $C_M = \sum_{e \in M} c_e$ be its cost.
    Then, the separator $S = \phi(M) = \left\{vw \mid \{v,w\} \in E \cap M \right\} \subseteq \bar{V}$ is a feasible solution of \eqref{eq:msp} with 
    \[
        \bar{F}(S) = M \cup \left\{\{v,vw\} \mid v \in V \wedge \{v,w\} \in E \cap M \right\} \enspace .
    \]
    By the definition of $\bar{c}$, it has cost
    \begin{align}\label{eq:c_s-eq-c_m}
        C_S = \sum_{v \in S} \bar{c}_v + \sum_{f \in \bar{F}(S)} \bar{c}_f = \sum_{e \in M} \bar{c}_{e} = C_M \enspace.
    \end{align}
    Moreover, by definition of $C$, we have $C_S < C$.

    Secondly, let $S \subseteq \bar{V}$ be a feasible solution to \eqref{eq:msp} with cost $C_S = \sum_{v \in S} \bar{c}_v + \sum_{f \in \bar{F}(S)} \bar{c}_f < C$.
    By the assumption that $C_S < C$ and the definition of $\bar{c}$, it follows that $v \notin S$ for all $v \in V$.
    Every $\{v,w\} \in E$ with $\{v,w\} \in \bar{F}(S)$ implies $vw \in S$, as otherwise, $v$ and $w$ would be connected by the path along the nodes $v$, $vw$, $w$, and thus, $\{v,w\} \notin \bar{F}(S)$, by definition of $\bar{F}(S)$.
    Conversely, by the assumption $C_S < C$, we have $\{v,w\} \in \bar{F}$ whenever $vw \in S$.
    Moreover, by the definitions of the set $\bar{F}(S)$ and lifted multicuts, $M = \bar{F}(S) \cap (E \cup F)$ is a multicut of $\widehat{G}$ lifted from $G$.
    By construction: $\phi(M) = S$.

    Together, we have shown that $\phi$ is a bijection from the set of feasible solution of the instance of \eqref{eq:lmp} to the set of feasible solutions of the instance of \eqref{eq:msp} with cost less than $C$. 
    Furthermore, by \eqref{eq:c_s-eq-c_m}, the costs of feasible solutions related by $\phi$ are equal.
    Thus, finding an optimal solution to any of the two problems yields an optimal solution to the other.
    Moreover, the time complexity of the reduction is linear in the size of the instance of \eqref{eq:lmp}, by construction.
\end{proof}

In spite of their mutual linear reducibility, there are differences between the lifted multicut problem, on the one hand, and the multi-separator problem, on the other hand:
For $F = \emptyset$, the lifted multicut problem specializes to the multicut problem that is \textsc{np}-hard, while the multi-separator problem specializes to the linear unconstrained binary optimization problem that can be solved in linear time.
For $F \neq \emptyset$, we will extend this observation to specific cost functions in \Cref{thm:ms-abs-dom-efficient} and \Cref{thm:lmc-abs-dom-hard} below.
Also for these specific cost functions, the lifted multicut problem is \textsc{np}-hard, while the multi-separator problem can be solved efficiently.

\subsection{Hardness}

The reduction of the lifted multicut problem to the multi-separator problem in \Cref{thm:lmp-msp-reduction} implies several hardness results that we summarize below.

\begin{corollary}\label{cor:msp-apx-hard}
    The multi-separator problem \eqref{eq:msp} is \textsc{apx}-hard.
\end{corollary}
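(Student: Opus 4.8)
The plan is to transfer the APX-hardness of the lifted multicut problem to the multi-separator problem through the reduction of \Cref{thm:lmp-msp-reduction}. Since APX-hardness is preserved under approximation-preserving reductions, and \Cref{thm:lmp-msp-reduction} reduces \eqref{eq:lmp} to \eqref{eq:msp} (the direction we need), it suffices to establish two facts: first, that \eqref{eq:lmp} is itself APX-hard; second, that the reduction constructed in the proof of \Cref{thm:lmp-msp-reduction} preserves approximation guarantees.

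For the first fact, I would recall that the multicut problem, i.e.~the specialization of \eqref{eq:lmp} to $F = \emptyset$, is equivalent to correlation clustering under the objective of minimizing disagreements, which is known to be APX-hard \citep{charikar2005clustering,demaine2006correlation}. Since the multicut problem is a special case of the lifted multicut problem, \eqref{eq:lmp} is APX-hard as well.

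For the second fact, I would argue that the reduction in \Cref{thm:lmp-msp-reduction} is in fact an L-reduction with both constants equal to one. The decisive property is that the bijection $\phi$ preserves costs \emph{exactly}, by \eqref{eq:c_s-eq-c_m}, and maps the feasible solutions of a given \eqref{eq:lmp} instance onto precisely those feasible solutions of the constructed \eqref{eq:msp} instance whose cost is below the threshold $C$. Because the empty separator $S = \emptyset$ witnesses that the optimal \eqref{eq:msp} value is at most $0 < C$, the optimal \eqref{eq:msp} solution lies in the image of $\phi$; hence the two optimal values coincide, and any near-optimal separator of cost below $C$ maps back, via $\phi^{-1}$, to a lifted multicut of equal cost. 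The two defining inequalities of an L-reduction then hold with $\alpha = \beta = 1$, so APX-hardness carries over to \eqref{eq:msp}.

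The step I expect to require the most care is reconciling the sign conventions: both problems are minimization problems whose optimal value may be negative, whereas APX-hardness is a multiplicative notion that is meaningful only for a non-negatively normalized objective. I would therefore phrase the hardness with respect to the disagreement-minimization normalization of correlation clustering and verify that the additive constant relating that normalization to the signed-cost objective does not interfere with the exact cost correspondence supplied by \eqref{eq:c_s-eq-c_m}. Once this normalization is fixed, the verification that $\phi$ and $\phi^{-1}$ form the two maps of an L-reduction is routine.
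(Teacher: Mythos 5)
Your proposal is correct and takes essentially the same route as the paper: the paper's proof likewise observes that the cost-preserving bijection $\phi$ from \Cref{thm:lmp-msp-reduction} (via \eqref{eq:c_s-eq-c_m}) makes the reduction approximation-preserving, and then invokes the \textsc{apx}-hardness of \eqref{eq:lmp}, which it cites directly from \citet{andres2023polyhedral} rather than re-deriving from the multicut/correlation-clustering special case as you do. Your extra care about the L-reduction constants and the sign normalization is sound but goes beyond what the paper's two-line argument spells out.
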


\begin{proof}
    The reduction of \eqref{eq:lmp} to \eqref{eq:msp} from \Cref{thm:lmp-msp-reduction} is approximation-preserving. 
    This follows directly from the fact that the solutions that are mapped onto one another by the bijection $\phi$ have the same cost.
    Therefore, approximating \eqref{eq:msp} is at least as hard as approximating \eqref{eq:lmp}.
    This implies that \eqref{eq:msp} is \textsc{apx}-hard, as \eqref{eq:lmp} is \textsc{apx}-hard \cite{andres2023polyhedral}.
\end{proof}

\begin{remark}
    In contrast to \Cref{cor:msp-apx-hard}, the reduction of \eqref{eq:msp} to \eqref{eq:lmp} from \Cref{thm:msp-lmp-reduction} is not approximation-preserving since the costs of two solutions that are mapped to one another differ by a large constant.
    Whether there exists an approximation-preserving reduction of \eqref{eq:msp} to \eqref{eq:lmp} or whether approximating \eqref{eq:msp} is harder than approximating \eqref{eq:lmp} is an open problem.
\end{remark}

For planar graphs, \citet{voice2012coalition} and \citet{bachrach2013optimal} have shown independently that edge sum coalition structure generation is \textsc{np}-hard.
The edge sum graph coalition structure generation problem asks for a decomposition of a graph that maximizes the sum of the costs of those edges whose nodes are in the same component.
Clearly, a decomposition maximizes the sum of the costs of those edges whose nodes are in the same component if and only if it minimizes the sum of the costs of those edges whose nodes are in distinct components.
Thus, for any graph, finding an optimal edge sum graph coalition structure is equivalent to finding an optimal multicut.
As the multicut problem is the special case of the lifted multicut problem without long-range edges, i.e. $F =  \emptyset$, the reduction from \Cref{thm:lmp-msp-reduction} implies the following hardness result.

\begin{corollary}
    The multi-separator problem \eqref{eq:msp} is \textsc{np}-hard even if the graph $(V, E \cup F)$ is planar.
\end{corollary}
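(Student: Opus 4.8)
The plan is to instantiate the reduction of \Cref{thm:lmp-msp-reduction} on a planar multicut instance and then to verify that the auxiliary graph it produces, together with its interactions, remains planar. As established in the surrounding discussion, edge-sum coalition structure generation, equivalently the multicut problem (i.e.\ the special case of \eqref{eq:lmp} with $F = \emptyset$), is \textsc{np}-hard on planar graphs \citep{voice2012coalition,bachrach2013optimal}. I would therefore start from an arbitrary planar multicut instance given by a planar graph $G = (V, E)$, empty long-range edge set $F = \emptyset$, and costs $c \colon E \to \R$.

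Applying the construction in the proof of \Cref{thm:lmp-msp-reduction} to this instance yields an \eqref{eq:msp} instance on the auxiliary graph $\bar{G} = (\bar{V}, \bar{E})$ with interactions $\bar{F} = E$ (since $F = \emptyset$), where $\bar{V} = V \cup \{vw \mid \{v,w\} \in E\}$ subdivides each edge of $G$ once and $\bar{E} = \{\{v, vw\}, \{vw, w\} \mid \{v,w\} \in E\}$ is the corresponding subdivision. By that theorem the feasible solutions of the two instances correspond one-to-one and the related solutions have equal cost, so the resulting \eqref{eq:msp} instance is \textsc{np}-hard. It then remains only to check that its associated graph $(\bar{V}, \bar{E} \cup \bar{F})$ is planar.

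The key step, and the only nonroutine one, is this planarity check. The graph $(\bar{V}, \bar{E} \cup \bar{F})$ arises from $G$ by replacing every edge $\{v, w\}$ with the triangle on $v$, $w$, and the fresh subdivision node $vw$, whose two sides $\{v, vw\}$ and $\{vw, w\}$ lie in $\bar{E}$ and whose third side is the interaction edge $\{v, w\} \in \bar{F}$. Starting from a fixed planar embedding of $G$, I would place $vw$ on the original curve of $\{v, w\}$ and route the interaction edge $\{v, w\}$ as a second curve parallel to that drawing, displaced slightly into one of the two incident faces so that it meets no other edge. This draws each triangle inside a thin strip around the original edge without introducing crossings, so the embedding of $G$ extends to a planar embedding of $(\bar{V}, \bar{E} \cup \bar{F})$. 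Equivalently, doubling an edge and subdividing one of the two copies preserves planarity, and these operations are applied independently along the pairwise interior-disjoint edges of the embedding.

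Combining the two observations, namely that the reduced \eqref{eq:msp} instance is \textsc{np}-hard and that its graph $(\bar{V}, \bar{E} \cup \bar{F})$ is planar, establishes the claim. I do not anticipate any difficulty beyond making the parallel-curve argument precise, since the correctness and cost preservation of the reduction are already supplied by \Cref{thm:lmp-msp-reduction}.
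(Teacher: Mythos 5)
Your proposal is correct and matches the paper's proof: the paper likewise combines the \textsc{np}-hardness of planar multicut (via edge-sum coalition structure generation) with the observation that the graph $(\bar{V}, \bar{E} \cup \bar{F})$ produced by the reduction of \Cref{thm:lmp-msp-reduction} stays planar when $G$ is planar. The paper states this planarity without argument, whereas you supply the justification explicitly (doubling each edge and subdividing one copy preserves planarity), which is exactly the intended reasoning.
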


\begin{proof}
    If $G=(V,E)$ is a planar graph, then the graph $(\bar{V}, \bar{E} \cup \bar{F})$ with $\bar{V}$, $\bar{E}$, and $\bar{F}$ as in the reduction of \eqref{eq:lmp} to \eqref{eq:msp} in \Cref{thm:lmp-msp-reduction} is also planar.
\end{proof}


\subsection{Attraction or repulsion only}\label{sec:attraction-repulsion-only}

In the following, we show: The multi-separator problem remains \textsc{np}-hard for the special cases where all interactions are attractive or all interactions are repulsive.
More specifically, we show that the multi-separator problem generalizes the quadratic unconstrained binary optimization (QUBO) problem, the node-weighted Steiner tree problem \citep{moss-2007} and the multi-terminal vertex separator problem \citep{cornaz2019multi}. 

\begin{theorem}\label{thm:qubo-msp-reduction}
    Quadratic unconstrained binary optimization \eqref{eq:qubo} is equivalent to the special case of the multi-separator problem \eqref{eq:msp} with $E = F$.
\end{theorem}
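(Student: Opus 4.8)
The plan is to prove the equivalence by exhibiting, in both directions, an objective-preserving correspondence between binary vectors, and the whole argument hinges on a single observation: when $F = E$, the separation of an interaction collapses to a purely \emph{local} condition on its two endpoints. Indeed, for any edge $f = \{u,v\} \in E = F$ and any separator $S$, the length-one path along the edge $\{u,v\}$ passes through no node other than $u$ and $v$; hence $f \in F(S)$ if and only if $u \in S$ or $v \in S$. In terms of characteristic vectors this reads $x^S_f = x^S_u + x^S_v - x^S_u\, x^S_v$ for every $f = \{u,v\} \in F$. This identity is precisely what turns the global, path-based separation structure into a quadratic pseudo-Boolean expression, and it is the crux of the whole theorem.

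For the direction from the multi-separator problem with $E=F$ to \eqref{eq:qubo}, I would substitute this identity into the objective of \eqref{eq:msp}. Writing $x_v$ for $x^S_v$, the objective becomes $\sum_{v \in V} c_v x_v + \sum_{\{u,v\} \in E} c_{\{u,v\}}(x_u + x_v - x_u x_v)$, which, after collecting terms, is a quadratic function over $\{0,1\}^V$ and hence an instance of \eqref{eq:qubo}. Since every $x \in \{0,1\}^V$ is the restriction to $V$ of $x^S$ for a unique separator $S \subseteq V$, minimizing over $\ms(G,E)$ coincides with minimizing this quadratic function over the Boolean cube.

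For the converse, given an instance of \eqref{eq:qubo} with linear coefficients $a_i$ and pairwise coefficients $b_{ij}$, I would build the multi-separator instance on the node set $V = \{1,\dots,n\}$ of the QUBO variables, take $E = F$ to be the complete graph on $V$ (connected whenever $n \geq 2$; the case $n = 1$ is trivial), and match coefficients by reading off the expansion above: the bilinear coefficient of $x_u x_v$ is $-c_{\{u,v\}}$ and the linear coefficient of $x_v$ is $c_v + \sum_{w} c_{\{v,w\}}$. Setting $c_{\{i,j\}} = -b_{ij}$ and $c_i = a_i + \sum_{j \neq i} b_{ij}$ (with $b_{ij} := 0$ for pairs carrying no quadratic term) therefore makes the multi-separator objective equal to the QUBO objective term by term, so optimal separators correspond exactly to optimal QUBO assignments.

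The one technical point requiring care, and the step I expect to be the main obstacle, is the connectivity requirement built into the definition of the multi-separator problem: a QUBO whose interaction pattern is sparse or disconnected need not induce a connected graph on its own. I resolve this by completing the graph with interactions of cost zero. Such edges are harmless because their contribution $c_{\{i,j\}}(x_i + x_j - x_i x_j)$ vanishes and the term $c_{\{i,j\}}$ they add to the node-cost correction is zero, so neither the objective nor the coefficient matching is affected, while connectivity is guaranteed. Everything else reduces to the routine verification that the two objectives agree, which follows directly from the displayed identity.
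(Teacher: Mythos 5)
Your key identity is correct and is exactly the observation the paper's proof rests on: for $E=F$, an edge $\{u,v\}$ is separated by $S$ if and only if $u\in S$ or $v\in S$, i.e.\ $x^S_{\{u,v\}}=x^S_u+x^S_v-x^S_u x^S_v$ (the paper writes this equivalently as $1-x_{\{u,v\}}=(1-x_u)(1-x_v)$ and then uses the complementation $x\mapsto 1-x$ as the bijection). Your handling of connectivity by completing the graph with zero-cost interactions is sound, and in fact more careful than the paper, whose construction $E=\{\{i,j\}\mid q_{ij}\neq 0\}$ need not yield a connected graph.

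However, there is a genuine sign error in your QUBO-to-\eqref{eq:msp} direction, and it breaks the conclusion. Recall that \eqref{eq:qubo} is a \emph{maximization} problem while \eqref{eq:msp} is a \emph{minimization} problem. With your choice $c_{\{i,j\}}=-b_{ij}$ and $c_i=a_i+\sum_{j\neq i}b_{ij}$, the multi-separator objective is indeed equal, term by term, to the QUBO objective; but then minimizing the separator cost identifies optimal separators with \emph{minimizers} of the QUBO objective, not with its maximizers, so ``optimal separators correspond exactly to optimal QUBO assignments'' does not follow --- as stated, you have reduced QUBO \emph{minimization}, which is the wrong problem. The fix is a one-line negation: take $c_{\{i,j\}}=b_{ij}$ and $c_i=-a_i-\sum_{j\neq i}b_{ij}$, so that the MSP objective equals $-\bigl(\sum_i a_i x_i+\sum_{\{i,j\}}b_{ij}x_ix_j\bigr)$, whence minimizers of \eqref{eq:msp} are exactly maximizers of \eqref{eq:qubo}. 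The same min/max issue is latent (though harmless) in your first direction: minimizing the expanded quadratic over the Boolean cube only becomes an instance of the maximization problem \eqref{eq:qubo} after negating the objective, which should be said. This sign bookkeeping is precisely what the paper's complementation trick handles automatically: under $x\mapsto 1-x$ the separator cost becomes $\sum_{1\leq i\leq j\leq n}q_{ij}-\sum_{1\leq i\leq j\leq n}q_{ij}x_ix_j$, a constant minus the QUBO objective, so minimization and maximization align without altering any costs.
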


\begin{proof}
    Let $n \in \mathbb{N}$, and $q_{ij} \in \R$ for all integers $i$ and $j$ such that $1 \leq i \leq j \leq n$.
    Then the quadratic unconstrained binary optimization problem with respect to coefficients $q$ is defined as
    \begin{align}\label{eq:qubo}
        \max \quad & \sum_{1 \leq i \leq j \leq n} q_{ij} \, x_{ij}  \tag{QUBO} \\
        \text{s.t.} \quad 
        & x_{ii} \in \{0, 1\} && \forall i \in \{1, \dots, n\} \notag \\
        & x_{ij} = x_{ii} \, x_{jj} && \forall i \in \{1, \dots, n\} \ \forall j \in \{i + 1, \dots, n\}  \enspace . \notag
    \end{align}
    Let $G=(V, E)$ be the graph with $V=\{1,\dots,n\}$ and $E = \left\{\{i,j\} \mid q_{ij} \neq 0,\; 1 \leq i < j \leq n \right\}$, let $F = E$ be the set of interactions and let $c: V \cup F \to \R$ with $c_i = q_{ii}$ for $i \in V$ and $c_{\{i, j\}} = q_{ij}$ for $1 \leq i < j \leq n$ with $\{i, j\} \in F$.
    It is easy to see that the instance of \eqref{eq:qubo} with respect to $q$ is equivalent to the instance of \eqref{eq:msp} with respect to $G$, $F$ and $c$:

    For all $x \in \ms(G, F)$ and all $\{i, j\} \in F$, we have $x_{\{i, j\}} = 0 \Leftrightarrow x_i = x_j = 0$, which is equivalent to $1 - x_{\{i,j\}} = (1 - x_i) (1 - x_j)$.
    Therefore, the affine transformation $x \mapsto 1-x$ is a bijection between the feasible solutions of the instance of \eqref{eq:msp} and the feasible solutions of the instance of \eqref{eq:qubo}.
    Moreover, if $x$ is a feasible solution to the instance of \eqref{eq:qubo} with cost $\sum_{1 \leq i \leq j \leq n} q_{ij} x_i x_j$, then $y=1-x$ is a feasible solution to the instance of \eqref{eq:msp} with cost
    \[
        \sum_{g \in V \cup F} c_g y_g = \sum_{1 \leq i \leq n} c_i (1-x_i) + \sum_{1 \leq i < j \leq n} c_{ij} (1- x_i x_j) = \sum_{1 \leq i \leq j \leq n} q_{ij} - \sum_{1 \leq i \leq j \leq n} q_{ij} x_i x_j \enspace.
    \]
    The two costs differ by sign and the additive constant term $\sum_{1 \leq i \leq j \leq n} q_{ij}$. 
    Thus, the maximizers of the instance of \eqref{eq:qubo} are precisely the minimizers of the instance of \eqref{eq:msp}, which concludes the proof.
\end{proof}

\begin{theorem}\label{thm:steiner-tree-msp-reduction}
    The multi-separator problem \eqref{eq:msp} generalizes the node-weighted Steiner tree problem.
\end{theorem}

\begin{proof}
    Let $G=(V, E)$ be a connected graph, let $U \subseteq V$ be called a set of terminals, and let $w: V \to \R_{\geq 0}$ assign a non-negative weight to each node in $G$. 
    The node-weighted Steiner tree problem with respect to $G$, $U$, and $w$ consists in finding a node set $T \subseteq V$ with minimum cost $\sum_{v \in T} w_v$ such that the subgraph of $G$ induced by $T$ contains a component that contains $U$.

    Let $U = \{u_1,\dots,u_k\}$ with $k = |U|$, and define interactions $F = \left\{\{u_1,u_2\},\dots,\{u_1,u_k\}\right\}$.
    Furthermore, let $W := \sum_{v \in V} w_v$, and define costs $c: V \cup F \to \R$ with $c_v = -w_v$ for $v \in V$, and with $c_f = W + 1$ for $f \in F$.
    Now, consider the multi-separator problem \eqref{eq:msp} with respect to $G$, $F$, and $c$. 
    By construction, the solutions to the Steiner tree problem relate one-to-one to the solutions of the multi-separator problem with non-positive cost:
    Every feasible solution $x \in \ms(G,F)$ with non-positive cost, i.e.~$\sum_{g \in V \cup F} c_g x_g \leq 0$, satisfies $x_f = 0$ for all $f \in F$.
    This implies that the subgraph of $G$ induced by $x^{-1}(0)$ contains a component that contains $U$.
    Conversely, if $T$ is a feasible solution to the Steiner tree problem then $x \in \{0,1\}^{V \cup F}$ defined such that $x_v = 1$ for $v \notin T$ and such that $x_g = 0$ for $g \in T \cup F$ is a feasible solution to the multi-separator problem that has a non-positive cost.

    Moreover,
    \[
        \sum_{g \in V \cup F} c_g x_g = \sum_{v \in V \setminus T} c_v = - \sum_{v \in V \setminus T} w_v = \sum_{v \in T} w_v - W \enspace .
    \]  
    Consequently, the optimal solutions of the Steiner tree problem relate one-to-one to the optimal solutions of the multi-separator problem, which concludes the proof.
\end{proof}

\begin{theorem}\label{thm:mwvc-msp-reduction}
    The multi-separator problem \eqref{eq:msp} generalizes the multi-terminal vertex separator problem.
\end{theorem}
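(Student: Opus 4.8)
Following the structure of the proofs of \Cref{thm:qubo-msp-reduction} and \Cref{thm:steiner-tree-msp-reduction}, the plan is to realize an instance of the multi-terminal vertex separator problem directly as an instance of \eqref{eq:msp} on the \emph{same} graph, using repulsive interactions to force the required separations. Recall that the multi-terminal vertex separator problem is specified by a connected graph $G = (V,E)$, a set of terminals $U = \{u_1, \dots, u_k\} \subseteq V$, and non-negative node weights $w \colon V \to \R_{\geq 0}$; one seeks a set $S \subseteq V \setminus U$ of minimum weight $\sum_{v \in S} w_v$ such that no two terminals lie in the same component of the subgraph induced by $V \setminus S$. I would assume, as is standard, that the terminals are pairwise non-adjacent and that $k \geq 2$, so that the instance is feasible; in particular $S = V \setminus U$ is then always a feasible separator.

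For the reduction I would keep the graph $G$ unchanged and take as interactions all pairs of terminals, $F = \binom{U}{2}$, since separating every terminal from every other terminal is exactly the condition that no two terminals share a component. Writing $W = \sum_{v \in V} w_v$, I would set $c_v = w_v$ for every non-terminal $v \in V \setminus U$, assign each terminal $u \in U$ a prohibitively large cost $c_u = (W+1)\binom{k}{2} + 1$, and make every interaction repulsive by setting $c_f = -(W+1)$ for all $f \in F$. With these costs, any feasible separator $S \subseteq V \setminus U$ of the multi-terminal vertex separator instance, viewed as a separator of the \eqref{eq:msp} instance, separates all $\binom{k}{2}$ terminal pairs and therefore has cost $\sum_{v \in S} w_v - (W+1)\binom{k}{2}$, i.e.~its weight shifted by the constant $-(W+1)\binom{k}{2}$.

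The core of the argument is to show that every optimal \eqref{eq:msp} solution is of this form, and this is where the calibration of the constants does the work. First, since including even a single terminal already costs more than the largest attainable reward $(W+1)\binom{k}{2}$ from separating interactions, any separator containing a terminal has strictly positive cost, whereas the benchmark separator $V \setminus U$ has cost at most $W - (W+1)\binom{k}{2} < 0$; hence no optimal separator contains a terminal. Second, because a single unseparated terminal pair forfeits a reward of $W+1$, which exceeds the total node weight $W$ and thus the most one could ever save on node costs, any separator leaving a pair unseparated is again beaten by $V \setminus U$; hence every optimal separator separates all terminal pairs. Together these two facts identify the optimal \eqref{eq:msp} solutions with the optimal multi-terminal vertex separators, the objective values differing only by the additive constant $-(W+1)\binom{k}{2}$. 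The main obstacle is precisely this calibration: the constants must be ordered so that excluding terminals dominates the gain from any separation and so that completing all separations always pays off, and I would verify both inequalities against the fixed benchmark $S = V \setminus U$, as sketched above. The construction is clearly linear in the size of the input.
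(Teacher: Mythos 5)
Your proposal is correct and follows essentially the same reduction as the paper: the same graph, interactions given by all terminal pairs, weights on non-terminal nodes, prohibitively large costs on terminals, and repulsive interaction costs $-(W+1)$, with the calibration verified against the benchmark separator $V \setminus U$. The only cosmetic differences are that the paper uses the smaller terminal cost $W+1$ (sufficient because it compares solutions against the cost threshold $-|F|\cdot(W+1)$ rather than against a benchmark optimum) and phrases the correspondence as a bijection between feasible multi-terminal vertex separators and MSP solutions below that threshold.
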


\begin{proof}
    Let $G=(V, E)$ be a connected graph, let $U \subseteq V$ such that $\{u,v\} \notin E$ for $u,v \in U$ be called a set of terminals, and let $w: V \to \R_{\geq 0}$ assign a non-negative weight to each node in $G$. 
    The multi-terminal vertex separator problem with respect to $G$, $U$ and $w$ consists in finding a node set $S \subseteq V \setminus U$ such that each pair $u, v \in U$, $u \neq v$ of terminals is separated by $S$ and such that $\sum_{v \in S} w_v$ is minimal.
    The condition $\{u,v\} \notin E$ for $u,v \in U$ ensures that a feasible solution exists.

    We define the set of interactions $F = \{\{u,v\} \mid u, v \in U, u \neq v\}$ as the set of all pairs of terminals.
    Furthermore, we define $W := \sum_{v \in V} w_v$ and costs $c: V \cup F$ such that $c_v = w_v$ for $v \in V \setminus U$, such that $c_v = W+1$ for $v \in U$ and such that $c_f = - (W + 1)$ for $f \in F$.
    Now, consider the multi-separator problem \eqref{eq:msp} with respect to $G$, $F$ and $c$.
    By construction, the solutions to the multi-terminal vertex separator problem correspond one-to-one to the solutions of the multi-separator problem with cost at most $-|F| \cdot (W + 1)$:
    Every feasible solution $x \in \ms(G,F)$ with $\sum_{g \in V \cup F} c_g x_g \leq -|F| (W + 1)$ satisfies $x_f = 1$ for all $f \in F$ and satisfies $x_v = 0$ for all $v \in U$. 
    This implies that any pair of terminals is separated by the node set $x^{-1}(1) \subseteq V \setminus U$, i.e.~that $x^{-1}(1)$ is a feasible solution to the multi-terminal vertex separator problem.
    Conversely, if $S$ is a feasible solution to the multi-terminal vertex separator problem, then $x \in \{0,1\}^{V \cup F}$ defined such that $x_v = 0$ for $v \in V \setminus S$ and such that $x_g = 1$ for $g \in S \cup F$ is a feasible solution to the multi-separator problem with cost at most $-|F| \cdot (W + 1)$.

    Moreover,
    \[
        \sum_{g \in V \cup F} c_g x_g = - |F| \cdot (W + 1) + \sum_{v \in V \setminus U} c_v = - |F| \cdot (W + 1) + \sum_{v \in V \setminus U} w_v 
        \enspace .
    \]
    Consequently, the optimal solutions of the multi-terminal vertex separator problem relate one-to-one to the optimal solutions of the multi-separator problem, which concludes the proof.
\end{proof}

It is well known that \eqref{eq:qubo} is \textsc{np}-hard even if the costs of all quadratic terms are $-1$ \cite{barahona1982computational}.
Together with \Cref{thm:qubo-msp-reduction}, this immediately implies the following hardness results for \eqref{eq:msp} with all repulsive interactions.
The same result is obtained from the fact that the multi-terminal vertex separator problem is \textsc{np}-hard \citep{garg1994multiway}, together with \Cref{thm:mwvc-msp-reduction}.

\begin{corollary}
    The multi-separator problem \eqref{eq:msp} is \textsc{np}-hard even if $c_f = -1$ for all $f \in F$.
\end{corollary}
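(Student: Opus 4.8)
The plan is to obtain the corollary as a direct consequence of \Cref{thm:qubo-msp-reduction}, which shows that \eqref{eq:qubo} is equivalent to the special case of \eqref{eq:msp} with $E = F$. The key feature of that reduction is that each interaction $\{i,j\} \in F$ receives the cost $c_{\{i,j\}} = q_{ij}$; that is, interaction costs coincide with quadratic coefficients. Hence the subclass of \eqref{eq:qubo} in which every nonzero quadratic coefficient equals $-1$ is mapped precisely onto the subclass of \eqref{eq:msp} in which $c_f = -1$ for all $f \in F$, and it suffices to invoke the hardness of the former.

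First I would recall that \eqref{eq:qubo} is \textsc{np}-hard even when every nonzero quadratic coefficient equals $-1$ \cite{barahona1982computational}: this is weighted \textsc{max-cut} in disguise, since maximizing the number of edges of a graph $H$ that are cut equals, after expanding $x_i + x_j - 2x_ix_j$ over the edges and rescaling by $1/2$, a \eqref{eq:qubo} instance whose quadratic coefficient is $-1$ on each edge of $H$ and $0$ otherwise, with the linear coefficients absorbing the node degrees. Applying \Cref{thm:qubo-msp-reduction} to such an instance produces an instance of \eqref{eq:msp} with $F = E$ and $c_f = -1$ for every $f \in F$ whose minimizers correspond to the maximizers of the \eqref{eq:qubo} instance, which establishes the claim. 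The only point deserving care is that \eqref{eq:msp} is stated for connected graphs, whereas the interaction graph of an arbitrary \textsc{max-cut} instance need not be connected; this is harmless, as \textsc{max-cut} is already \textsc{np}-hard on connected graphs, and in any case the problem decomposes over the connected components of $G$.

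I expect no serious obstacle, the content being entirely contained in the cost correspondence of \Cref{thm:qubo-msp-reduction}. For completeness I would also record the independent route via \Cref{thm:mwvc-msp-reduction}: the multi-terminal vertex separator problem is \textsc{np}-hard \citep{garg1994multiway}, and its reduction assigns the common value $-(W+1)$ to every interaction; dividing all costs by the positive constant $W+1$ preserves the set of minimizers and normalizes every interaction cost to exactly $-1$, yielding a second proof.
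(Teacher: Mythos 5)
Your proposal is correct and takes essentially the same route as the paper: the paper derives this corollary directly from \Cref{thm:qubo-msp-reduction} together with the \textsc{np}-hardness of \eqref{eq:qubo} with all quadratic coefficients equal to $-1$ \cite{barahona1982computational}, and also notes the same alternative route via \Cref{thm:mwvc-msp-reduction} and \cite{garg1994multiway} that you record. Your additional remarks (the \textsc{max-cut} unpacking, the connectivity caveat, and the rescaling by $W+1$ in the second route) are correct elaborations of details the paper leaves implicit.
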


Conversely, if the costs of all quadratic terms in the \eqref{eq:qubo} objective are positive, then the objective is supermodular and, thus, \eqref{eq:qubo} can be solved efficiently \cite{grotschel1981ellipsoid}.
In contrast, the multi-separator problem remains \textsc{np}-hard even for all attractive interactions, due to \Cref{thm:steiner-tree-msp-reduction}:

\begin{corollary}
    The multi-separator problem \eqref{eq:msp} is \textsc{np}-hard even if $c_f \geq 0$ for all $f \in F$ and $c_v \leq 0$ for all $v \in V$.
\end{corollary}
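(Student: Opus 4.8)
The plan is to invoke the reduction established in \Cref{thm:steiner-tree-msp-reduction} and to observe that the cost vector it produces already satisfies both sign restrictions claimed here. First I would recall that the node-weighted Steiner tree problem is \textsc{np}-hard; this is standard, as it generalizes the edge-weighted Steiner tree problem, which is \textsc{np}-hard, and is studied in \citet{moss-2007}. Hence it suffices to verify that the special case of \eqref{eq:msp} described in the corollary is general enough to encode this problem.

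Next I would inspect the cost vector $c$ constructed in the proof of \Cref{thm:steiner-tree-msp-reduction}. There, each node cost is set to $c_v = -w_v$ with $w_v \geq 0$, so that $c_v \leq 0$ for every $v \in V$, and each interaction cost is set to $c_f = W + 1 > 0$, so that $c_f \geq 0$ for every $f \in F$. Thus every instance of \eqref{eq:msp} produced by that reduction lies entirely within the special case considered here, with all interactions attractive and all nodes repulsive.

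Since \Cref{thm:steiner-tree-msp-reduction} provides a cost-preserving, one-to-one correspondence between optimal solutions of the node-weighted Steiner tree problem and optimal solutions of these restricted \eqref{eq:msp} instances, an algorithm solving the restricted multi-separator problem would solve the Steiner tree problem. The \textsc{np}-hardness of the latter therefore transfers to \eqref{eq:msp} under the constraints $c_f \geq 0$ for all $f \in F$ and $c_v \leq 0$ for all $v \in V$, which is exactly the claim.

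I expect no genuine obstacle here, since the corollary is an immediate consequence of the earlier theorem; the only point deserving attention is that the Steiner-tree reduction identifies solutions via a non-positivity threshold on the objective rather than directly, so one must ensure this thresholding does not break the hardness transfer. This is already settled by the correspondence of \emph{optimal} solutions asserted in \Cref{thm:steiner-tree-msp-reduction}, so restating that correspondence suffices.
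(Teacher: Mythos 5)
Your proposal is correct and follows essentially the same route as the paper's own proof: both invoke the reduction from \Cref{thm:steiner-tree-msp-reduction}, observe that the costs it constructs ($c_v = -w_v \leq 0$, $c_f = W+1 > 0$) already satisfy the stated sign restrictions, and appeal to the \textsc{np}-hardness of the node-weighted Steiner tree problem via its generalization of the edge-weighted version. The only cosmetic difference is that the paper spells out the standard subdivision argument (replacing each edge by a node carrying its weight) to justify that generalization, which you assert rather than detail.
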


\begin{proof}
    In the proof of \Cref{thm:steiner-tree-msp-reduction}, the node-weighted Steiner tree problem is reduced to an instance that satisfies the conditions.
    Moreover, the node-weighted Steiner tree problem is a generalization of the edge-weighted Steiner tree problem (the edge-weighted version can be reduced to the node-weighted version by subdividing each edge and giving the new node its weight), which is well-known to be \textsc{np}-hard \cite{karp1972reducibility}.
\end{proof}

\subsection{Feasibility of partial assignments}\label{sec:deciding-consistent}

The following theorem shows that the multi-separator problem is harder than quadratic unconstrained binary optimization, Steiner tree, and multi-terminal vertex separator, in the sense that deciding consistency is hard for the multi-separator problem and easy for the other problems.
Here, deciding consistency refers to the problem of deciding whether a partial assignment to the variables of a problem has an extension to all variables of the problem that is a feasible solution.
For the multi-separator problem, this problem is defined as follows.

\begin{definition}\label{def:msp-consistency}
    Let $G=(V,E)$ be a connected graph, let $F \subseteq \binom{V}{2}$ be a set of interactions.
    A partial variable assignment $x: V \cup F \to \{0, 1, *\}$, where $*$ indicates no assignment, is called \emph{consistent} with respect to $G$ and $F$ if and only if there exists $y \in \ms(G,F)$ such that $x_g = y_g$ for all $g \in V \cup F$ with $x_g \in \{0,1\}$.
    In that case, $y$ is called a \emph{feasible extension} of $x$.
\end{definition}

\begin{theorem}\label{thm:ms-consistency}
    Deciding consistency for the multi-separator problem \eqref{eq:msp} is \textsc{np}-complete.
\end{theorem}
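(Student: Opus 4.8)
The plan is to prove both membership in \textsc{np} and \textsc{np}-hardness. Membership is the easy direction: a separator $S \subseteq V$ is a polynomial-size certificate, and given $S$ one verifies in linear time whether $x^S$ agrees with the partial assignment $x$ on every assigned variable. Node memberships are read off directly, and each assigned interaction $\{u,w\}$ is checked by computing the connected components of the subgraph induced by $V \setminus S$ and testing whether $u$ and $w$ lie in a common component. Hence consistency lies in \textsc{np}.

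For hardness I would reduce from \textsc{3-sat}. Given a formula $\varphi$ with variables $x_1,\dots,x_n$ and clauses $C_1,\dots,C_m$, I would build a graph $G$ with a single global hub node $g$ (assigned $x_g = 0$) and, for each variable $x_i$, a four-cycle on nodes $s_i, T_i, t_i, F_i$ with edges $\{s_i,T_i\}, \{T_i,t_i\}, \{t_i,F_i\}, \{F_i,s_i\}$, together with edges $\{T_i, g\}$ and $\{F_i, g\}$. The two literal nodes $T_i, F_i$ are the only nodes left unassigned; the nodes $s_i, t_i$ are assigned $0$. For each clause $C_j$ I would add a node $q_j$ (assigned $0$) joined by an edge to each of its three literal nodes, where a literal $x_i$ points to $T_i$ and $\neg x_i$ points to $F_i$. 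The assigned interactions are $x_{\{T_i,F_i\}} = 1$ and $x_{\{s_i,t_i\}} = 0$ for every variable, and $x_{\{q_j,g\}} = 0$ for every clause. The intended correspondence is that a feasible $S$ contains exactly one of $T_i, F_i$, and the literal node left outside $S$ encodes the truth value of $x_i$.

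The crux of the correctness argument is the variable gadget. I would show that the two constraints $x_{\{s_i,t_i\}} = 0$ and $x_{\{T_i,F_i\}} = 1$ together force exactly one of $T_i, F_i$ into $S$: if both lay in $S$, then $s_i$, whose only neighbours are $T_i, F_i$, would be isolated, contradicting that $s_i$ and $t_i$ stay connected; whereas if both lay outside $S$, then $s_i$ would connect them, placing $T_i, F_i$ in a common component and contradicting their required separation. Given this, the clause constraints behave as intended: the only neighbours of $q_j$ are its three literal nodes, and an out-of-$S$ literal node reaches $g$ through its direct edge, so $q_j$ stays connected to $g$ if and only if at least one of its literals is satisfied. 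Assembling these, a feasible extension exists precisely when $\varphi$ is satisfiable; conversely, any satisfying assignment yields the separator $S = \{F_i : x_i \text{ true}\} \cup \{T_i : x_i \text{ false}\}$, which I would verify meets every node and interaction constraint. Since $g$ connects all gadgets, $G$ is connected, and the construction has size linear in $\varphi$, so the reduction is polynomial.

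The main obstacle I anticipate is not any single gadget but confirming that the gadgets compose without interference once literal nodes are shared among many clauses and all are attached to the common hub $g$. I must check that these shared edges cannot create an unintended route letting both $T_i$ and $F_i$ escape $S$ (they cannot, since the separation of $T_i, F_i$ is already forced through $s_i$), nor an unintended route connecting some $q_j$ to $g$ that bypasses a satisfied literal (they cannot, since the neighbourhood of $q_j$ is exactly its literal nodes). Verifying these non-interference claims carefully is where the real work lies.
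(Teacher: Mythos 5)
Your proof is correct, and your hardness reduction takes a genuinely different route from the paper's. The paper also reduces from \textsc{3-sat} (and proves membership in \textsc{np} exactly as you do), but for hardness it transplants the construction that \citet{horvnakova2017analysis} used for consistency of partially labeled lifted multicuts: the clauses form columns of literal nodes strung between two terminals $s$ and $t$, with complete bipartite edges between consecutive columns; the partial assignment demands that $\{s,t\}$ be \emph{not} separated and that every pair $\{x,\bar{x}\}$ of contradictory literal nodes be separated, so a feasible separator must leave an $s$-$t$ path of pairwise non-contradictory literals, which encodes a satisfying assignment. You instead build explicit per-variable gadgets (the four-cycles on $s_i, T_i, t_i, F_i$, whose two interaction constraints force exactly one of $T_i, F_i$ into the separator) and per-clause connectivity constraints $x_{\{q_j,g\}}=0$ to a hub $g$. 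Your case analysis of the variable gadget and your non-interference checks are exactly the right things to verify, and they go through because $s_i$, $t_i$ and $q_j$ have no edges leaving their gadgets, so their isolation or connection is decided locally. What the paper's route buys is the explicit structural parallel with the lifted multicut problem, which is thematically central to the paper (cf.~\Cref{thm:msp-lmp-reduction,thm:lmp-msp-reduction}); what yours buys is a self-contained, elementary argument in which variable consistency is enforced by a constant-size gadget rather than by separation constraints threaded globally across clause columns, making correctness checkable gadget by gadget. Both reductions are linear-size, so both establish the claimed \textsc{np}-completeness.
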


\begin{proof}
    Deciding consistency is in \textsc{np} because any feasible extension serves as a certificate that can be verified in linear time.
    \textsc{np}-hardness can be shown by reducing \textsc{3-sat} to deciding consistency for the multi-separator problem, exactly analogous to the proof of Theorem 1 of \citet{horvnakova2017analysis}.
    An example of this reduction is depicted in \Cref{fig:3-sat-to-msp-reduction}.
\end{proof}

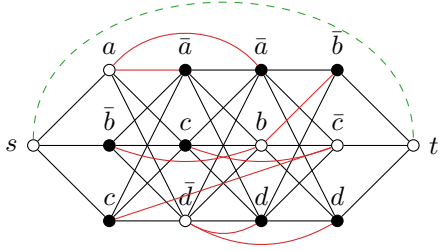
\begin{figure}
    \centering
    \begin{tikzpicture}
    \node[vertex, label=left:$s$] (s) at (0, 0) {};
    \node[vertex, label=right:$t$] (t) at (5, 0) {};
    
    \node[vertex, label=above:$a$] (00) at (1, 1) {};
    \node[separator, label=above:$\bar{b}$] (01) at (1, 0) {};
    \node[separator, label=above:$c$] (02) at (1, -1) {};

    \node[separator, label=above:$\bar{a}$] (10) at (2, 1) {};
    \node[separator, label=above:$c$] (11) at (2, 0) {};
    \node[vertex, label=above:$\bar{d}$] (12) at (2, -1) {};

    \node[separator, label=above:$\bar{a}$] (20) at (3, 1) {};
    \node[vertex, label=above:$b$] (21) at (3, 0) {};
    \node[separator, label=above:$d$] (22) at (3, -1) {};

    \node[separator, label=above:$\bar{b}$] (30) at (4, 1) {};
    \node[vertex, label=above:$\bar{c}$] (31) at (4, 0) {};
    \node[separator, label=above:$d$] (32) at (4, -1) {};

    \draw (s) -- (00);
    \draw (s) -- (01);
    \draw (s) -- (02);
    
    \draw[myred] (00) -- (10);
    \draw (00) -- (11);
    \draw (00) -- (12);

    \draw (01) -- (10);
    \draw (01) -- (11);
    \draw (01) -- (12);

    \draw (02) -- (10);
    \draw (02) -- (11);
    \draw (02) -- (12);

    \draw (10) -- (20);
    \draw (10) -- (21);
    \draw (10) -- (22);

    \draw (11) -- (20);
    \draw (11) -- (21);
    \draw (11) -- (22);

    \draw (12) -- (20);
    \draw (12) -- (21);
    \draw (12) -- (22);

    \draw (20) -- (30);
    \draw (20) -- (31);
    \draw (20) -- (32);

    \draw[myred] (21) -- (30);
    \draw (21) -- (31);
    \draw (21) -- (32);

    \draw (22) -- (30);
    \draw (22) -- (31);
    \draw (22) -- (32);
    
    \draw (30) -- (t);
    \draw (31) -- (t);
    \draw (32) -- (t);

    \draw[myred] (00) to[bend left=50] (20);
    \draw[myred] (01) to[bend right=20] (21);
    \draw[myred] (11) to[bend right=20] (31);
    \draw[myred] (02) -- (31);
    \draw[myred] (12) to[bend right=30] (22);
    \draw[myred] (12) to[bend right=30] (32);

    \draw[mygreen, dashed] (s) to[out=90, in=180] (2.5, 1.9) to[out=0, in=90] (t);
    
\end{tikzpicture}
    \caption{The 3-SAT problem can be reduced to deciding consistency for \eqref{eq:msp}.
    For instance, the 3-SAT formula $(a \lor \bar{b} \lor c) \land (\bar{a} \lor c \lor \bar{d}) \land (\bar{a} \lor b \lor d) \land (\bar{b} \lor \bar{c} \lor d)$ is satisfiable if and only if there exists a separator $S$ in the graph depicted above (solid edges) such that $\{s,t\}$ (green, dashed) is not separated by $S$ and such that every pair $\{x,\bar{x}\}$ (red) for $x \in \{a, b, c, d\}$ is separated by $S$.
    This particular formula is satisfied, e.g., by the assignment $\{a, b, \bar{c}, \bar{d}\}$.
    The corresponding separator is depicted by the black nodes.}
    \label{fig:3-sat-to-msp-reduction}
\end{figure}

The following lemma identifies two special cases in which consistency can be decided efficiently.

\begin{lemma}\label{lem:msp-consistency-special-cases}
    For the multi-separator problem \eqref{eq:msp}, consistency can be decided in linear time if either $x_f \in \{0,*\}$ for all $f \in F$, or $x_f \in \{1, *\}$ for all $f \in F \setminus E$.
\end{lemma}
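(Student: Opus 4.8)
The plan is to exploit the monotonicity of separation. First I would record the elementary observation that if $S \subseteq S'$ then $F(S) \subseteq F(S')$: a pair $\{u,v\}$ is separated by $S$ precisely when an endpoint lies in $S$ or every $uv$-path meets $S$, and both conditions are preserved when the separator grows. Consequently, a constraint demanding that an interaction be \emph{not} separated favours small separators, while a constraint demanding that an interaction \emph{be} separated favours large separators. Writing $V_1 = x^{-1}(1) \cap V$ and $V_0 = x^{-1}(0) \cap V$ for the nodes forced into and out of the separator, and $F_1 = x^{-1}(1) \cap F$, $F_0 = x^{-1}(0) \cap F$ for the interactions forced to be separated and not separated, a feasible extension is exactly a separator $S$ with $V_1 \subseteq S \subseteq V \setminus V_0$ that separates every pair in $F_1$ and no pair in $F_0$.

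For the first case, $x_f \in \{0,*\}$ for all $f \in F$ means $F_1 = \emptyset$, so only ``not separated'' constraints survive. By monotonicity the smallest admissible separator $S := V_1$ is the best candidate: if $V_1$ already separates some pair of $F_0$, then so does every larger admissible $S$, so $x$ is inconsistent; otherwise $S = V_1$ witnesses consistency. I would therefore decide consistency by computing the connected components of the subgraph induced by $V \setminus V_1$ once and checking, for each $\{u,v\} \in F_0$, that $u,v \notin V_1$ and lie in a common component --- one linear-time component computation plus one lookup per interaction.

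For the second case, $x_f \in \{1,*\}$ for all $f \in F \setminus E$ forces $F_0 \subseteq E$, i.e.\ every ``not separated'' constraint is on an edge. The key reduction is that for an edge $\{u,v\}$, keeping both endpoints out of $S$ already places them in a common component (they are joined by that edge), so $\{u,v\}$ is not separated by $S$ if and only if $u \notin S$ and $v \notin S$. Thus the $F_0$ constraints collapse to the node-exclusion condition $S \cap V_0' = \emptyset$, where $V_0'$ is $V_0$ together with all endpoints of edges in $F_0$, and the admissible separators form the lattice interval $\{S : V_1 \subseteq S \subseteq S_{\max}\}$ with $S_{\max} := V \setminus V_0'$, every member of which satisfies all $F_0$ constraints. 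Since the remaining $F_1$ constraints demand separation and are monotone, the largest candidate $S := S_{\max}$ is optimal. Hence $x$ is consistent iff $V_1 \cap V_0' = \emptyset$ (otherwise a node is forced both in and out of $S$) and $S_{\max}$ separates every pair in $F_1$; as before this is one connected-component computation on the subgraph induced by $V_0'$ plus one check per interaction, in linear time.

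The step I expect to be the main obstacle is the decoupling in the second case. The hardness of the general consistency problem (\Cref{thm:ms-consistency}) stems precisely from the coupling between ``separated'' and ``not separated'' requirements through the global path structure of $G$, and the restriction $x_f \in \{1,*\}$ on $F \setminus E$ is exactly what breaks this coupling, because the only surviving ``not separated'' constraints sit on edges, where non-separation is a purely local node-exclusion condition rather than a global connectivity condition. Once this observation is in place, both cases reduce to testing a single canonical separator --- the minimal one $V_1$ in the first case, the maximal one $S_{\max}$ in the second --- whose optimality is certified by monotonicity, so correctness is immediate and the running time is dominated by one linear-time connected-components computation.
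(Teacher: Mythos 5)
Your proposal is correct and takes essentially the same approach as the paper: both arguments decide consistency by testing a single canonical separator --- the minimal admissible one $\{v \in V \mid x_v = 1\}$ in the first case, and the maximal admissible one $V \setminus V_0'$ in the second (which is exactly the paper's $S = \{v \in V \mid x_v \in \{1,*\}\}$ after its preliminary step of forcing the endpoints of $0$-labeled edge interactions to $0$) --- followed by one linear-time connected-components computation and a per-interaction check. The monotonicity $S \subseteq S' \Rightarrow F(S) \subseteq F(S')$ that you state explicitly to certify the optimality of these two candidates is precisely what the paper invokes implicitly when it declares the corresponding partial assignments ``clearly not consistent.''
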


\begin{proof}
    Let $G = (V, E)$ be a connected graph, let $F \subseteq \binom{V}{2}$ be a set of interactions, and let $x: V \cup F \to \{0, 1, *\}$ be a partial assignment to the variables of the multi-separator problem with respect to $G$ and $F$.

    Firstly, suppose that $x_f \in \{0, *\}$ for all $f \in F$.
    Let $S = \{v \in V \mid x_v = 1\}$ be the set of nodes that is assigned to the separator by $x$.
    If there exists an interaction $f \in F$ with $x_f = 0$ such that $f$ is separated by $S$, then $x$ is clearly not consistent.
    Otherwise, $\{f \in F \mid x_f = 0\} \cap F(S) = \emptyset$.
    Now, let $y: V \cup F \to \{0, 1\}$ be the feasible solution that is induced by the separator $S$, i.e.
    \[
        y_g = \begin{cases}
            1 & \text{if } g \in S \cup F(S) \\
            0 & \text{ otherwise}
        \end{cases}  \enspace. 
    \]
    By construction, $y \in \ms(G, F)$ and $x_g = y_g$ for all $g \in V \cup F$ with $x_g \in \{0, 1\}$, which implies that $x$ is consistent.
    Therefore, consistency can be decided by the following algorithm: 
    1.~Delete the nodes that are labeled $1$ by $x$ from the graph, 
    2.~compute the connected components of the obtained graph, 
    3.~check whether all interactions that are labeled $0$ by $x$ are contained in a connected component.
    Computing the connected component can be done in time $\mathcal{O}(|V| + |E|)$ by breadth-first search.
    The overall algorithm is linear in the size $|V| + |E| + |F|$ of the multi-separator problem instance.

    Secondly, suppose that $x_f \in \{1, *\}$ for all $f \in F \setminus E$.
    Note: For any interaction that is an edge of $G$, i.e.~$\{u, v\} \in F \cap E$, with $x_{\{u,v\}} = 0$, we need to have $x_u \neq 1 \neq x_v$ in order for $x$ to be consistent.
    For the remainder of this proof, we may assume $x_u = x_v = 0$ for all $\{u,v\} \in F \cap E$ with $x_{\{u,v\}} = 0$.
    Let $S = \{v \in V \mid x_v \in \{1, *\}\}$ be the set of nodes that are assigned to the separator or do not have an assignment.
    If there exists an interaction $f \in F$ with $x_f = 1$ such that $f$ is not separated by $S$, then $x$ is clearly not consistent.
    Otherwise, it holds that $\{f \in F \mid x_f = 1\} \subseteq F(S)$.
    As before, let $y: V \cup F \to \{0,1\}$ be the feasible solution induced by the separator $S$.
    Again, by construction, $y \in \ms(G,F)$ and $x_g = y_g$ for all $g \in V \cup F$ with $x_g \in \{0,1\}$, which implies that $x$ is consistent.
    Similarly to above, consistency can be checked by the following algorithm: 
    0.~Check if for all $\{u,v\} \in F \cap E$ with $x_{\{u,v\}} = 0$, it holds that $x_u \neq 1 \neq x_v$; otherwise, $x$ is not consistent,
    1.~delete all nodes in $\{v \in V \mid x_v = 1 \text{ or } \nexists \{u,v\} \in F \cap E: x_{\{u,v\}} = 0\}$ from $G$, 
    2.~compute the connected components of the obtained graph,
    3.~check whether all interaction that are labeled $1$ by $x$ are not contained in a connected component.
    This algorithm is also linear in the size of the instance of the multi-separator problem.
\end{proof}

As shown in \Cref{thm:qubo-msp-reduction,thm:steiner-tree-msp-reduction,thm:mwvc-msp-reduction}, quadratic unconstrained binary optimization, Steiner tree, and multi-terminal vertex separator can be modeled as special cases of the multi-separator problem.
The three special cases all satisfy one of the two conditions of \Cref{lem:msp-consistency-special-cases}:
The special case of \eqref{eq:msp} that is equivalent to quadratic unconstrained binary optimization (\Cref{thm:qubo-msp-reduction}) satisfies $F \setminus E = \emptyset$, i.e. $x_f \in \{1,*\}$ for all $f \in F \setminus E$. 
The special cases of \eqref{eq:msp} used to model Steiner tree (\Cref{thm:steiner-tree-msp-reduction}) or multi-terminal vertex separator (\Cref{thm:mwvc-msp-reduction}) are such that either no interactions can be separated, i.e. $x_f  = 0$ for all $f \in F$, or all interactions need to be separated, i.e. $x_f = 1$ for all $f \in F$.
Thus, by \Cref{lem:msp-consistency-special-cases}, for quadratic unconstrained binary optimization, Steiner tree and multi-terminal vertex separator, consistency is efficiently decidable, whereas for the general multi-separator problem, it is \textsc{np}-hard (\Cref{thm:ms-consistency}).

\subsection{Absolute dominant costs}

In the remainder of this section, we analyze the multi-separator problem for a restricted class of cost functions. We begin with a brief motivation. 

Suppose we are not interested in finding a feasible solution that minimizes a linear cost function but a feasible solution that avoids mistakes, with respect to a given order of severity. 
More specifically, we want to find a feasible solution by looking at the variables of a problem in a given order, avoiding to assign 0 to variables that we call \emph{repulsive}, and avoiding to assign 1 to variables that we call \emph{attractive}.
For the (lifted) multicut problem, this preference problem is discussed in detail by \citet{wolf2020mutex}.
For the multi-separator problem \eqref{eq:msp}, this preference problem is discussed below.

Given a strict order $<$ on the set $V \cup F$ of variables, and given a bipartition $\{A,R\}$ of the set $V \cup F$ of variables, we refer to the variables in $A$ as \emph{attractive} and refer to the variables in $R$ as \emph{repulsive}. 
For any feasible solutions $x, y \in \text{MS}(G,F)$ with $x \neq y$, and for the smallest (with respect to $<$) variable $g$ with $x_g \neq y_g$, we \emph{prefer} $x$ over $y$, written as $x \prec y$, if both $g \in A$ and $x_g = 0$, or if both $g \in R$ and $x_g = 1$.
Otherwise, we prefer $y$ over $x$, written as $y \prec x$.
According to this definition, preference $\prec$ is a strict order on the set of all feasible solutions.
The \emph{preferred multi-separator problem} consists in finding the feasible solution that is preferred over all other feasible solutions.
This problem can be modeled as a \eqref{eq:msp} with a specific cost function:

\begin{definition}[Compare {\citet[Equation (8)]{wolf2020mutex}}]
    A cost function $c: V \cup F \to \R$ is called \emph{absolute dominant} if
	\begin{align}\label{eq:absolute-dominant}
        |c_g| & > \sum_{g' \in V \cup F, |c_{g'}| < |c_g|} |c_{g'}| & \forall g \in V \cup F \enspace .
	\end{align}
\end{definition}

Given an order $g_1,\dots,g_n$ and a partition $\{A,R\}$ of $V \cup F$ into attractive and repulsive variables, we can define absolute dominant costs as
\begin{align}\label{eq:ordered-to-linear}
    c_{g_i} & = 
        \begin{cases}
            2^{n-i} & \text{if } g_i \in A \\
            -2^{n-i} & \text{if } g_i \in R
        \end{cases}
        & \forall i \in \{1, \dots, n\}
        \enspace .
\end{align}
Now, the solutions to the preferred multi-separator problem with respect to $\{A,R\}$ and $g_1, \dots, g_n$ are precisely the solutions to the \eqref{eq:msp} with respect to the costs $c$ \citep{wolf2020mutex}.

As a direct consequence of \Cref{thm:ms-consistency}, we obtain the following hardness result.

\begin{theorem}\label{thm:ms-abs-dom-np-hard}
    The multi-separator problem \eqref{eq:msp} is \textsc{np}-hard even for absolute dominant costs.
\end{theorem}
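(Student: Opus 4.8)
The plan is to reduce the consistency problem of \Cref{def:msp-consistency}, shown \textsc{np}-complete in \Cref{thm:ms-consistency}, to the task of solving \eqref{eq:msp} with absolute dominant costs. The point of departure is the equivalence recalled just above: for costs of the form \eqref{eq:ordered-to-linear} with respect to an order $g_1, \dots, g_n$ and a partition $\{A, R\}$ of $V \cup F$, the unique optimizer of \eqref{eq:msp} is the feasible solution that is lexicographically best, in the sense that it satisfies the preference of $g_i$ (value $0$ if $g_i \in A$, value $1$ if $g_i \in R$) whenever some feasible solution satisfies the preferences of $g_1, \dots, g_i$ simultaneously. This characterization is exactly what lets a single optimization expose feasibility of the most dominant preferences.

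Given a consistency instance with partial assignment $x : V \cup F \to \{0, 1, *\}$, I would keep the same graph $G$ and interactions $F$ and construct an order and a partition as follows. I would place every assigned variable (those with $x_g \in \{0, 1\}$) before every unassigned variable in the order $g_1, \dots, g_n$, and I would declare an assigned variable attractive if $x_g = 0$ and repulsive if $x_g = 1$, so that the preference of $g$ is to take exactly the value $x_g$; the unassigned variables are placed last and labelled arbitrarily. Taking the costs $c$ defined by \eqref{eq:ordered-to-linear} yields an absolute dominant cost vector, and since each cost has $\order(n)$ bits, the resulting instance of \eqref{eq:msp} has size polynomial in that of the consistency instance and is produced in linear time.

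The key claim to verify is that the optimizer $y \in \ms(G, F)$ of this instance agrees with $x$ on every assigned variable if and only if $x$ is consistent. If $x$ is consistent, then by \Cref{def:msp-consistency} there is a feasible extension of $x$, so the preferences of all assigned variables are simultaneously satisfiable; since these variables occupy the top of the order, the lexicographic optimizer $y$ must satisfy all of them, i.e.\ $y_g = x_g$ for every assigned $g$. Conversely, if $y$ agrees with $x$ on all assigned variables, then $y$ is itself a feasible extension of $x$, so $x$ is consistent. Hence an algorithm solving \eqref{eq:msp} with absolute dominant costs in polynomial time would decide consistency in polynomial time by computing $y$ and comparing it with $x$ in linear time, contradicting \Cref{thm:ms-consistency} unless $\textsc{p} = \textsc{np}$.

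The main obstacle I anticipate is making the lexicographic argument fully rigorous: one must argue that absolute dominance forces the optimizer to honour the preference of each dominant variable before trading it off against all less dominant ones, and that this greedy behaviour coincides exactly with simultaneous feasibility of the top-ranked preferences. This is precisely the content of the equivalence between absolute dominant costs and the preferred problem cited above, so the remaining work is to invoke it with the order chosen so that the assigned variables sit at the top, and to observe that feasibility of this top block is the same statement as consistency of $x$.
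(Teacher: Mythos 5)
Your proposal is correct and follows essentially the same route as the paper: both reduce the consistency problem of \Cref{thm:ms-consistency} to \eqref{eq:msp} with absolute dominant costs by ordering the assigned variables first, making $g$ attractive when $x_g = 0$ and repulsive when $x_g = 1$, and taking the costs of \eqref{eq:ordered-to-linear}. The only difference is cosmetic: the paper reads off consistency by comparing the optimal \emph{value} to the threshold $-\sum_{i=1}^k x_{g_i} 2^{n-i}$, whereas you compare the optimizer's coordinates to $x$ directly; the two tests are equivalent by the same lexicographic-dominance argument.
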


\begin{proof}
    We show \textsc{np}-hardness by reducing the problem of deciding consistency for \eqref{eq:msp} to solving \eqref{eq:msp} for absolute dominant costs.

    To this end, let $x: V \cup F \to \{0, 1, *\}$ be a partial variable assignment for a given \eqref{eq:msp} instance with graph $G = (V, E)$ and interactions $F \subseteq \binom{V}{2}$.
    Let $n = |V \cup F|$, let $k = |\{g \in V \cup F \mid x_g \in \{0, 1\}\}|$ be the number of labeled variables, and let $g_1,\dots,g_n$ be an enumeration of the variables $V \cup F$ such that $x_{g_i} \in \{0, 1\}$ for $i \in \{1,\dots,k\}$.
    Furthermore, let $A = \{g \in V \cup F \mid x_g = 0\}$ and $R = (V \cup F) \setminus A$.
    Now, let $c$ be defined as in \eqref{eq:ordered-to-linear}.
    Observe that $x$ is consistent if and only if the optimal solution of the multi-separator problem with respect to costs $c$ has a value that is at most $-\sum_{i=1}^k x_{e_i} 2^{n-i}$.
\end{proof}

In contrast to the hardness result above, for both special cases that either all interactions are non-repulsive or all interactions are non-attractive, the multi-separator problem with absolute dominant costs can be solved efficiently:

\begin{theorem}\label{thm:ms-abs-dom-efficient}
    The multi-separator problem \eqref{eq:msp} can be solved efficiently for absolute dominant costs if either $c_f \leq 0$ for all $f \in F \setminus E$ or $c_f \geq 0$ for all $f \in F$.
\end{theorem}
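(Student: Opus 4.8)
The plan is to exploit the equivalence, recalled just above this theorem, between the multi-separator problem with absolute dominant costs and the preferred multi-separator problem. As for the (lifted) multicut problem \citep{wolf2020mutex}, the preferred feasible solution is computed by a greedy sweep: process the variables $V \cup F$ in an order $g_1,\dots,g_n$ of non-increasing $|c_{g_i}|$ and fix each variable to its preferred value --- namely $0$ for an attractive variable and $1$ for a repulsive one --- whenever the partial assignment obtained by doing so is consistent, and to the non-preferred value otherwise. Since absolute dominance makes the objective lexicographic in this order, this greedy sweep returns an optimal solution, so the only thing left is to argue that every consistency check it performs can be carried out in linear time. In general this is hopeless by \Cref{thm:ms-consistency}, but \Cref{lem:msp-consistency-special-cases} supplies two tractable regimes, and the idea is to show that, under each hypothesis of the theorem, the sweep never leaves the corresponding regime.

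Consider first the case $c_f \geq 0$ for all $f \in F$, so that (as absolute dominance forbids zero costs) every interaction is attractive and prefers the value $0$. I would maintain the invariant that at every step the current partial assignment $x$ is consistent, witnessed as in the first case of \Cref{lem:msp-consistency-special-cases} by the minimal separator $S = \{v \in V \mid x_v = 1\}$: every interaction labelled $0$ is not separated by $S$, and every interaction labelled $1$ is separated by $S$. Under this invariant each tentative check reduces to recomputing the connected components of the subgraph of $G$ induced by $V \setminus S$ for a separator $S$ that changes by at most one node, together with a single separation query, and is therefore linear. The key point, and where the hypothesis enters, is that an attractive interaction is fixed to its non-preferred value $1$ only when fixing it to $0$ is inconsistent, i.e.\ only when it is already separated by $S$; hence every $1$-labelled interaction is forced, the invariant is preserved, and we never confront the genuinely hard question of whether an as-yet-unseparated interaction can be separated without separating others.

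The case $c_f \leq 0$ for all $f \in F \setminus E$ is dual and uses the second regime of \Cref{lem:msp-consistency-special-cases}. Now every long-range interaction is repulsive and prefers the value $1$, and I would maintain that $x$ is witnessed by the maximal admissible separator $S = \{v \in V \mid x_v \neq 0\}$, with every long-range interaction labelled $1$ separated by $S$ and every interaction labelled $0$ not separated by $S$. A long-range interaction is fixed to its non-preferred value $0$ only when it cannot be separated even by this maximal $S$, so that the label $0$ is again forced and consistent. Interactions that happen to be edges of $G$, i.e.\ $f \in F \cap E$, need the additional bookkeeping of the ``step $0$'' in the proof of \Cref{lem:msp-consistency-special-cases} --- such an interaction can be left unseparated only if both endpoints are excluded from the separator --- but this is a constant amount of extra work per interaction.

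Summing a linear-time check over the $|V \cup F|$ steps then yields a polynomial-time algorithm in both cases. I expect the main obstacle to lie in verifying the invariants against the interleaving of node and interaction variables in the dominance order: one must check that fixing a node variable to $1$ (respectively $0$) never silently invalidates an earlier interaction label, and that the witness separator --- minimal in the first case, maximal in the second --- can be maintained incrementally so that each consistency query really is of the form handled by \Cref{lem:msp-consistency-special-cases}. Once these invariants are in place, the complexity bookkeeping is routine.
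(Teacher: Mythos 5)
Your proposal is correct and follows essentially the same route as the paper's proof: a greedy sweep over $V \cup F$ in decreasing order of $|c_g|$, optimality by the lexicographic structure of absolute dominant costs, and per-step consistency checks made linear-time by the two tractable regimes of \Cref{lem:msp-consistency-special-cases} under the respective sign hypotheses.

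The one (minor) divergence is bookkeeping: the paper \emph{revokes} an inconsistent preferred assignment, leaving the variable at $*$, so the partial assignment stays syntactically inside the regimes of \Cref{lem:msp-consistency-special-cases} and the lemma applies as a black box, whereas you record the forced non-preferred value, which exits those regimes and obliges you to verify the witness-separator invariant you sketch --- that verification does go through (a failed preferred assignment is always refuted by the current witness $S$ itself, since $S$ certifies the current assignment and separation is monotone in $S$), but it amounts to re-deriving the lemma's argument inline rather than citing it, which is exactly the extra work the paper's revocation trick avoids.
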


\begin{proof}
    We describe an algorithm that computes the optimal solution for the multi-separator problem with absolute dominant costs.
    This algorithm involves deciding consistency (cf.~\Cref{sec:deciding-consistent}) and is therefore not a polynomial time algorithm in general.
    However, it is designed such that under the condition that either $c_f \leq 0$ for all $f \in F \setminus E$ or $c_f \geq 0$ for all $f \in F$, the consistency problems that need to be solved can be solved efficiently, by \Cref{lem:msp-consistency-special-cases}.

    Let $x: V \cup F \to \{0,1,*\}$ be a partial variable assignment.
    The algorithm starts with all variables being unassigned, i.e. $x_g = *$ for all $g \in V \cup F$.
    Clearly, $x$ is consistent.
    Now, the algorithm iterates over all variables $g \in V \cup F$ in descending order of their absolute cost.
    In each iteration, the algorithm assigns the variable $x_g$ to $0$ of $c_g > 0$ and to $1$ if $c_g \leq 0$. 
    If by assigning the variable $x_g$ to $0$ or $1$, $x$ is no longer consistent, this assignment is revoked, i.e. $x_g = *$.
    We observe that, if $x$ with $x_g = 0$ ($x_g = 1$) is not consistent, then it must hold that all consistent extensions $y \in \ms(G,F)$ of $x$ with $x_g = *$ satisfy $y_g = 1$ ($y_g = 0$).
    After iterating over all variables, $x$ is a consistent partial variable assignment that has exactly one consistent extension $y$, by the above observation.
    This consistent extension $y$ is an optimal solution to the problem by the following argument:
    Suppose there exists a feasible solution $y'$ with a strictly smaller cost.
    Let $g \in V \cup F$ be the variable with the largest absolute cost with $y_g \neq y'_g$.
    By the definition of absolute dominant costs \eqref{eq:absolute-dominant} and the assumption that the cost of $y'$ is smaller than the cost of $y$, it must hold that $c_g < 0$ if $y'_g = 1$ and $c_g > 0$ if $y'_g = 0$.
    Now, let $x: V \cup F \to \{0, 1, *\}$ be the partial variable assignment such that $x_{g'} = y'_{g'}$ for all $g' \in V \cup F$ with $|c_{g'}| \geq |c_g|$, and such that $x_{g'} = *$ for all $g' \in V \cup F$ with $|c_{g'}| < |c_g|$.
    If $x$ was consistent, then the algorithm would have assigned the variable $g$ to $y'_g$.
    This is in contradiction to $y_g \neq y'_g$. 
    Thus, $y$ is optimal.

    The algorithms consists of $|V| + |F|$ iterations.
    In each iteration, it needs to be decided if the partial assignment $x$ is consistent.
    By the design of the algorithm and the assumption that either $c_f \leq 0$ for all $f \in F \setminus E$ or $c_f \geq 0$ for all $f \in F$, the partial assignment satisfies one of the conditions of \Cref{lem:msp-consistency-special-cases}
    (if $c_f \leq 0$ for all $f \in F \setminus E$, then $x_f \in \{1, *\}$ for all $f \in F \setminus E$.
    If $c_f \geq 0$ for all $f \in F$, then $x_f \in \{0, *\}$ for all $f \in F$). 
    Therefore, consistency can be decided in linear time.
    Overall, the algorithm has a time complexity of $\mathcal{O}(n^2)$ with $n = |V| + |E| + |F|$ the size of the instance of the problem.
\end{proof}

\begin{remark}
    For absolute dominant costs and non-positive costs for all non-neighboring node pairs (i.e. $c_f \leq 0$ for all $f \in F \setminus E$), the algorithm described in the proof of \Cref{thm:ms-abs-dom-efficient} is similar to the Mutex-Watershed algorithm of \citet{wolf2020mutex}.
    The Mutex-Watershed algorithm solves the lifted multicut problem efficiently for absolute dominant costs with all non-positive costs for all non-neighboring node pairs.
\end{remark}

The fact that the multi-separator problem can be solved efficiently also for absolute dominant costs and \emph{non-negative} costs for all interactions is a key difference to the lifted multicut problem that is \textsc{np}-hard even for absolute dominant costs and non-negative costs for all non-neighboring node pairs:

\begin{theorem}\label{thm:lmc-abs-dom-hard}
    The lifted multicut problem \eqref{eq:lmp} with respect to a connected graph $G = (V, E)$, an augmented graph $\widehat{G} = (V, E \cup F)$ and absolute dominant edge costs $c: E \cup F \to \R$ with $c_e \leq 0$ for $e \in E$ and $c_f \geq 0$ for $f \in F$ is \textsc{np}-hard.
\end{theorem}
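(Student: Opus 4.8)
The plan is to mirror the proof of \Cref{thm:ms-abs-dom-np-hard}: reduce the problem of deciding consistency for the lifted multicut problem to solving \eqref{eq:lmp} with absolute dominant costs, and then exploit the lexicographic structure of absolute dominant costs to read off consistency from the optimal value. Deciding consistency for lifted multicuts is \textsc{np}-hard by \citet[Theorem 1]{horvnakova2017analysis}, the same \textsc{3-sat} reduction underlying \Cref{thm:ms-consistency} (see \Cref{fig:3-sat-to-msp-reduction} for the multi-separator analogue). Since consistency is a purely combinatorial property that does not involve costs, the only subtlety in transporting that hardness to the \emph{absolute dominant} setting of the present theorem is that the labels we are forced to impose must respect the prescribed signs $c_e \le 0$ for $e \in E$ and $c_f \ge 0$ for $f \in F$.

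Concretely, suppose we are given a partial labeling of $E \cup F$ that is consistency-hard and \emph{sign-compatible}, meaning that every labeled edge $e \in E$ carries the label $1$ (cut) and every labeled long-range pair $f \in F$ carries the label $0$ (join). I would order the variables so that the labeled ones come first, place every edge in the repulsive set and every long-range pair in the attractive set, and assign costs exactly as in \eqref{eq:ordered-to-linear}, i.e.\ a positive dominant weight $2^{n-i}$ to the attractive (long-range) variables and a negative dominant weight $-2^{n-i}$ to the repulsive (edge) variables. By construction these costs are absolute dominant and automatically satisfy $c_e \le 0$ and $c_f \ge 0$: a forced cut lands on an edge (hence repulsive), and a forced join lands on a long-range pair (hence attractive), while unlabeled edges and long-range pairs receive small costs of the sign dictated by their type. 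Because the optimum of an absolute dominant instance is the lexicographically preferred feasible solution, its leading bits realize all forced labels if and only if those labels are jointly realizable; hence the partial labeling is consistent if and only if the optimal value of \eqref{eq:lmp} does not exceed the threshold contributed by the forced cut-labels, exactly as in \Cref{thm:ms-abs-dom-np-hard}.

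The main obstacle is establishing a \emph{sign-compatible} consistency-hard instance in the first place. In the standard \textsc{3-sat} gadget, the constraints of the form ``$x$ and $\bar x$ lie in distinct components'' are imposed on long-range pairs and therefore demand the label $1$ on a long-range variable, which would require a repulsive long-range cost and violate $c_f \ge 0$. I would resolve this by enforcing each such separation through genuine edges rather than through a long-range cut-label: if every edge of a set $D \subseteq E$ separating two nodes $p$ and $q$ in $G$ is labeled as cut, then $p$ and $q$ are forced into distinct components, since any surviving $p$--$q$ path would have to traverse a cut edge and hence could not be joined. Thus each long-range separation constraint on $\{x,\bar x\}$ can be replaced by cut-labels on an edge separator, so that all hard cut-constraints fall on edges (label $1$, repulsive) while all hard join-constraints --- notably the $s$--$t$ connection --- remain on long-range pairs (label $0$, attractive). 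Verifying that this re-engineered gadget still faithfully encodes \textsc{3-sat}, in particular that the inserted edge separators can be chosen so as not to obstruct the $s$--$t$ connectivity that certifies a satisfying assignment, is the delicate part of the argument.
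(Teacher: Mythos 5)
Your overall strategy---reducing the \textsc{np}-hard problem of deciding consistency of a partially labeled lifted multicut \citep[Theorem~1]{horvnakova2017analysis} to \eqref{eq:lmp} with sign-constrained absolute dominant costs---is exactly the paper's, and you correctly isolate the one real obstacle: forced cut-labels must land on edges of $E$ and forced join-labels on pairs of $F$. But your resolution of that obstacle is where the proof breaks down, and it is the part you explicitly defer. You propose to re-engineer the \textsc{3-sat} gadget so that each long-range separation constraint on a contradictory pair $\{u,v\}$ is replaced by forced cut-labels on a fixed edge separator $D \subseteq E$. This replacement is not faithful: forcing every edge of a \emph{fixed} set $D$ to be cut is strictly stronger than requiring $u$ and $v$ to lie in distinct components (a conjunction of prescribed cuts versus a disjunction over all ways of separating), and the solution's freedom to choose \emph{how} to separate $u$ from $v$ is precisely what encodes the truth assignment. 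Concretely, in the gadget (the lifted-multicut analogue of \Cref{fig:3-sat-to-msp-reduction}), any edge cut in $G$ separating two contradictory literal nodes lying in non-adjacent clause columns either isolates one of the two literal nodes---which forces that literal occurrence off every $s$--$t$ path, i.e.\ effectively fixes it to false and destroys the equivalence with satisfiability---or cuts between consecutive columns, which disconnects $s$ from $t$ and contradicts the forced join on $\{s,t\}$. So the construction as described fails, and this is not a detail that can be patched by choosing the separators more carefully within the existing graph.

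The paper avoids gadget surgery entirely by normalizing an \emph{arbitrary} consistency instance: if an edge $e \in E$ is labeled $0$, contract it in $G$ and $\widehat{G}$; if a pair $f = \{u,v\} \in F$ is labeled $1$, move $f$ from $F$ into $E$. The second move is the idea you are missing, and it is in fact the only sound version of your ``edge separator'' idea, with $D = \{f\}$ itself: once $\{u,v\}$ is an edge, the label $1$ on it forces $u$ and $v$ into distinct components by the very definition of a multicut, and since every label-respecting solution separates $u$ from $v$, the new edge can never serve to connect a component, so the set of feasible extensions is unchanged. After this normalization, all $1$-labels sit on $E$ and all $0$-labels on $F$, and your cost assignment via \eqref{eq:ordered-to-linear} applies to any hard instance, not just a re-engineered gadget. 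One further caveat: your closing claim that consistency is equivalent to ``optimal value at most the threshold contributed by the forced cut-labels'' does not carry over from \Cref{thm:ms-abs-dom-np-hard}. There, all unlabeled variables are repulsive, so every feasible extension has cost at most the threshold; here, unlabeled pairs in $F$ must be attractive (cost $\geq 0$), and a label-respecting solution can be \emph{forced} to cut some of them, pushing its value strictly above the threshold (e.g., a path $u$--$w$--$v$ with both edges labeled $1$ and $\{u,v\} \in F$ unlabeled). One should instead use the fact that the optimum of an absolute dominant instance is the unique lexicographically preferred feasible solution and check whether that solution extends the partial labeling.
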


\begin{proof}
    By Theorem 1 of \citet{horvnakova2017analysis}, deciding consistency of a partially labeled lifted multicut is \textsc{np}-hard.
    Similarly to the proof of \Cref{thm:ms-abs-dom-np-hard}, we show that deciding consistency of a partially labeled lifted multicut can be reduced to solving the lifted multicut problem with respect to costs constrained as in \Cref{thm:lmc-abs-dom-hard}.

    Let $x: E \cup F \to \{0, 1, *\}$ be a partially labeled lifted multicut. 
    Without loss of generality, we may assume $x^{-1}(0) \subseteq F$ and $x^{-1}(1) \subseteq E$ by the following arguments.
    If there was an edge $e \in E$ with $x_e = 0$, we would contract the edge $e$ in $G$ and $\widehat{G}$ and consider the consistency problem with respect to the contracted graphs.
    If there was an edge $f \in F$ with $x_f = 1$, we would delete it from $F$ and add it to $E$ without altering the consistency problem.

    Now, the claim follows analogously to the proof of \Cref{thm:ms-abs-dom-np-hard}.
\end{proof}

\section{Local search algorithms}\label{sec:algorithms}
In this section, we define two efficient local search algorithms for the multi-separator problem.
Our main motivation for studying efficient algorithms that do not necessarily find optimal solutions comes from the task of segmenting volume images with voxel grid graphs with tens of millions of nodes (\Cref{fig:runtime-analysis}).

One approach to defining efficient local search algorithms would be to map instances of the multi-separator problem to instances of the lifted multicut problem, as in the proof of \Cref{thm:msp-lmp-reduction}, apply known local search algorithms to these, and construct feasible solutions from the output.
This approach is hampered, however, by the fact that local search algorithms for the lifted multicut problem are not designed for the instances constructed in the proof of \Cref{thm:msp-lmp-reduction}. 
For example, greedy additive edge contraction starting from singleton components 
\citep{keuper2015efficient}
results in the feasible solution in which every node $v$ is put together with its copy $\bar{v}$, and \emph{all} nodes $v,w$ with $\{v,w\} \in E$ remain in distinct components.

\begin{figure}
    \centering
    \begin{tikzpicture}
    \def \d {1.5cm}
    \node[vertex, text width=8pt] (0) at (-1.5*\d, 0) {\scriptsize 6};
    \node[vertex, text width=8pt] (1) at (-1.5*\d, 1) {\scriptsize 4};
    \node[vertex, text width=8pt] (2) at (-1.5*\d, 2) {\scriptsize 3};
    \node[vertex, text width=8pt] (3) at (-1.5*\d, 3) {\scriptsize 2};

    \draw (0) -- node[right] {\scriptsize 1} (1);
    \draw (1) -- node[right] {\scriptsize 1} (2);
    \draw (2) -- node[right] {\scriptsize 7} (3);
    \draw[mygreen] (0) to[bend left] node[left] {\scriptsize -8} (3);

    \draw (-0.75*\d, -0.5) -- (-0.75*\d, 3.5);

    \node[separator, text width=8pt] (0) at (0, 0) {\scriptsize -6};
    \node[separator, text width=8pt] (1) at (0, 1) {\scriptsize -4};
    \node[separator, text width=8pt] (2) at (0, 2) {\scriptsize -3};
    \node[separator, text width=8pt] (3) at (0, 3) {\scriptsize -2};
    \draw[cut-edge] (0) -- (1);
    \draw[cut-edge] (1) -- (2);
    \draw[cut-edge] (2) -- (3);
    \draw[cut-edge, mygreen] (0) to[bend left] (3);
    \node at (0, -0.5) {16};

    \node[vertex, text width=8pt] (0) at (1*\d, 0) {\scriptsize 6};
    \node[separator, text width=8pt] (1) at (1*\d, 1) {\scriptsize -5};
    \node[separator, text width=8pt] (2) at (1*\d, 2) {\scriptsize -3};
    \node[separator, text width=8pt] (3) at (1*\d, 3) {\scriptsize -2};
    \draw[cut-edge] (0) -- (1);
    \draw[cut-edge] (1) -- (2);
    \draw[cut-edge] (2) -- (3);
    \draw[cut-edge, mygreen] (0) to[bend left] (3);
    \node at (1*\d, -0.5) {10};

    \node[vertex, text width=8pt] (0) at (2*\d, 0) {\scriptsize 7};
    \node[vertex, text width=8pt] (1) at (2*\d, 1) {\scriptsize 5};
    \node[separator, text width=8pt] (2) at (2*\d, 2) {\scriptsize -4};
    \node[separator, text width=8pt] (3) at (2*\d, 3) {\scriptsize -2};
    \draw (0) -- (1);
    \draw[cut-edge] (1) -- (2);
    \draw[cut-edge] (2) -- (3);
    \draw[cut-edge, mygreen] (0) to[bend left] (3);
    \node at (2*\d, -0.5) {5};

    \node[vertex, text width=8pt] (0) at (3*\d, 0) {\scriptsize 7};
    \node[vertex, text width=8pt] (1) at (3*\d, 1) {\scriptsize 6};
    \node[vertex, text width=8pt] (2) at (3*\d, 2) {\scriptsize 4};
    \node[separator, text width=8pt] (3) at (3*\d, 3) {\scriptsize -1};
    \draw (0) -- (1);
    \draw (1) -- (2);
    \draw[cut-edge] (2) -- (3);
    \draw[cut-edge, mygreen] (0) to[bend left] (3);
    \node at (3*\d, -0.5) {1};

    \node[vertex, text width=8pt] (0) at (4*\d, 0) {\scriptsize -1};
    \node[vertex, text width=8pt] (1) at (4*\d, 1) {\scriptsize -2};
    \node[vertex, text width=8pt] (2) at (4*\d, 2) {\scriptsize 3};
    \node[vertex, text width=8pt] (3) at (4*\d, 3) {\scriptsize 1};
    \draw (0) -- (1);
    \draw (1) -- (2);
    \draw (2) -- (3);
    \draw[mygreen] (0) to[bend left] (3);
    \node at (4*\d, -0.5) {0};

    \node[vertex, text width=8pt] (0) at (5*\d, 0) {\scriptsize 6};
    \node[separator, text width=8pt] (1) at (5*\d, 1) {\scriptsize 2};
    \node[vertex, text width=8pt] (2) at (5*\d, 2) {\scriptsize 3};
    \node[vertex, text width=8pt] (3) at (5*\d, 3) {\scriptsize 1};
    \draw[cut-edge] (0) -- (1);
    \draw[cut-edge] (1) -- (2);
    \draw (2) -- (3);
    \draw[cut-edge, mygreen] (0) to[bend left] (3);
    \node at (5*\d, -0.5) {-2};

\end{tikzpicture}
    \caption{Depicted on the left is an instance of the multi-separator problem with respect to a graph, in black, and four interactions, including one non-edge, depicted in green.
    Depicted next to this graph, from left to right, are the iterations of the greedy algorithm defined in \Cref{sec:algorithms}.
    In each iteration, the separator consists of the nodes that are depicted as solid black circles.
    Initially, all nodes are in the separator.
    Separated interactions are depicted as dashed lines.
    The number inside each node is the potential of that node in the respective iteration.
    Written below each graph is the current total cost. 
    It can be seen in this example that the cost decreases with increasing iteration, until all nodes have a non-negative potential.}
    \label{fig:greedy-example}
\end{figure}
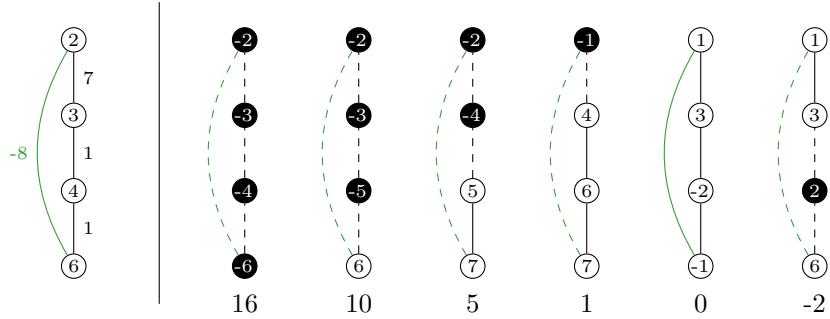

Here, we pursue a different approach and define local search algorithms for the multi-separator problem specifically. 
As local transformations, we consider the insertion and removal of single nodes into and from the separator. 
We perform these transformations greedily, always considering one that decreases the cost function maximally:
For any separator $S \subseteq V$ and any node $v \in V \setminus S$ not in the separator, we let $d(S, v)$ denote the difference in cost that results from the insertion of $v$ into the separator $S$, i.e.
\begin{align}\label{eq:obj-diff}
    d(S, v) =
    \left(
        \sum_{u \in S \cup \{v\}} c_u + \sum_{f \in F(S \cup \{v\})} c_f
    \right)  - \left(
        \sum_{u \in S} c_u + \sum_{f \in F(S)} c_f
    \right) = c_v + \sum_{f \in F(S \cup \{v\}) \setminus F(S)} c_f
    \enspace .
\end{align}
Here, $F(S \cup \{v\}) \setminus F(S)$ is the set of interactions that are separated by $S \cup \{v\}$ but not by $S$. 
In other words, it is the set of interactions not separated by $S$ for which $v$ is a cut-node in the graph obtained by removing the nodes in $S$ from $G$.
With this, we define the \emph{greedy potential} function as
\begin{align*}
    p(S, v) = 
    \begin{cases}
        {\color{white}-} d(S, v) & \text{if } v \notin S \\
        - d(S \setminus \{v\}, v) & \text{if } v \in S
    \end{cases} \enspace .
\end{align*}
This function assigns to each node $v \in V$ the difference in cost that results from inserting $v$ into the separator $S$, if $v \notin S$, or removing $v$ from the separator $S$, if $v \in S$.
With this potential function, we specify a local search algorithm as follows:
Starting from an initial separator $S \subseteq V$, e.g.~$S = V$ or $S=\emptyset$, we enter an infinite loop.
Firstly, we choose a node $v \in \argmin_{u \in V} p(S,u)$ with the most negative potential. 
Then, if $p(S,v) \geq 0$, we terminate.
If $p(S,v) < 0$, we distinguish two cases: If $v \in S$, we remove $v$ from the separator, i.e.~$S \leftarrow S \setminus \{v\}$. If $v \notin S$, we insert $v$ into the separator, i.e.~$S \leftarrow S \cup \{v\}$.
As the total cost decreases strictly with increasing iteration, the algorithm terminates.
An example is depicted in \Cref{fig:greedy-example}.

Algorithms according to this specification need not be practical.
In each iteration, the potentials of all nodes might be computed in order to identify a node with the most negative potential.
Below, we define practical algorithms which are restricted to only one type of local transformation: removal of a single node from the separator, in \Cref{sec:gcg}, and insertion of a single node into the separator, in \Cref{sec:gsg}.
In fact, we treat these cases symmetrically.
This is possible for the multi-separator problem, thanks to \Cref{thm:ms-abs-dom-efficient}, and would not be possible for the lifted multicut-problem, due to \Cref{thm:lmc-abs-dom-hard}.

\subsection{Greedy Separator Shrinking}\label{sec:gcg}
\begin{figure}[t]
    \centering
    \begin{tikzpicture}[baseline=0]
    \node[vertex, text width=8pt] (0) at (0, 0) {\scriptsize 4};
    \node[vertex, text width=8pt] (1) at (1, 0) {\scriptsize -2};
    \node[vertex, text width=8pt] (2) at (2, 0) {\scriptsize 3};
    \node[vertex, text width=8pt] (3) at (0, 1) {\scriptsize 1};
    \node[vertex, text width=8pt] (4) at (1, 1) {\scriptsize 1};
    \node[vertex, text width=8pt] (5) at (2, 1) {\scriptsize -1};
    \node[vertex, text width=8pt] (6) at (0, 2) {\scriptsize -2};
    \node[vertex, text width=8pt] (7) at (1, 2) {\scriptsize 5};
    \node[vertex, text width=8pt] (8) at (2, 2) {\scriptsize -1};
    
    \draw (0) -- (1);
    \draw (1) -- (2);
    \draw (3) -- (4);
    \draw (4) -- (5);
    \draw (6) -- (7);
    \draw (7) -- (8);
    \draw (0) -- (3);
    \draw (3) -- (6);
    \draw (1) -- (4);
    \draw (4) -- (7);
    \draw (2) -- (5);
    \draw (5) -- (8);

    \draw[mygreen] (0) to[bend right] node[below, inner sep=1pt] {\scriptsize 1} (2);
    \draw[mygreen] (0) to[bend left] node[left, inner sep=1pt] {\scriptsize 1} (3);
    \draw[mygreen] (2) to node[pos=0.7, below, inner sep=1pt] {\scriptsize 3} (3);
    \draw[mygreen] (3) to node[pos=0.6, below, inner sep=1pt] {\scriptsize -2} (7);
    \draw[mygreen] (7) to[bend left] node[above, inner sep=1pt] {\scriptsize 2} (8);
\end{tikzpicture}
    \hfill
    \begin{tikzpicture}[baseline=0]
    \node[separator, text width=8pt] (0) at (0, 0) {\scriptsize -4};
    \node[separator, text width=8pt] (1) at (1, 0) {\scriptsize 2};
    \node[separator, text width=8pt] (2) at (2, 0) {\scriptsize -3};
    \node[separator, text width=8pt] (3) at (0, 1) {\scriptsize -1};
    \node[separator, text width=8pt] (4) at (1, 1) {\scriptsize -1};
    \node[separator, text width=8pt] (5) at (2, 1) {\scriptsize 1};
    \node[separator, text width=8pt] (6) at (0, 2) {\scriptsize 2};
    \node[separator, text width=8pt] (7) at (1, 2) {\scriptsize -5};
    \node[separator, text width=8pt] (8) at (2, 2) {\scriptsize 1};
    
    \draw (0) -- (1);
    \draw (1) -- (2);
    \draw (3) -- (4);
    \draw (4) -- (5);
    \draw (6) -- (7);
    \draw (7) -- (8);
    \draw (0) -- (3);
    \draw (3) -- (6);
    \draw (1) -- (4);
    \draw (4) -- (7);
    \draw (2) -- (5);
    \draw (5) -- (8);

    \draw[mygreen] (0) to[bend right] node[below, inner sep=1pt] {\scriptsize 1} (2);
    \draw[mygreen] (0) to[bend left] node[left, inner sep=1pt] {\scriptsize 1} (3);
    \draw[mygreen] (2) to node[pos=0.7, below, inner sep=1pt] {\scriptsize 3} (3);
    \draw[mygreen] (3) to node[pos=0.6, below, inner sep=1pt] {\scriptsize -2} (7);
    \draw[mygreen] (7) to[bend left] node[above, inner sep=1pt] {\scriptsize 2} (8);
\end{tikzpicture}
    \hfill
    \begin{tikzpicture}[baseline=0]
    \node[separator, text width=8pt] (0) at (0, 0) {\scriptsize -4};
    \node[separator, text width=8pt] (1) at (1, 0) {\scriptsize 2};
    \node[separator, text width=8pt] (2) at (2, 0) {\scriptsize -3};
    \node[separator, text width=8pt] (3) at (0, 1) {\scriptsize -1};
    \node[separator, text width=8pt] (4) at (1, 1) {\scriptsize -1};
    \node[separator, text width=8pt] (5) at (2, 1) {\scriptsize 1};
    \node[separator, text width=8pt] (6) at (0, 2) {\scriptsize 2};
    \node[vertex, text width=8pt] (7) at (1, 2) {};
    \node[separator, text width=8pt] (8) at (2, 2) {\scriptsize -1};
    
    \draw (0) -- (1);
    \draw (1) -- (2);
    \draw (3) -- (4);
    \draw (4) -- (5);
    \draw (6) -- (7);
    \draw (7) -- (8);
    \draw (0) -- (3);
    \draw (3) -- (6);
    \draw (1) -- (4);
    \draw (4) -- (7);
    \draw (2) -- (5);
    \draw (5) -- (8);

    \draw[mygreen] (0) to[bend right] node[below, inner sep=1pt] {\scriptsize 1} (2);
    \draw[mygreen] (0) to[bend left] node[left, inner sep=1pt] {\scriptsize 1} (3);
    \draw[mygreen] (2) to node[pos=0.7, below, inner sep=1pt] {\scriptsize 3} (3);
    \draw[mygreen] (3) to node[pos=0.6, below, inner sep=1pt] {\scriptsize -2} (7);
    \draw[mygreen] (7) to[bend left] node[above, inner sep=1pt] {\scriptsize 2} (8);

\end{tikzpicture}
    \hfill
    \begin{tikzpicture}[baseline=0]
    \node[vertex, text width=8pt] (0) at (0, 0) {};
    \node[separator, text width=8pt] (1) at (1, 0) {\scriptsize 2};
    \node[separator, text width=8pt] (2) at (2, 0) {\scriptsize -3};
    \node[separator, text width=8pt] (3) at (0, 1) {\scriptsize -2};
    \node[separator, text width=8pt] (4) at (1, 1) {\scriptsize -1};
    \node[separator, text width=8pt] (5) at (2, 1) {\scriptsize 1};
    \node[separator, text width=8pt] (6) at (0, 2) {\scriptsize 2};
    \node[vertex, text width=8pt] (7) at (1, 2) {};
    \node[separator, text width=8pt] (8) at (2, 2) {\scriptsize -1};
    
    \draw (0) -- (1);
    \draw (1) -- (2);
    \draw (3) -- (4);
    \draw (4) -- (5);
    \draw (6) -- (7);
    \draw (7) -- (8);
    \draw (0) -- (3);
    \draw (3) -- (6);
    \draw (1) -- (4);
    \draw (4) -- (7);
    \draw (2) -- (5);
    \draw (5) -- (8);

    \draw[mygreen] (0) to[bend right] node[below, inner sep=1pt] {\scriptsize 1} (2);
    \draw[mygreen] (0) to[bend left] node[left, inner sep=1pt] {\scriptsize 1} (3);
    \draw[mygreen] (2) to node[pos=0.7, below, inner sep=1pt] {\scriptsize 3} (3);
    \draw[mygreen] (3) to node[pos=0.6, below, inner sep=1pt] {\scriptsize -2} (7);
    \draw[mygreen] (7) to[bend left] node[above, inner sep=1pt] {\scriptsize 2} (8);
\end{tikzpicture}

    \begin{tikzpicture}[baseline=0]
    \node[vertex, text width=8pt] (0) at (0, 0) {};
    \node[separator, text width=8pt] (1) at (1, 0) {\scriptsize 1};
    \node[vertex, text width=8pt] (2) at (2, 0) {};
    \node[separator, text width=8pt] (3) at (0, 1) {\scriptsize -2};
    \node[separator, text width=8pt] (4) at (1, 1) {\scriptsize -1};
    \node[separator, text width=8pt] (5) at (2, 1) {\scriptsize 1};
    \node[separator, text width=8pt] (6) at (0, 2) {\scriptsize 2};
    \node[vertex, text width=8pt] (7) at (1, 2) {};
    \node[separator, text width=8pt] (8) at (2, 2) {\scriptsize -1};
    
    \draw (0) -- (1);
    \draw (1) -- (2);
    \draw (3) -- (4);
    \draw (4) -- (5);
    \draw (6) -- (7);
    \draw (7) -- (8);
    \draw (0) -- (3);
    \draw (3) -- (6);
    \draw (1) -- (4);
    \draw (4) -- (7);
    \draw (2) -- (5);
    \draw (5) -- (8);

    \draw[mygreen] (0) to[bend right] node[below, inner sep=1pt] {\scriptsize 1} (2);
    \draw[mygreen] (0) to[bend left] node[left, inner sep=1pt] {\scriptsize 1} (3);
    \draw[mygreen] (2) to node[pos=0.7, below, inner sep=1pt] {\scriptsize 3} (3);
    \draw[mygreen] (3) to node[pos=0.6, below, inner sep=1pt] {\scriptsize -2} (7);
    \draw[mygreen] (7) to[bend left] node[above, inner sep=1pt] {\scriptsize 2} (8);
\end{tikzpicture}
    \hfill
    \begin{tikzpicture}[baseline=0]
    \node[vertex, text width=8pt] (03) at (0, 0.5) {};
    \node[separator, text width=8pt] (1) at (1, 0) {\scriptsize -2};
    \node[vertex, text width=8pt] (2) at (2, 0) {};
    \node[separator, text width=8pt] (4) at (1, 1) {\scriptsize 1};
    \node[separator, text width=8pt] (5) at (2, 1) {\scriptsize 1};
    \node[separator, text width=8pt] (6) at (0, 2) {\scriptsize 4};
    \node[vertex, text width=8pt] (7) at (1, 2) {};
    \node[separator, text width=8pt] (8) at (2, 2) {\scriptsize -1};
    
    \draw (03) -- (1);
    \draw (1) -- (2);
    \draw (03) -- (4);
    \draw (4) -- (5);
    \draw (6) -- (7);
    \draw (7) -- (8);
    \draw (03) -- (6);
    \draw (1) -- (4);
    \draw (4) -- (7);
    \draw (2) -- (5);
    \draw (5) -- (8);

    \draw[mygreen] (03) to[bend left=10] node[pos=0.7, above, inner sep=1pt] {\scriptsize 4} (2);
    \draw[mygreen] (03) to node[pos=0.6, below, inner sep=1pt] {\scriptsize -2} (7);
    \draw[mygreen] (7) to[bend left] node[above, inner sep=1pt] {\scriptsize 2} (8);
\end{tikzpicture}
    \hfill
    \begin{tikzpicture}[baseline=0]
    \node[vertex, text width=8pt] (0123) at (0.75, 0.25) {};
    \node[separator, text width=8pt] (4) at (1, 1) {\scriptsize 1};
    \node[separator, text width=8pt] (5) at (2, 1) {\scriptsize 1};
    \node[separator, text width=8pt] (6) at (0, 2) {\scriptsize 4};
    \node[vertex, text width=8pt] (7) at (1, 2) {};
    \node[separator, text width=8pt] (8) at (2, 2) {\scriptsize -1};
    
    \draw (0123) -- (4);
    \draw (4) -- (5);
    \draw (6) -- (7);
    \draw (7) -- (8);
    \draw (0123) -- (6);
    \draw (0123) -- (4);
    \draw (4) -- (7);
    \draw (0123) -- (5);
    \draw (5) -- (8);

    \draw[mygreen] (0123) to[bend left=20] node[pos=0.7, left, inner sep=1pt] {\scriptsize -2} (7);
    \draw[mygreen] (7) to[bend left] node[above, inner sep=1pt] {\scriptsize 2} (8);
\end{tikzpicture}
    \hfill
    \begin{tikzpicture}[baseline=0]
    \node[vertex, text width=8pt] (0123) at (0.75, 0.25) {};
    \node[separator, text width=8pt] (4) at (1, 1) {\scriptsize 1};
    \node[separator, text width=8pt] (5) at (2, 1) {\scriptsize 3};
    \node[separator, text width=8pt] (6) at (0, 2) {\scriptsize 4};
    \node[vertex, text width=8pt] (78) at (1.5, 2) {};
    
    \draw (0123) -- (4);
    \draw (4) -- (5);
    \draw (6) -- (78);
    \draw (0123) -- (6);
    \draw (0123) -- (4);
    \draw (4) -- (78);
    \draw (0123) -- (5);
    \draw (5) -- (78);

    \draw[mygreen] (0123) to[bend left=40] node[pos=0.6, left, inner sep=1pt] {\scriptsize -2} (78);
\end{tikzpicture}
    \caption{Depicted on the top left is an instance of the multi-separator problem consisting of a graph (black) and a set of interactions (green).
    Each node and every interaction has a cost.
    The second graph on the top depicts the initial feasible solution of \Cref{algo:gss}, i.e.~all nodes are in the separator, and the potential of each node is precisely the negative of its cost.
    The subsequent graphs illustrate the iterations of \Cref{algo:gss}. 
    In each iteration, one node is removed from the separator, all neighboring nodes that are not in the separator are contracted, and the potentials of all nodes in the separator are updated.
    The algorithm terminates after six iterations, as all nodes in the separator have a non-negative potential.}
    \label{fig:gss}
\end{figure}
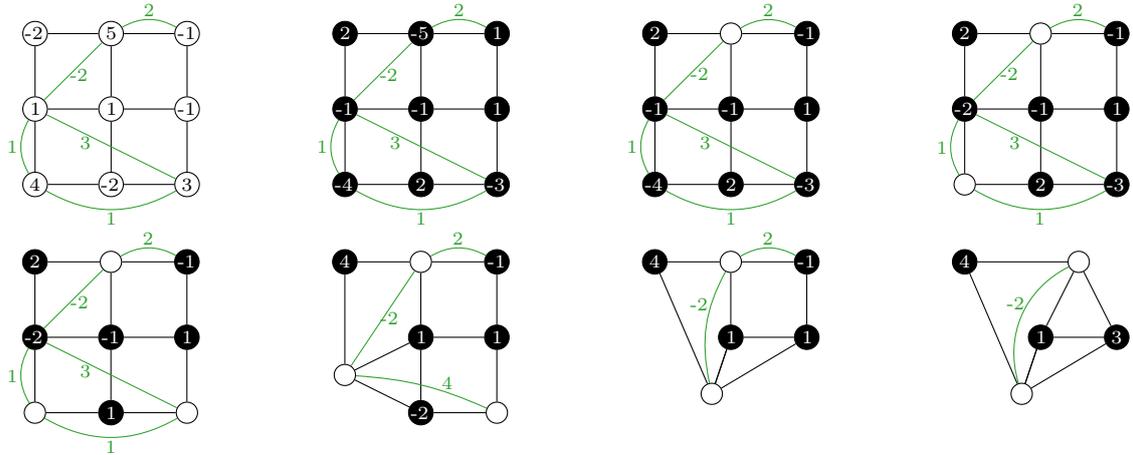

In this section, we discuss \Cref{algo:gss}, a local search algorithm we call \emph{greedy separator shrinking} (GSS) that starts with the separator $S = V$ containing all nodes and, in every iteration, removes from the separator one node so as to reduce the cost maximally.
The operations of this algorithm are shown for one example in \Cref{fig:gss}.

The algorithm maintains two graphs, $(V^t, E^t)$ and $(V^t, F^t)$ with edge costs $c^t: F^t \to \R$, a separator $S^t$, and a function $p^t \colon S^t \to \mathbb{R}$. 
Initially, $S^0 = V^0 = V$ and $E^0 = E$ and $F^0 = F$ and $p^0_v = -c_v$ for all $v \in V$. 
In every iteration $t$, the set $V^t$ contains all nodes in the separator $S^t$ and, in addition, all components of the subgraph of $(V, E)$ induced by $V \setminus S^t$.
When a node $v$ is removed from the separator (\Cref{line:gss-shrink-separator}) the subsequent graphs $(V^{t+1}, E^{t+1})$ and $(V^{t+1}, F^{t+1})$ are obtained by contracting in $(V^t, E^t)$ and $(V^t, F^t)$ the set containing $v$ and all neighbors of $v$ that are not elements of the separator into a new node $v'$ (\Cref{line:gss-merged-component} to \Cref{line:gss-interaction-contraction}).
The contraction may result in parallel edges which are merged into one edge (\Cref{line:gss-edge-contraction}). 
Analogously, parallel interactions are merged into one interaction whose costs is the sum of the costs of the merged interactions (\Cref{line:gss-interaction-contraction} to \Cref{line:gss-compute-contracted-cost}).
Subsequently, the potentials of the nodes in $S^{t+1}$ are computed with respect to the contracted graphs $(V^{t+1}, E^{t+1})$, $(V^{t+1},F^{t+1})$, and $c^{t-1}$ (\Cref{line:gss-update-potentials-loop} to \Cref{line:gss-unchanged-potential}).
By definition of the potential in \eqref{eq:obj-diff}, the potentials of all nodes that are not adjacent to the new component remain unchanged (\Cref{line:gss-unchanged-potential}).
The potentials of the nodes adjacent to $v'$ are recomputed according to \eqref{eq:obj-diff} (\Cref{line:gss-recompute-potential}).

Let us compare greedy separator shrinking (GSS) for the multi-separator problem to greedy additive edge contraction (GAEC) as defined by \citet{keuper2015efficient} for the (lifted) multicut problem:
In each iteration of GSS, one node is removed from the separator and thus, an unconstrained number of components become connected to form one component.
In each iteration of GAEC, one edge is contracted and thus, precisely two components are joined to become one component. 
Moreover, GSS removes the node with the most negative potential, while GAEC contracts the edge with the largest positive cost.
In GSS, the computation of the node potentials of the graph obtained by removing one node from the separator requires the operations in \Cref{line:gss-update-potentials-loop} to \Cref{line:gss-unchanged-potential} that are illustrated also in \Cref{fig:gss-potential-update}.
In GAEC, the computation of the edge costs of the graph obtained by contracting an edge consists in simply summing the costs of parallel edges.

\begin{figure}
    \centering
    \begin{tikzpicture}
    \node[vertex, inner sep=1pt] (0) at (0, 0) {\scriptsize 1};
    \node[separator, inner sep=1pt] (1) at (1, 0) {\scriptsize 2};
    \node[vertex, inner sep=1pt] (2) at (2, 0) {\scriptsize 3};
    \node[separator, inner sep=1pt] (3) at (3, 0) {\scriptsize 4};
    \node[vertex, inner sep=1pt] (4) at (4, 0) {\scriptsize 5};

    \draw (0) -- (1);
    \draw (1) -- (2);
    \draw (2) -- (3);
    \draw (3) -- (4);
    \draw[mygreen] (0) to[bend left] (1);
    \draw[mygreen] (0) to[bend left] (2);
    \draw[mygreen] (0) to[bend left] (3);
    \draw[mygreen] (0) to[bend left] (4);
\end{tikzpicture}
    \caption{Depicted above are a graph (in black) and a set of interactions (in green). 
    Let $S^t=\{2, 4\}$ be the separator at iteration $t$ of the GSS algorithm.
    The current potential of Node $2$ is $p^t_2 = -c_2 - c_{\{1,2\}} - c_{\{1, 3\}}$.
    If Node $4$ is removed from the separator in iteration $t$, the potential of Node $2$ becomes 
    $p^{t+1}_2 = -c_2 - c_{\{1,2\}} - c_{\{1, 3\}} - c_{\{1, 4\}} - c_{\{1, 5\}}$.}
    \label{fig:gss-potential-update}.
\end{figure}
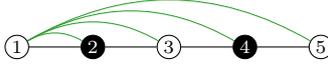

\begin{lemma}\label{lem:gss-worst-case-runtime}
    \Cref{algo:gss} has a worst case time complexity of $\mathcal{O}(|V|^2 \log|V| \, d^2)$ where $d$ is the degree of the input graph $G = (V,E)$.
\end{lemma}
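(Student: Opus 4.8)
The plan is to bound the number of iterations and the per-iteration work separately and then multiply. First I would bound the number of iterations: \Cref{algo:gss} starts from $S^0 = V$, removes exactly one node from the separator in each iteration, and never re-inserts a node, so the separator strictly shrinks and the algorithm performs at most $|V|$ iterations. I would then fix the data structures underlying the cost of a single iteration: the contracted graphs $(V^t,E^t)$ and $(V^t,F^t)$ stored with incidence structures that allow $\mathcal{O}(1)$ interaction lookups, together with a binary heap of the separator nodes keyed by their current potential $p^t$. With these, one iteration consists of (i) extracting the node $v$ of minimum potential in $\mathcal{O}(\log|V|)$, (ii) contracting $v$ with its adjacent non-separator components into $v'$ while merging parallel edges and interactions, and (iii) recomputing the potentials of the separator nodes adjacent to $v'$ and updating their heap keys.

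The crux is bounding the cost of the potential recomputations. I would first observe that in the edge graph two distinct components are never adjacent (an edge between them would have merged them), so every neighbor of $v'$ is a separator node; moreover each separator node $u$ is a single original node and hence has at most $d$ distinct neighbors in $(V^t,E^t)$, so $u$ is adjacent to at most $d$ components. By the definition of the potential in \eqref{eq:obj-diff} (illustrated in \Cref{fig:gss-potential-update}), $p^t(u)$ is determined by $c_u$ together with the interactions that $u$ separates, which are exactly the merged interactions among the at most $d$ components adjacent to $u$ and those incident to $u$ itself; this is at most $\binom{d}{2}+d = \mathcal{O}(d^2)$ terms, each retrieved in $\mathcal{O}(1)$. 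Hence a single recomputation costs $\mathcal{O}(d^2)$, and the accompanying heap update costs $\mathcal{O}(\log|V|)$. Since the number of separator nodes adjacent to $v'$ is at most $|V|$, step (iii) costs $\mathcal{O}(|V|(d^2+\log|V|)) = \mathcal{O}(|V|\,d^2\log|V|)$ per iteration. I would argue that the contraction in step (ii) is dominated by this: merging the smaller incidence list into the larger (union by size) relocates each interaction only $\mathcal{O}(\log|V|)$ times over the whole run, for a total of $\mathcal{O}(|F|\log|V|) = \mathcal{O}(|V|^2\log|V|)$, which is absorbed. Multiplying the per-iteration bound by the $\mathcal{O}(|V|)$ iterations yields $\mathcal{O}(|V|^2\,d^2\log|V|)$.

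The main obstacle is the second paragraph: establishing that each potential recomputation costs only $\mathcal{O}(d^2)$. This hinges on two invariants that I would verify carefully. The first is that separator nodes retain edge-degree at most $d$ throughout the run, which follows because a separator node is always a single original node and the merging of parallel edges can only decrease its degree. The second is the locality of the potential, namely that $p^t(u)$ depends only on interactions among the constantly-many (at most $d$) components adjacent to $u$, so that the contracted interaction graph $(V^t,F^t)$ exposes exactly the data needed as single edges with summed costs. The remaining care lies in the bookkeeping for parallel-interaction merging and for keeping the interaction lookups $\mathcal{O}(1)$, which I would handle through the union-by-size amortization sketched above.
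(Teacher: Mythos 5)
Your proof follows essentially the same approach as the paper's: at most $|V|$ iterations, each dominated by $\mathcal{O}(|V|)$ potential recomputations, each of which only needs to inspect the $\mathcal{O}(d^2)$ node pairs inside the neighborhood set $N$ of size at most $d+1$ --- the key invariant being, exactly as in the paper, that separator nodes are never involved in contractions and hence retain degree at most $d$, while only the (possibly high-degree) component nodes require careful interaction lookups. The one caveat is your claim of \emph{worst-case} $\mathcal{O}(1)$ interaction lookups (hashing gives only expected time), but this is immaterial: substituting the paper's $\mathcal{O}(\log|V|)$-time sorted-array lookups --- a factor your arithmetic already carries through the heap updates --- yields the same bound $\mathcal{O}(|V|^2 \, d^2 \log|V|)$.
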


\begin{proof}
    For $v \in V$, let $\deg_{(V,E)}(v)$ be the degree of $v$ in the graph $(V,E)$.
    Let $d = \max_{v \in V} \deg_{(V,E)}(v)$ be the degree of the graph $(V,E)$.
    Observe for all $t$ and all $v \in S^t$ that $\deg_{(V^t,E^t)}(v) \leq \deg_{(V,E)}(v)$ because the contractions that lead to the graph $(V^t,E^t)$ only involve nodes in $V^t \setminus S^t$.
    In contrast, the degree of the nodes in $V^t \setminus S^t$ is not bounded in such a way; in fact, it is in $\mathcal{O}(n)$.
    The same observation holds true for the interaction graphs $(V,F)$ and $(V^t,F^t)$.

    The algorithm terminates after at most $|V|$ iterations.
    The runtime of each iteration is dominated by the time for recomputing the potentials of the nodes that are adjacent to the newly formed component (\Cref{line:gss-recompute-potential}).
    By the above observation, the number of nodes for which the potential needs to be recomputed is $\mathcal{O}(|V|)$.
    Furthermore, the degree of each node $u \in S^{t+1}$ is bounded by $d$.
    Therefore, the size of the set $N$ (\Cref{line:gss-compute-neighborhood}) that contains $u$ and the neighbors of $u$ in $(V^{t+1}, E^{t+1})$ that are not in the separator is bounded by $d + 1$.
    To compute the potential in \Cref{line:gss-recompute-potential}, all interactions between any two nodes in $N$ need to be identified.
    As the set $N$ contains nodes that are not in the separator, the degree of these nodes is not bounded by $d$ but is instead in $\mathcal{O}(n)$.
    For each of the nodes $w \in N$, it needs to be checked if any of the other nodes $w' \in N \setminus \{w\}$ is adjacent to $w$ in the interaction graph $(V^{t+1},F^{t+1})$.
    By maintaining for each node a reference to all its neighbors in the interaction graph in a sorted array, this can be checked in time $\mathcal{O}(\log|V|)$.
    Consequently, all interactions in $\{f \in F^{t+1} \mid f \subseteq N\}$ can be identified in time $\mathcal{O}(d^2\log|V|)$.
\end{proof}

The \textsc{c++} implementation of \Cref{algo:gss} in the supplement utilizes a priority queue of all nodes in the separator where the node with the smallest potential has the highest priority.
In order to process updates of priorities (increases and decreases), we store a version number for each node and define as entries of the priority queue pairs consisting of a node and a version number.
Initially, the version numbers of all nodes are zero.
Whenever the priority of a node changes, we increment the version number associated with that node and insert into the priority queue an additional element consisting of that node and the updated version number.
This facilitates updates of the potential of a node in constant time in \Cref{line:gss-recompute-potential}.
Whenever we pop the highest-priority node from the queue (\Cref{line:gss-get-argmin}), we check whether its version number is the highest ever associated with that node. 
If so, we process the node.
Otherwise, we discard that entry and pop the next.
This implementation does not improve over the worst case time complexity of \Cref{lem:gsg-worstcase-runtime}.
Measurements of the absolute runtime for specific instances are reported in \Cref{fig:runtime-analysis}, \Cref{sec:experiments}.

\begin{algorithm}\small
    \DontPrintSemicolon
    \SetAlgoNoEnd
    \LinesNumbered
    \KwData{Graph $G=(V,E)$, interactions $F \subseteq \binom{V}{2}$, costs $c:V \cup F \to \R$}
    \KwResult{separator $S \subseteq V$}
    $V^0 := V$, $E^0 := E$, $F^0 := F$, $c^0 := c$, $S^0 := V$ \;
    $p^0_v := -c^0_v \; \forall v \in V^0$ \;
    $t := 0$\;
    \While{$S^t \neq \emptyset$}{
        $v \in \argmin_{s \in S^t} p^t_s$\; \label{line:gss-get-argmin}
        \If{$p^t_v > 0$}
        {
            \textbf{break}\;
        }
        $S^{t+1} := S^t \setminus \{v\}$\; \label{line:gss-shrink-separator}
        $C := \{v\} \cup \{u \in V^t \mid \{u,v\} \in E^t \text{ and } u \notin S \}$ \; \label{line:gss-merged-component}
        create new node $v'$\;
        $V^{t+1} := V^t \setminus C \cup \{v'\}$\;
        $E^{t+1} := E^t \setminus \{\{u, w\} \in E^t \mid u \in C \text{ or } w \in C\} \cup \{\{v',u\} \mid \exists w \in C: \{u,w\} \in E^t \}$\; \label{line:gss-edge-contraction}
        $F^{t+1} := F^t \setminus \{\{u, w\} \in F^t \mid u \in C \text{ or } w \in C\} \cup \{\{v',u\} \mid \exists w \in C: \{u,w\} \in F^t \}$\; \label{line:gss-interaction-contraction}
        \For{$\{u,w\} \in F^{t+1}$}{ \label{line:gss-contracted-costs-loop}
            \If{$v'\notin \{u,w\}$}{
                $c^{t+1}_{\{u,w\}} := c^t_{\{u,w\}}$\;
            }
            \Else{
                $c^{t+1}_{\{v',w\}} := \sum_{u \in C: \{u,w\} \in F^t} c^t_{\{u,w\}}$\;
                \label{line:gss-compute-contracted-cost}
            }
        }
        \For{$u \in S^{t+1}$}{ \label{line:gss-update-potentials-loop}
            \If{$\{v',u\} \in E^{t+1}$}{
                $N := \{u\} \cup \{w \in V^{t+1} \setminus S^{t+1} \mid \{u,w\} \in E^{t+1}\}$\; \label{line:gss-compute-neighborhood}
                $p^{t+1}_u := -c_u - \sum_{f \in F^{t+1}, f \subseteq N} c_f$ \; \label{line:gss-recompute-potential}
            }
            \Else{
                $p^{t+1}_u := p^t_u$\; \label{line:gss-unchanged-potential}
            }   
        }
        $t := t+1$\;
    }
    \KwRet{$S^t$}
    \caption{Greedy Component Shrinking (GSS)}
    \label{algo:gss}
\end{algorithm}
\subsection{Greedy Separator Growing}\label{sec:gsg}
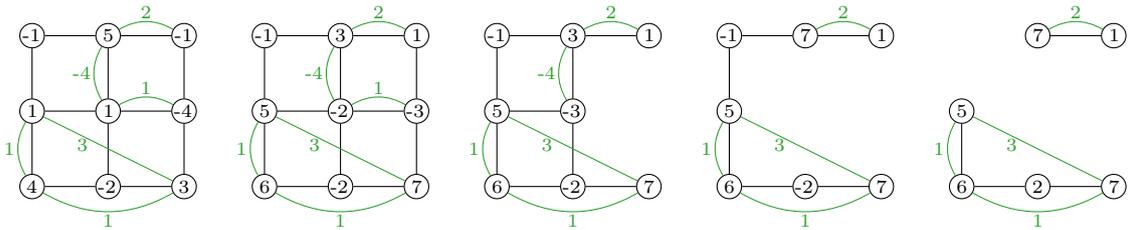
\begin{figure}
    \centering
    \begin{tikzpicture}[baseline=0]
    \node[vertex, text width=8pt] (0) at (0, 0) {\scriptsize 4};
    \node[vertex, text width=8pt] (1) at (1, 0) {\scriptsize -2};
    \node[vertex, text width=8pt] (2) at (2, 0) {\scriptsize 3};
    \node[vertex, text width=8pt] (3) at (0, 1) {\scriptsize 1};
    \node[vertex, text width=8pt] (4) at (1, 1) {\scriptsize 1};
    \node[vertex, text width=8pt] (5) at (2, 1) {\scriptsize -4};
    \node[vertex, text width=8pt] (6) at (0, 2) {\scriptsize -1};
    \node[vertex, text width=8pt] (7) at (1, 2) {\scriptsize 5};
    \node[vertex, text width=8pt] (8) at (2, 2) {\scriptsize -1};
    
    \draw (0) -- (1);
    \draw (1) -- (2);
    \draw (3) -- (4);
    \draw (4) -- (5);
    \draw (6) -- (7);
    \draw (7) -- (8);
    \draw (0) -- (3);
    \draw (3) -- (6);
    \draw (1) -- (4);
    \draw (4) -- (7);
    \draw (2) -- (5);
    \draw (5) -- (8);

    \draw[mygreen] (0) to[bend right] node[below, inner sep=1pt] {\scriptsize 1} (2);
    \draw[mygreen] (0) to[bend left] node[left, inner sep=1pt] {\scriptsize 1} (3);
    \draw[mygreen] (2) to node[pos=0.7, below, inner sep=1pt] {\scriptsize 3} (3);
    \draw[mygreen] (4) to[bend left] node[above, inner sep=1pt] {\scriptsize 1} (5);
    \draw[mygreen] (4) to[bend left] node[left, inner sep=1pt] {\scriptsize -4} (7);
    \draw[mygreen] (7) to[bend left] node[above, inner sep=1pt] {\scriptsize 2} (8);
\end{tikzpicture}
    \hfill
    \begin{tikzpicture}[baseline=0]
    \node[vertex, text width=8pt] (0) at (0, 0) {\scriptsize 6};
    \node[vertex, text width=8pt] (1) at (1, 0) {\scriptsize -2};
    \node[vertex, text width=8pt] (2) at (2, 0) {\scriptsize 7};
    \node[vertex, text width=8pt] (3) at (0, 1) {\scriptsize 5};
    \node[vertex, text width=8pt] (4) at (1, 1) {\scriptsize -2};
    \node[vertex, text width=8pt] (5) at (2, 1) {\scriptsize -3};
    \node[vertex, text width=8pt] (6) at (0, 2) {\scriptsize -1};
    \node[vertex, text width=8pt] (7) at (1, 2) {\scriptsize 3};
    \node[vertex, text width=8pt] (8) at (2, 2) {\scriptsize 1};
    
    \draw (0) -- (1);
    \draw (1) -- (2);
    \draw (3) -- (4);
    \draw (4) -- (5);
    \draw (6) -- (7);
    \draw (7) -- (8);
    \draw (0) -- (3);
    \draw (3) -- (6);
    \draw (1) -- (4);
    \draw (4) -- (7);
    \draw (2) -- (5);
    \draw (5) -- (8);

    \draw[mygreen] (0) to[bend right] node[below, inner sep=1pt] {\scriptsize 1} (2);
    \draw[mygreen] (0) to[bend left] node[left, inner sep=1pt] {\scriptsize 1} (3);
    \draw[mygreen] (2) to node[pos=0.7, below, inner sep=1pt] {\scriptsize 3} (3);
    \draw[mygreen] (4) to[bend left] node[above, inner sep=1pt] {\scriptsize 1} (5);
    \draw[mygreen] (4) to[bend left] node[left, inner sep=1pt] {\scriptsize -4} (7);
    \draw[mygreen] (7) to[bend left] node[above, inner sep=1pt] {\scriptsize 2} (8);
\end{tikzpicture}
    \hfill
    \begin{tikzpicture}[baseline=0]
    \node[vertex, text width=8pt] (0) at (0, 0) {\scriptsize 6};
    \node[vertex, text width=8pt] (1) at (1, 0) {\scriptsize -2};
    \node[vertex, text width=8pt] (2) at (2, 0) {\scriptsize 7};
    \node[vertex, text width=8pt] (3) at (0, 1) {\scriptsize 5};
    \node[vertex, text width=8pt] (4) at (1, 1) {\scriptsize -3};
    \node[vertex, text width=8pt] (6) at (0, 2) {\scriptsize -1};
    \node[vertex, text width=8pt] (7) at (1, 2) {\scriptsize 3};
    \node[vertex, text width=8pt] (8) at (2, 2) {\scriptsize 1};
    
    \draw (0) -- (1);
    \draw (1) -- (2);
    \draw (3) -- (4);
    \draw (6) -- (7);
    \draw (7) -- (8);
    \draw (0) -- (3);
    \draw (3) -- (6);
    \draw (1) -- (4);
    \draw (4) -- (7);

    \draw[mygreen] (0) to[bend right] node[below, inner sep=1pt] {\scriptsize 1} (2);
    \draw[mygreen] (0) to[bend left] node[left, inner sep=1pt] {\scriptsize 1} (3);
    \draw[mygreen] (2) to node[pos=0.7, below, inner sep=1pt] {\scriptsize 3} (3);
    \draw[mygreen] (4) to[bend left] node[left, inner sep=1pt] {\scriptsize -4} (7);
    \draw[mygreen] (7) to[bend left] node[above, inner sep=1pt] {\scriptsize 2} (8);
\end{tikzpicture}
    \hfill
    \begin{tikzpicture}[baseline=0]
    \node[vertex, text width=8pt] (0) at (0, 0) {\scriptsize 6};
    \node[vertex, text width=8pt] (1) at (1, 0) {\scriptsize -2};
    \node[vertex, text width=8pt] (2) at (2, 0) {\scriptsize 7};
    \node[vertex, text width=8pt] (3) at (0, 1) {\scriptsize 5};
    \node[vertex, text width=8pt] (6) at (0, 2) {\scriptsize -1};
    \node[vertex, text width=8pt] (7) at (1, 2) {\scriptsize 7};
    \node[vertex, text width=8pt] (8) at (2, 2) {\scriptsize 1};
    
    \draw (0) -- (1);
    \draw (1) -- (2);
    \draw (6) -- (7);
    \draw (7) -- (8);
    \draw (0) -- (3);
    \draw (3) -- (6);

    \draw[mygreen] (0) to[bend right] node[below, inner sep=1pt] {\scriptsize 1} (2);
    \draw[mygreen] (0) to[bend left] node[left, inner sep=1pt] {\scriptsize 1} (3);
    \draw[mygreen] (2) to node[pos=0.7, below, inner sep=1pt] {\scriptsize 3} (3);
    \draw[mygreen] (7) to[bend left] node[above, inner sep=1pt] {\scriptsize 2} (8);
\end{tikzpicture}
    \hfill
    \begin{tikzpicture}[baseline=0]
    \node[vertex, text width=8pt] (0) at (0, 0) {\scriptsize 6};
    \node[vertex, text width=8pt] (1) at (1, 0) {\scriptsize 2};
    \node[vertex, text width=8pt] (2) at (2, 0) {\scriptsize 7};
    \node[vertex, text width=8pt] (3) at (0, 1) {\scriptsize 5};
    \node[vertex, text width=8pt] (7) at (1, 2) {\scriptsize 7};
    \node[vertex, text width=8pt] (8) at (2, 2) {\scriptsize 1};
    
    \draw (0) -- (1);
    \draw (1) -- (2);
    \draw (7) -- (8);
    \draw (0) -- (3);

    \draw[mygreen] (0) to[bend right] node[below, inner sep=1pt] {\scriptsize 1} (2);
    \draw[mygreen] (0) to[bend left] node[left, inner sep=1pt] {\scriptsize 1} (3);
    \draw[mygreen] (2) to node[pos=0.7, below, inner sep=1pt] {\scriptsize 3} (3);
    \draw[mygreen] (7) to[bend left] node[above, inner sep=1pt] {\scriptsize 2} (8);
\end{tikzpicture}
    \caption{Depicted on the left is an instance of the multi-separator problem consisting of a graph (black) and a set of interactions (green).
    Each node and every interaction has a costs.
    The second graph depicts the starting solution of \Cref{algo:gsg}, i.e.~no nodes are in the separator, and all nodes form one large component.
    The potential of each node is the cost of that node plus the sum of the costs of all interactions adjacent to that node.
    The subsequent graphs illustrate the iterations of \Cref{algo:gsg}. 
    In each iteration, a node with minimal negative potential is added to the separator, i.e.~deleted from the graph, and the potentials of the remaining nodes are updated.
    The algorithm terminates after the three iterations, as all remaining nodes have non-negative potential.
    In the last iteration, the node in the bottom center has the most negative potential of $-2$. 
    However, it is a cut-node and would separate the two interactions adjacent to the bottom right node.
    Therefore, its potential is updated to $2=-2 + 1 + 3$.
    Now, the node with the most negative potential is the top left node, which is deleted from the graph.}
    \label{fig:gsg}
\end{figure}

In this section, we discuss \Cref{algo:gsg}, a local search algorithm we call \emph{greedy separator growing} (GSG) that starts with the separator $S=\emptyset$ being empty and, in every iteration, adds to the separator one node so as to reduce the cost maximally.
The operations of this algorithm are shown for one example in \Cref{fig:gsg}.

Before describing the algorithm in detail, we discuss one property informally:
By adding a node to the separator $S$, other nodes that are not in the separator can become cut-nodes of the graph induced by $V \setminus S$.
According to \eqref{eq:obj-diff}, the potential of a node $v \notin S$ is the cost of the node $c_v$ plus the costs of all interactions that are separated by $S \cup \{v\}$ and not by $S$.
If $v$ is not a cut-node, then the set of interactions that are separated by $S \cup \{v\}$ but not by $S$ is a subset of the interactions that are adjacent to $v$. 
However, if $v$ is a cut-node, then the set of interactions that are separated by $S \cup \{v\}$ but not by $S$ can contain additional interactions that are not adjacent to $v$.
As identifying cut-nodes and those interactions that would be separated by adding a cut-node to the separator is computationally expensive \citep{hopcroft1973algorithm}, the GSG algorithm checks if a node is a cut-node only before this node is added to the separator.
If it is a cut-node, then the potential of that node is recomputed by identifying all interactions that would be separated if the node was added to the separator.
By this strategy, the true potential of a node can be smaller than the potential that is known to the algorithm.
As a result, a node $v$ can be added to the separator even though there exists another node $u$ whose potential is strictly less than that of $v$, but the true potential of $u$ is not known to the algorithm.
Yet, in the special case where the costs of all interactions that are not edges in $G$, have non-negative cost, the true potential of a node cannot be smaller than the potential that is known to the algorithm. 
Therefore, in that special case, the algorithm always adds to the separator one node that reduces the cost maximally.

More specifically, GSG works as described below and as illustrated for one example in \Cref{fig:gsg}.
The algorithm maintains an induced subgraph $(V^t, E^t)$ of the input graph $(V,E)$, a subset of interactions $F^t \subseteq F$, a function $p^t: V^t \to \R$, and a map $\cn^t: F^t \to 2^{V^t}$ from each $f \in F^t$ to the subset of nodes in $V^t$ that are identified as $f$-cut-nodes in $(V^t,E^t)$.
Initially, $V^0 = V$, $E^0 = E$, $F^0 = F$, $p^0_v := c_v + \sum_{\{u,v\} \in F} c_{\{u,v\}}$ for all $v \in V$, and $\cn^0_{\{u,v\}} = \{u, v\}$ for all $\{u,v\} \in F^0$.
In every iteration $t$, the set $V^t$ contains all nodes that are not in the separator, the set $E^t$ contains the edges of the subgraph of $(V,E)$ induced by $V^t$, and the set $F^t$ contains all interactions that are not separated by $V \setminus V^t$ in $(V, E)$.
Note that the potential $p^0_v$ as computed in \Cref{line:gsg-initial-potential} does not account for the interactions that are not adjacent to $v$, for which $v$ is a cut-node.
Also, the set $\cn^0_f$ only accounts for the cut-nodes of $f$ that are adjacent to $f$.
Once a node is selected to be added to the separator, i.e.~deleted from $V^t$ (\Cref{line:gsg-select-best-potential}), the potential of that node is recomputed according to \eqref{eq:obj-diff} (\Cref{line:gsg-recompute-potential}).
For the separator $S = V \setminus V^t$, we have $F(S \cup \{v\}) \setminus F(S) = F^t(\{v\})$, i.e.~the set of interactions in $F$ that are separated by $S \cup \{v\}$ but not by $S$ in $(V,E)$ is equal to the set of interactions in $F^t$ that are separated by $\{v\}$ in $(V^t, E^t)$.
The set $F^t(\{v\})$ is computed by first computing the components of the subgraph of $(V^t,E^t)$ induced by $V^t \setminus \{v\}$ and then selecting all interactions in $F^t$ whose endpoints lie in distinct components.
This takes linear time.
For all interactions $f \in F^t(\{v\})$, the node $v$ is added to the set $\cn^t_f$ of nodes that have been identified as $f$-cut-nodes in $(V^t,E^t)$ (\Cref{line:gsg-update-cut-nodes}).
If the recomputed potential of $v$ is positive, or if $v$ is no longer the node with the smallest potential, a new node with smallest potential is selected (\Cref{line:gsg-continue}).
Otherwise, the node $v$ is added to the separator, i.e.~deleted from the graph (\Cref{line:gsg-delete-node} to \Cref{line:gsg-delete-interactions}).
All interactions $F^t(\{v\})$ that are separated by deleting $v$ from the graph can no longer be separated by deleting any other node from the graph.
The potentials of all nodes that have been identified as cut-nodes for any of the interactions in $F^t(\{v\})$ are updated accordingly (\Cref{line:gsg-copy-potential} to \Cref{line:gsg-update-potential}).
For all interactions $f \in F^{t+1}$, the nodes that have been identified as $f$-cut-nodes remain unchanged (\Cref{line:gsg-copy-cut-nodes}).

\begin{algorithm}[t]\small
    \DontPrintSemicolon
    \SetAlgoNoEnd
    \LinesNumbered
    \KwData{Connected graph $G=(V,E)$, interactions $F \subseteq \binom{V}{2}$, costs $c:V \cup F \to \R$ with $c_f \geq 0$ for $f \in F \setminus E$}
    \KwResult{separator $S \subseteq V$}
    $V^0 := V$, $E^0 := E$, $F^0 := F$ \;
    $p^0_v := c_v + \sum_{\{u,v\} \in F} c_{\{u,v\}} \; \forall v \in V^0$\; \label{line:gsg-initial-potential}
    $\cn^0_{\{u,v\}} := \{u, v\} \; \forall \{u,v\} \in F^0$\;
    $t := 0$\;
    \While{$V^t \neq \emptyset$}{
        $v \in \argmin_{u \in V^t} p^t_u$\; \label{line:gsg-select-best-potential}
        \If{$p^t_v > 0$}
        {
            \textbf{break}\;
        }
        $p^t_v := c_v + \sum_{f \in F^t(\{v\})} c_f$\; \label{line:gsg-recompute-potential}
        \For{$f \in F^t(\{v\})$}{
            $\cn^t_f := \cn^t_f \cup \{v\}$\; \label{line:gsg-update-cut-nodes}
        }
        \If{$p^t_v > 0$ or $p^t_v > \min_{u \in V^t} p^t_u$}{
            \textbf{continue}\; \label{line:gsg-continue}
        }
        $V^{t+1} := V^t \setminus \{v\}$\; \label{line:gsg-delete-node}
        $E^{t+1} := E^t \setminus \{e \in E \mid v \in E\}$\;
        $F^{t+1} := F^t \setminus F^t(\{v\})$\; \label{line:gsg-delete-interactions}
        $p^{t+1} := p^t$\; \label{line:gsg-copy-potential}
        \For{$f \in F^t(\{v\})$}{ \label{line:gsg-update-potential-outer-loop}
            \For{$u \in \cn^t_f$}{
                $p^{t+1}_u := p^{t+1}_u - c_f$\; \label{line:gsg-update-potential}
            }
        }
        \For{$f \in F^{t+1}$}{
            $\cn^{t+1}_f := \cn^t_f$\; \label{line:gsg-copy-cut-nodes}
        }
        $t := t+1$\;
    }
    \KwRet{$V \setminus V^t$}
    \caption{Greedy Separator Growing (GSG)}
    \label{algo:gsg}
\end{algorithm}

\begin{remark}
    \Cref{algo:gsg} exploits the fact that the node variables in the multi-separator problem \eqref{eq:msp} are unconstrained, i.e.~any node subset $S \subseteq V$ is a feasible solution.
    This is in contrast to the lifted multicut problem \eqref{eq:lmp} where not all edge subsets $M \subseteq E$ are feasible. 
    Thus, an analog to \Cref{algo:gsg} for \eqref{eq:lmp} that iteratively adds single edges to a set of cut edges is not guaranteed to output a feasible solution to \eqref{eq:lmp}.
\end{remark}

\begin{remark}
    \Cref{algo:gsg} can be applied to arbitrary instances of \eqref{eq:msp}, including those where not all costs of interactions $F \setminus E$ are non-negative. 
    For those, however, it is not guaranteed that, in each iteration, the node added to the separator decreases the cost maximally.
\end{remark}

\begin{lemma}\label{lem:gsg-worstcase-runtime}
    \Cref{algo:gsg} has a worst case time complexity of $\mathcal{O}(|V|^2(|V| + |E| + |F|))$.
\end{lemma}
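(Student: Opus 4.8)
The lemma claims Algorithm 2 (GSG) runs in $\mathcal{O}(|V|^2(|V| + |E| + |F|))$ time. Let me think about the structure.The plan is to bound the total work by separately counting the number of times the main loop body executes and the cost of each iteration. First I would observe that the \texttt{while} loop in \Cref{algo:gsg} is controlled by two distinct exit/continue mechanisms: a node is either permanently deleted from the graph (\Cref{line:gsg-delete-node}), which can happen at most $|V|$ times since $|V^t|$ strictly decreases, or a node is re-examined and then rejected via the \textbf{continue} in \Cref{line:gsg-continue} because its recomputed potential is no longer minimal. The crucial counting step is to argue that, between two consecutive node deletions, the number of such rejecting \textbf{continue} passes is also $\mathcal{O}(|V|)$. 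This holds because once a node $v$ has its potential recomputed in \Cref{line:gsg-recompute-potential}, this recomputed value is stored and is only ever decreased later (in \Cref{line:gsg-update-potential}); hence each node can trigger at most one ``recompute-and-reject'' event per deletion phase, giving $\mathcal{O}(|V|)$ rejections between deletions and $\mathcal{O}(|V|^2)$ loop-body executions in total.

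Next I would bound the cost of a single execution of the loop body. The dominant operation is the computation of $F^t(\{v\})$ in \Cref{line:gsg-recompute-potential}, which requires computing the connected components of the subgraph of $(V^t,E^t)$ induced by $V^t \setminus \{v\}$ and then testing each interaction $f \in F^t$ to see whether its endpoints lie in distinct components. Computing components by breadth- or depth-first search takes $\mathcal{O}(|V| + |E|)$ time, and scanning all interactions takes $\mathcal{O}(|F|)$ time, so forming $F^t(\{v\})$ costs $\mathcal{O}(|V| + |E| + |F|)$. The remaining per-iteration bookkeeping --- selecting the minimum potential in \Cref{line:gsg-select-best-potential}, the updates to $\cn^t$ in \Cref{line:gsg-update-cut-nodes}, the graph and interaction deletions in \Cref{line:gsg-delete-node} to \Cref{line:gsg-delete-interactions}, the potential updates in \Cref{line:gsg-update-potential-outer-loop} to \Cref{line:gsg-update-potential}, and copying $\cn$ in \Cref{line:gsg-copy-cut-nodes} --- is each also $\mathcal{O}(|V| + |E| + |F|)$, so the entire loop body is dominated by $\mathcal{O}(|V| + |E| + |F|)$.

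For the potential-update double loop in \Cref{line:gsg-update-potential-outer-loop} to \Cref{line:gsg-update-potential} I would note that $\sum_{f \in F^t(\{v\})} |\cn^t_f|$ is what is actually traversed; since each $\cn^t_f \subseteq V^t$ has size $\mathcal{O}(|V|)$ and $|F^t(\{v\})| \leq |F|$, this is $\mathcal{O}(|V| \cdot |F|)$, which is absorbed into $\mathcal{O}(|V|(|V| + |E| + |F|))$ per iteration if one is not careful. The main obstacle, therefore, is making sure the per-iteration bound really is $\mathcal{O}(|V| + |E| + |F|)$ and not a factor $|V|$ larger; I would handle this by charging the $\cn$-traversal work to a global amortized budget rather than bounding it per iteration. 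Multiplying the $\mathcal{O}(|V|^2)$ loop-body executions by the $\mathcal{O}(|V| + |E| + |F|)$ per-execution cost then yields the claimed $\mathcal{O}(|V|^2(|V| + |E| + |F|))$ bound. I expect the delicate part to be the rejection-counting argument of the first paragraph, since one must verify that a node whose potential has been recomputed and found non-minimal cannot repeatedly be re-selected and re-recomputed within the same deletion phase, which relies on the monotone (non-increasing) behavior of the stored potentials between deletions.
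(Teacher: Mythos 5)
Your proposal is correct and follows essentially the same route as the paper's proof: it bounds the number of loop-body executions by $\mathcal{O}(|V|^2)$ via the at-most-one-rejection-per-node-per-deletion-phase argument, bounds the cost of computing $F^t(\{v\})$ by $\mathcal{O}(|V|+|E|+|F|)$ via connected components, and accounts for the double loop of \Cref{line:gsg-update-potential-outer-loop} to \Cref{line:gsg-update-potential} separately, since it runs only at the at most $|V|$ actual deletions at a cost of $\mathcal{O}(|V||F|)$ each. One caveat: the rejection count does not rely on any monotone non-increasing behavior of stored potentials between deletions (in fact, recomputation in \Cref{line:gsg-recompute-potential} can only increase a stored potential, and the update in \Cref{line:gsg-update-potential} can also increase it when $c_f < 0$ for some $f \in F \cap E$); it relies only on the fact that $(V^t,E^t)$ and $F^t$ do not change between deletions, so a potential once recomputed remains exact, and a second selection of that node within the same phase cannot end in another rejection.
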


\begin{proof}
    There are at most $|V|$ iterations in which a node is removed from the graph, i.e. added to the separator (\Cref{line:gsg-delete-node}), since each node can be removed at most once.
    However, there can be more than $|V|$ iterations as a node $v$ might be discarded for removal from the graph (\Cref{line:gsg-continue}) if the potential of that node is, after recomputing it in \Cref{line:gsg-recompute-potential}, no longer minimal.
    In the worst case, all nodes in $V^t$ are discarded once, without removing a node from the graph. 
    After that, the potentials $p_v^t$ of all nodes $v \in V^t$ are correct.
    Subsequently, the recomputation of the potential in \Cref{line:gsg-recompute-potential} will not change the potential, and thus, this node will not be discarded. 
    Overall, there are $\mathcal{O}(|V|^2)$ iterations in which Lines \ref{line:gsg-select-best-potential} to \ref{line:gsg-continue} are executed.

    The set of interactions $F^t(\{v\})$ that are separated by $\{v\}$ in $(V^t,E^t)$ can be identified by, firstly, computing the components of the subgraph of $(V^t,E^t)$ that is induced by $V^t \setminus \{v\}$ and, secondly, selecting all interactions in $F^t$ whose endpoints belong to different components.
    This can be done in time $\mathcal{O}(|V|+|E|+|F|)$, using breadth-first search for computing the components.
    Therefore, Lines \ref{line:gsg-select-best-potential} to \ref{line:gsg-copy-potential} can be executed in time $\mathcal{O}(|V| + |E| + |F|)$.
    Clearly, Lines \ref{line:gsg-update-potential-outer-loop} to \ref{line:gsg-update-potential} can be executed in time $\mathcal{O}(|V||F|)$.
    By the previous argument, these lines are executed at most $|V|$ times.
    Thus follows the claim.
\end{proof}

As for \Cref{algo:gss}, the \textsc{c++} implementation of \Cref{algo:gsg} in the supplementary material of this article utilizes a priority queue for efficiently querying a node with smallest potential.
Also this implementation does not improve over the worst case time complexity of \Cref{lem:gsg-worstcase-runtime}.
Measurements of the absolute runtime for specific instances are reported in \Cref{fig:runtime-analysis}, \Cref{sec:experiments}.

\section{Application to image segmentation}\label{sec:experiments}
\begin{figure}
    \centering
    \setlength\tabcolsep{2pt}
    \begin{tabular}{
        >{\centering}m{0.03\textwidth} 
        >{\centering}m{0.23\textwidth} 
        >{\centering}m{0.23\textwidth} 
        >{\centering}m{0.23\textwidth} 
        >{\centering\arraybackslash}m{0.23\textwidth}}
        & Synthetic Image & True Segmentation & Multi-separator & Watershed \\

        \rotatebox{90}{$t=0.0$} &
        \includegraphics[width=\linewidth]{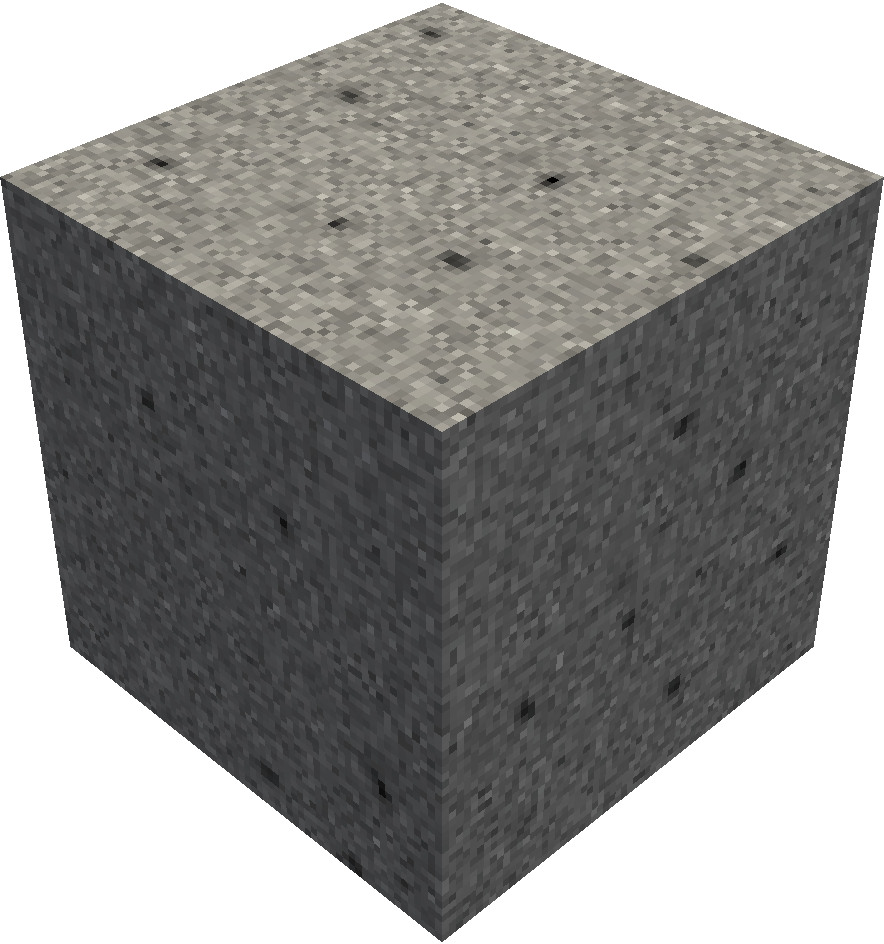} &
        \includegraphics[width=\linewidth]{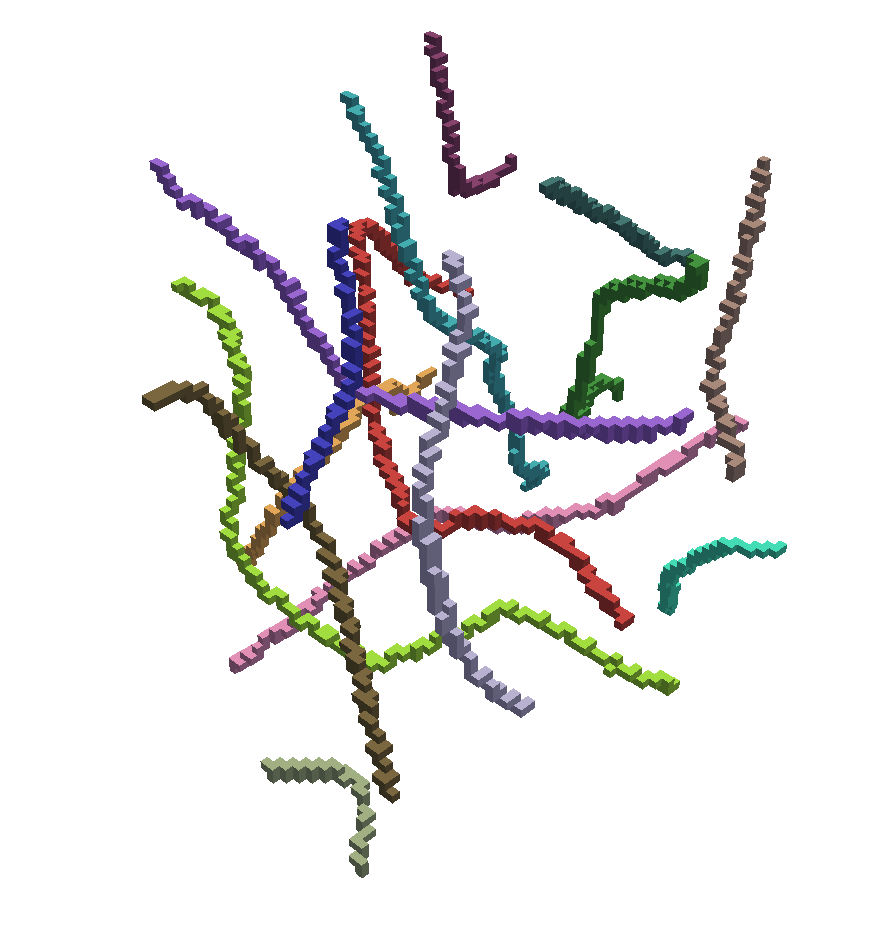} &
        \includegraphics[width=\linewidth]{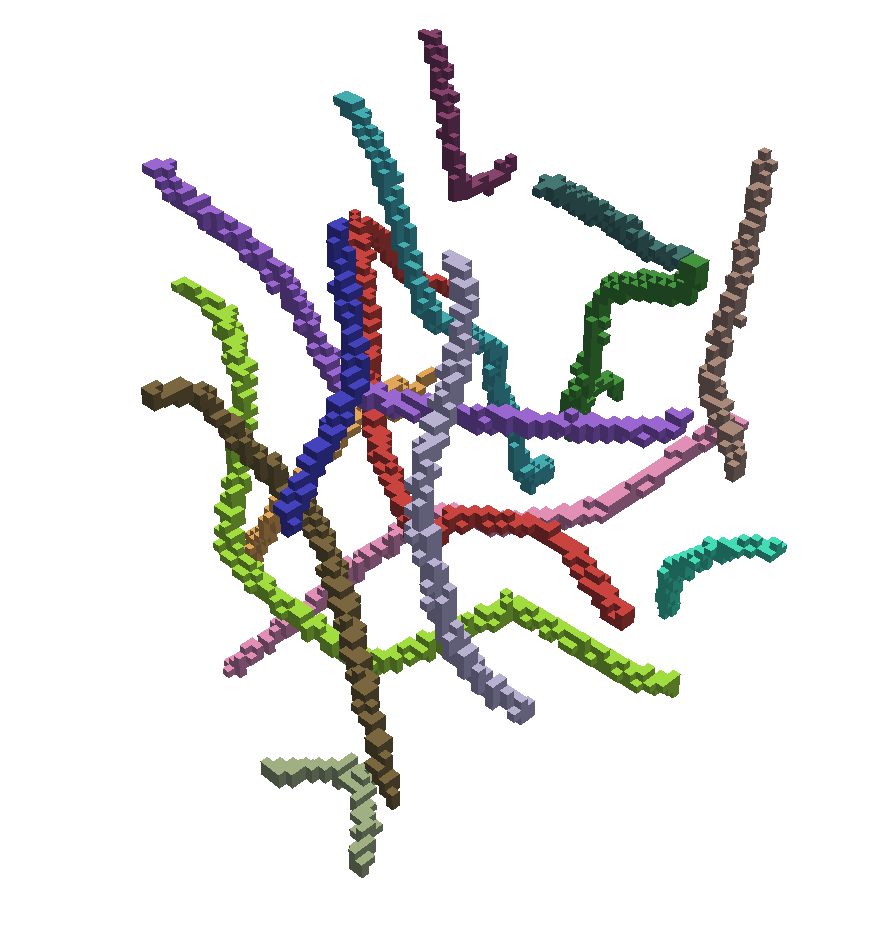} &
        \includegraphics[width=\linewidth]{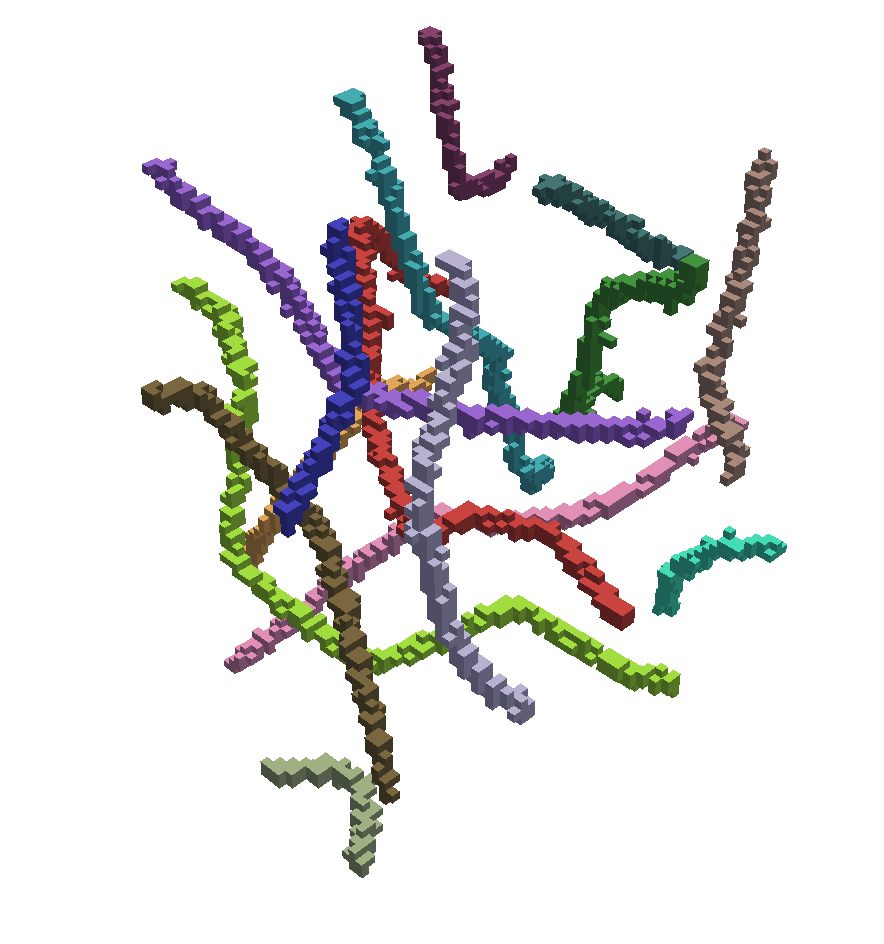} \\
        
        \rotatebox{90}{$t=0.25$} &
        \includegraphics[width=\linewidth]{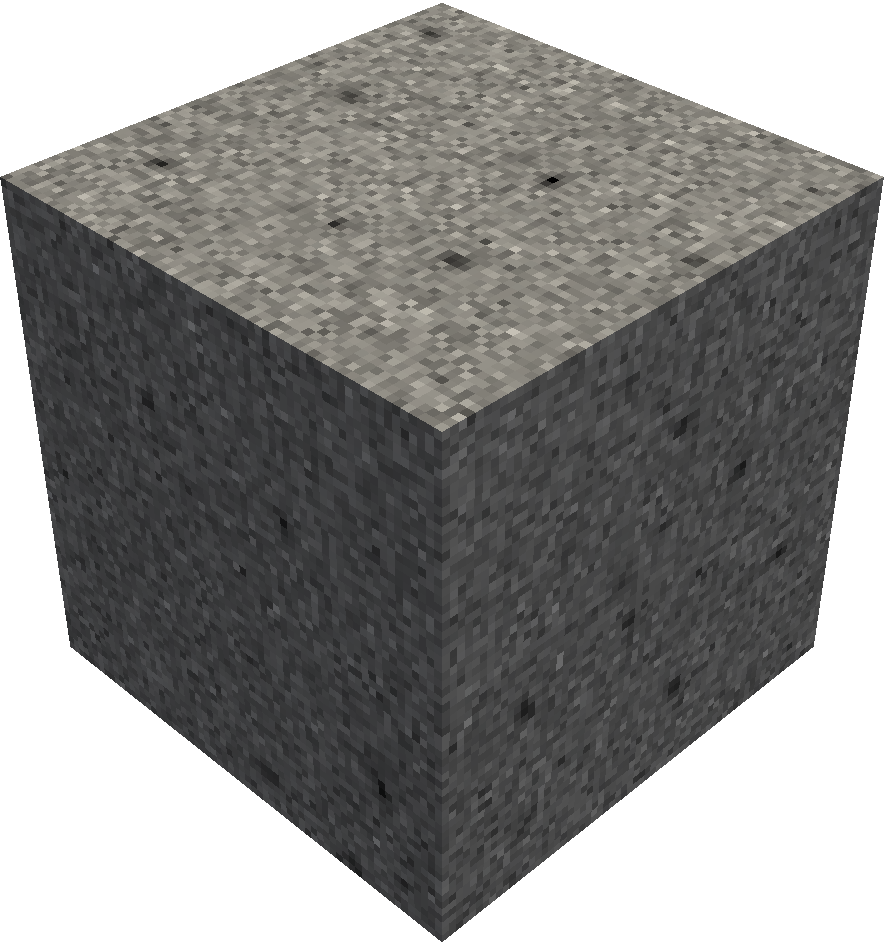} &
        \includegraphics[width=\linewidth]{images/renders/filament_1_gt.png} &
        \includegraphics[width=\linewidth]{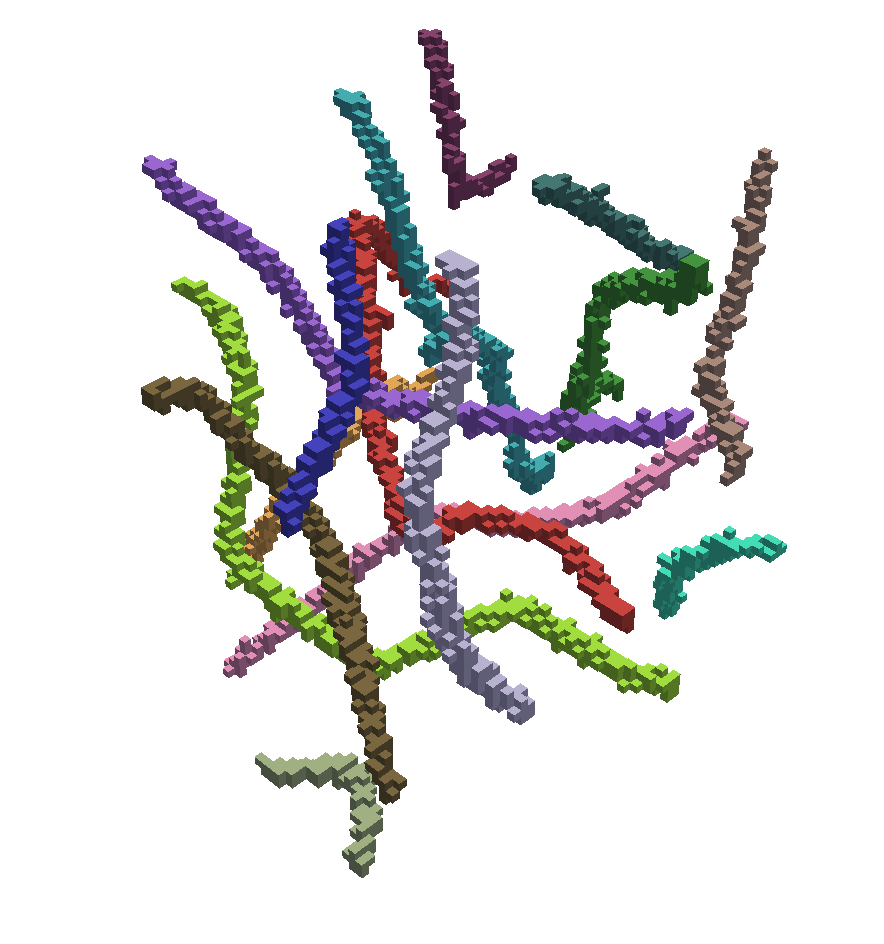} &
        \includegraphics[width=\linewidth]{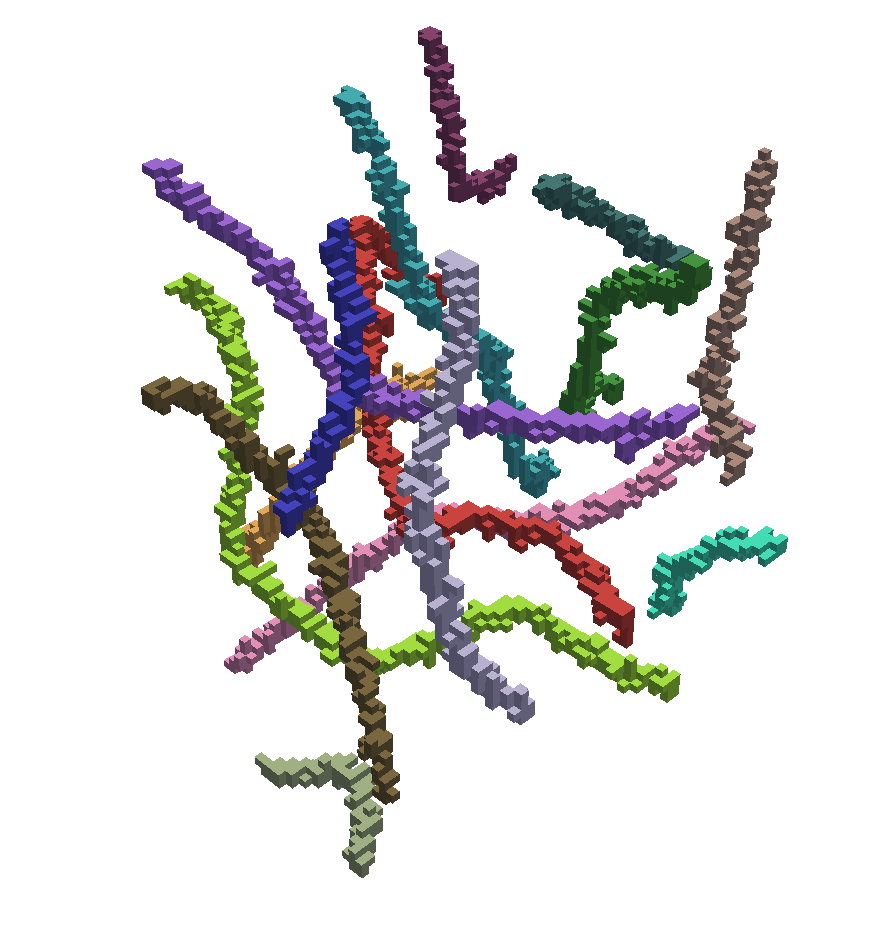} \\
        
        \rotatebox{90}{$t=0.5$} &
        \includegraphics[width=\linewidth]{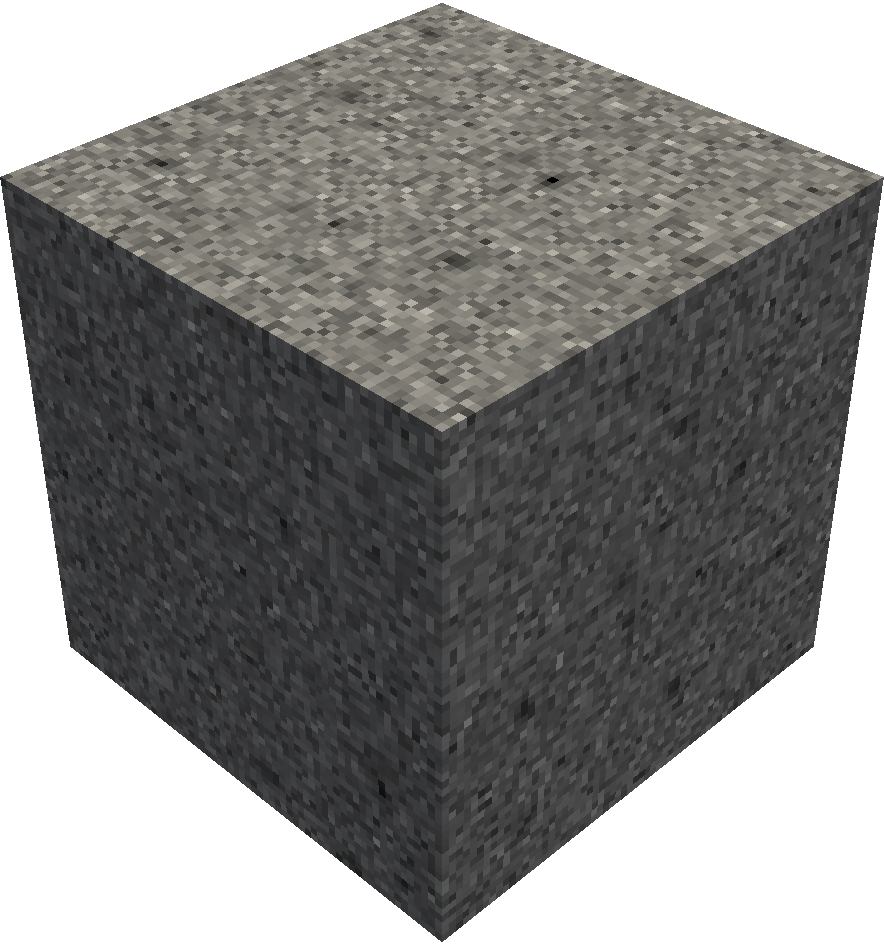} &
        \includegraphics[width=\linewidth]{images/renders/filament_1_gt.png} &
        \includegraphics[width=\linewidth]{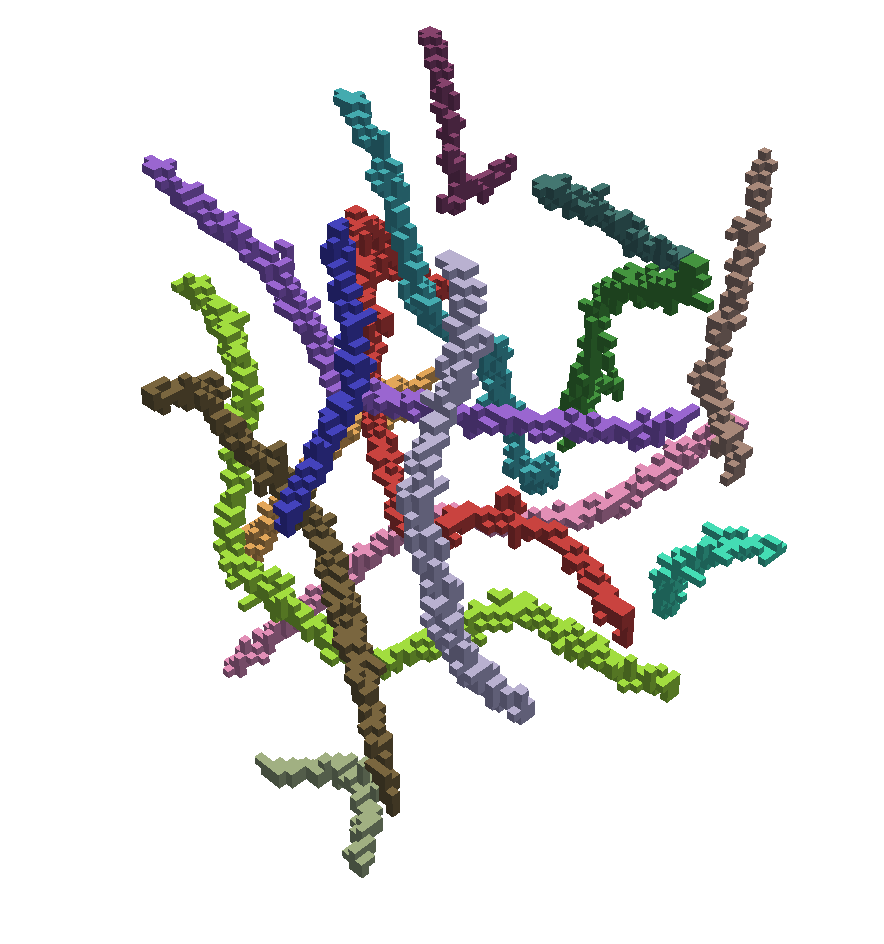} &
        \includegraphics[width=\linewidth]{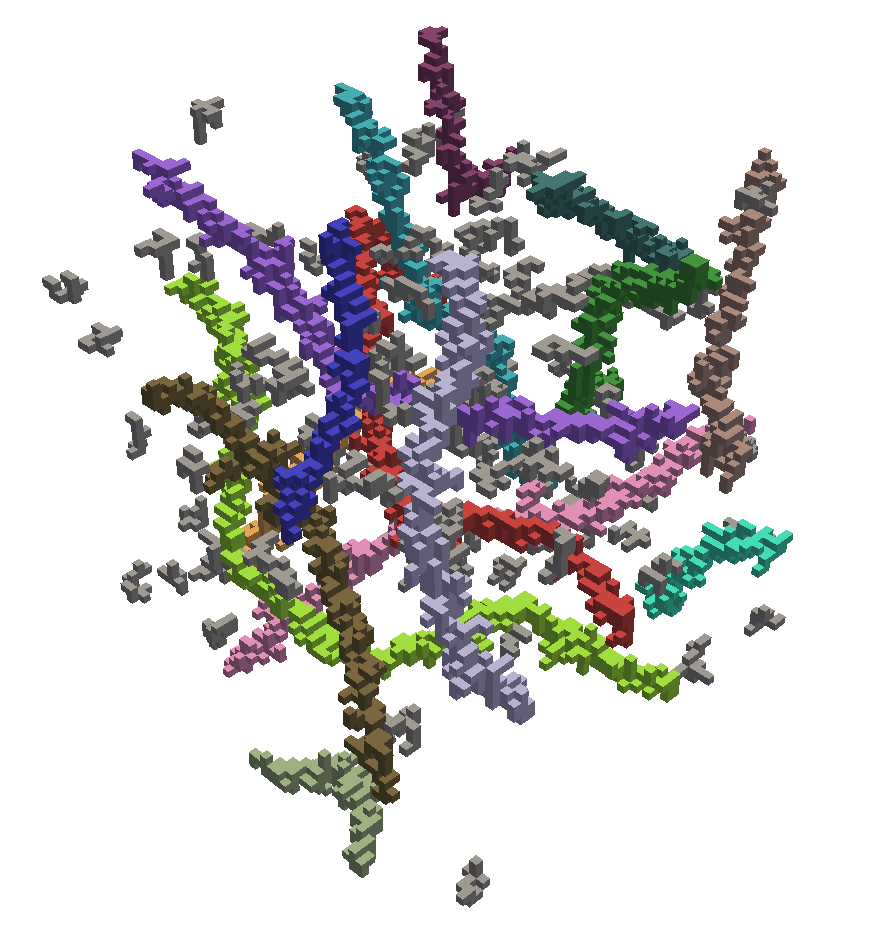} \\

        \rotatebox{90}{$t=0.75$} &
        \includegraphics[width=\linewidth]{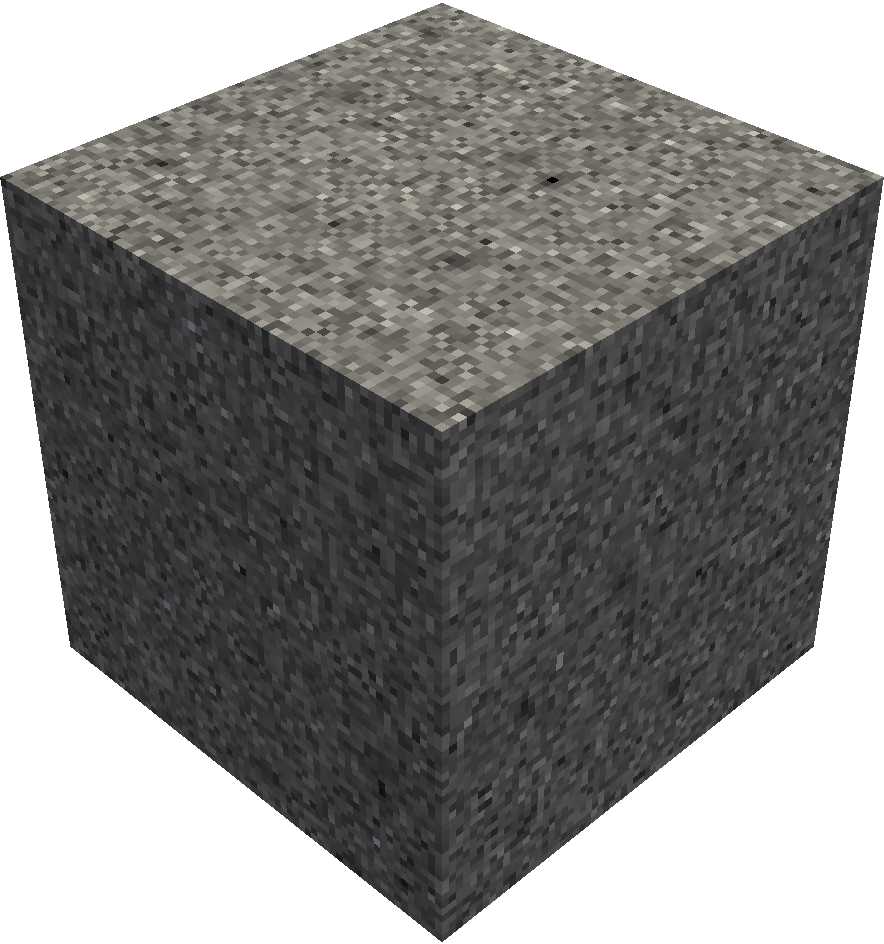} &
        \includegraphics[width=\linewidth]{images/renders/filament_1_gt.png} &
        \includegraphics[width=\linewidth]{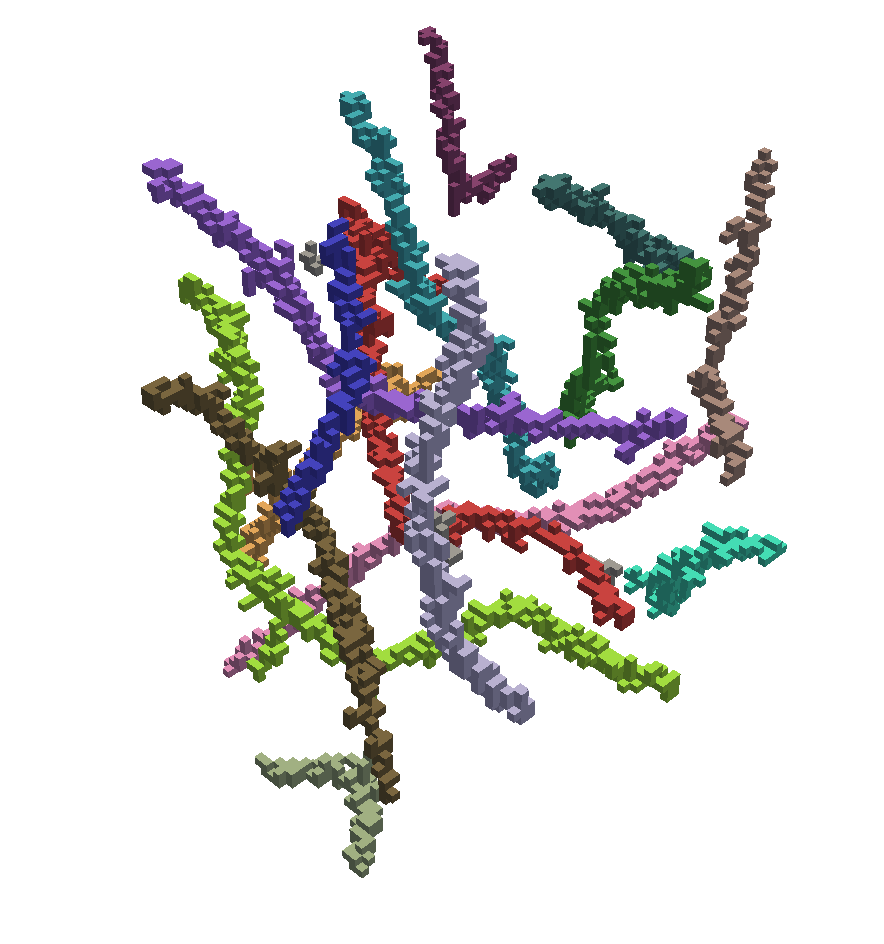} &
        \includegraphics[width=\linewidth]{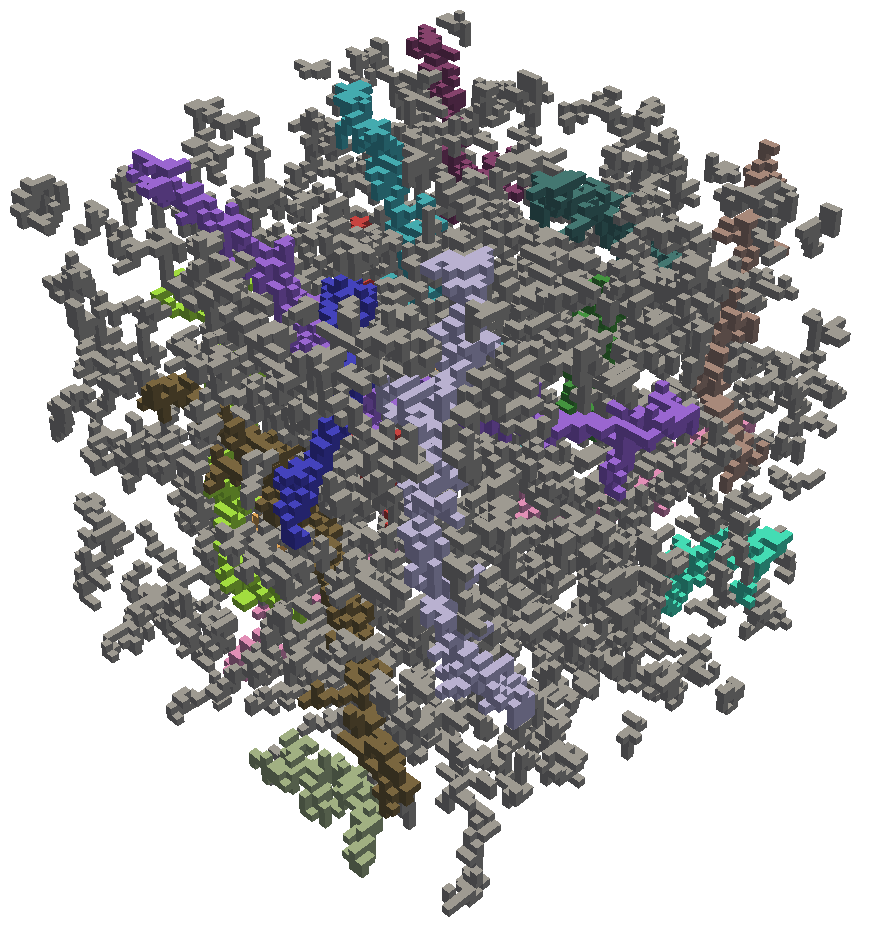} \\

        \rotatebox{90}{$t=1.0$} &
        \includegraphics[width=\linewidth]{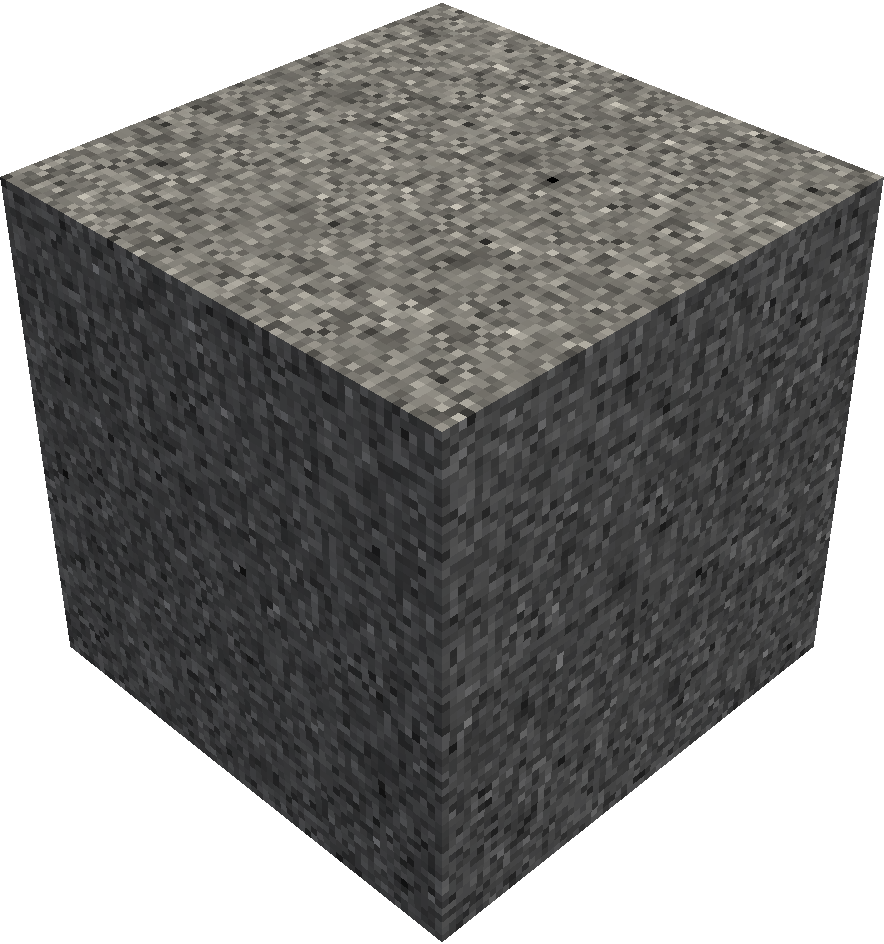} &
        \includegraphics[width=\linewidth]{images/renders/filament_1_gt.png} &
        \includegraphics[width=\linewidth]{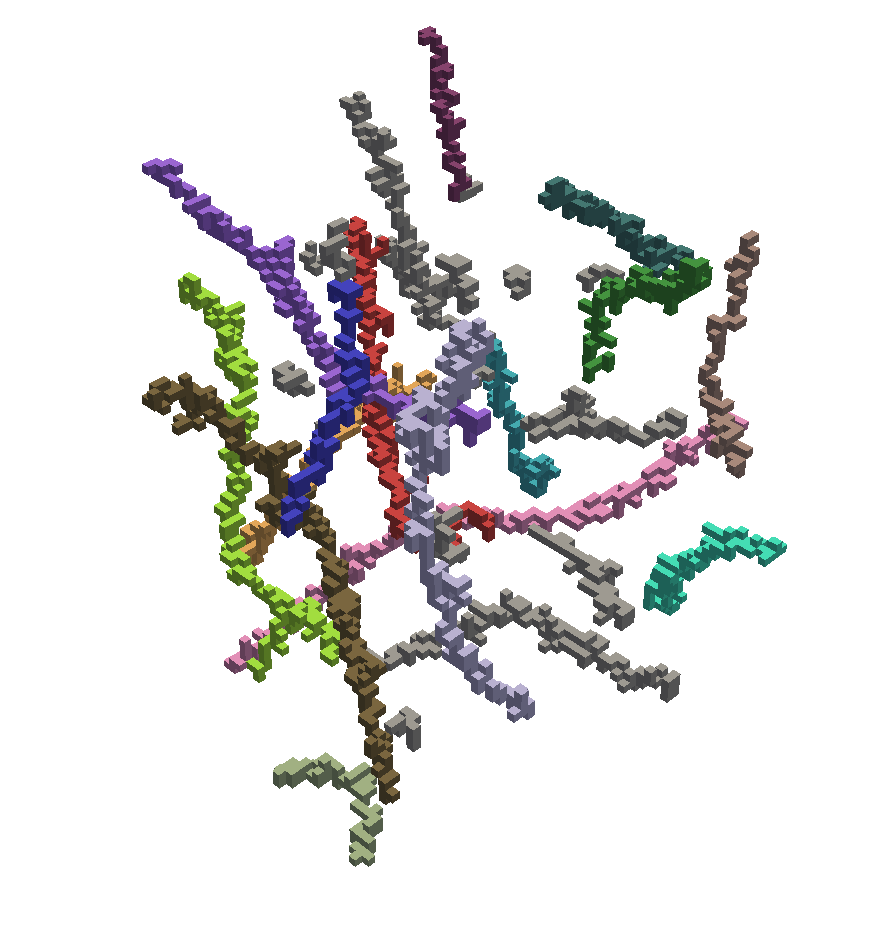} &
        \includegraphics[width=\linewidth]{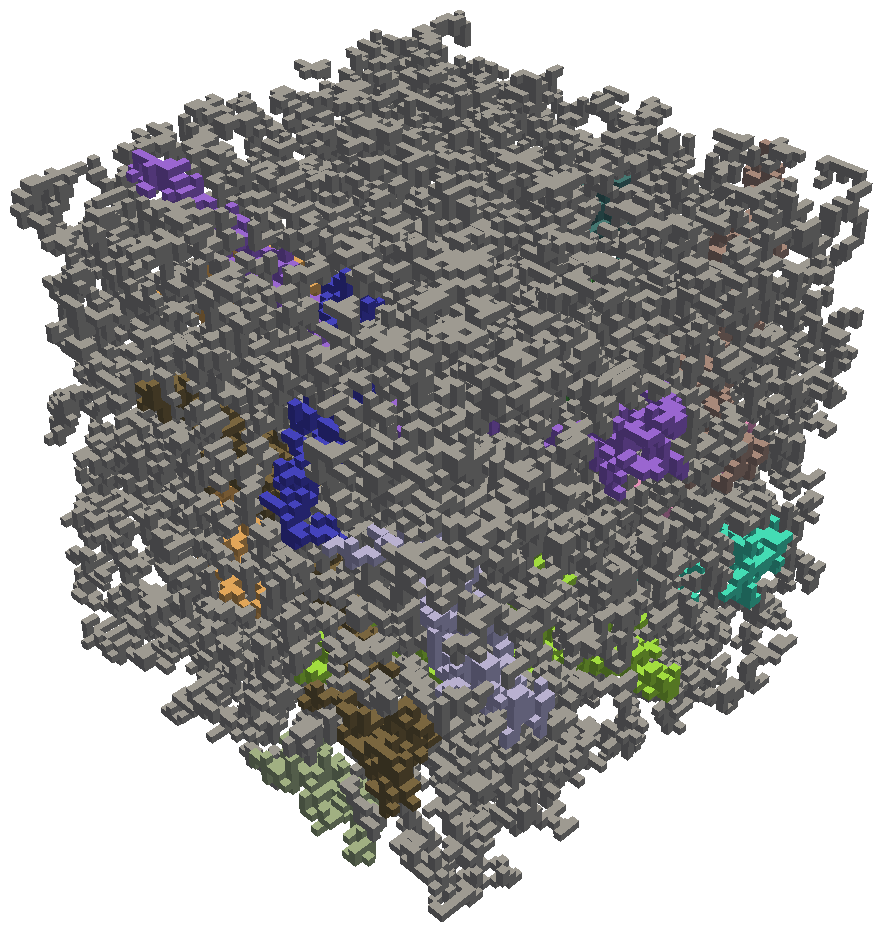}
    \end{tabular}
    \caption{Depicted above are multi-separators of volume images of simulated \textbf{filaments} with five amounts of noise, $t$ (rows): The true multi-separator (Column 2), the multi-separators output by \Cref{algo:gsg} (Column 3), and the multi-separator output by the watershed algorithm (Column 4). 
    For each $t$, the parameters ($\theta_\text{start}$ and $\theta_\text{end}$ for the watershed algorithm, $b$ for \Cref{algo:gsg}) are chosen so as to minimize the average $\viws$ across those images of the data set with the amount of noise $t$.
    Components that do not match with any true component are depicted in gray.
    For clarity, only components containing at least $10$ voxels are shown.}
    \label{fig:filament-example}
\end{figure}

\begin{figure}
    \centering
    \setlength\tabcolsep{2pt}
    \begin{tabular}{
        >{\centering}m{0.03\textwidth} 
        >{\centering}m{0.23\textwidth} 
        >{\centering}m{0.23\textwidth} 
        >{\centering}m{0.23\textwidth} 
        >{\centering\arraybackslash}m{0.23\textwidth}}
        & Synthetic Image & True Segmentation & Multi-separator & Watershed\\

        \rotatebox{90}{$t=0.0$} &
        \includegraphics[width=\linewidth]{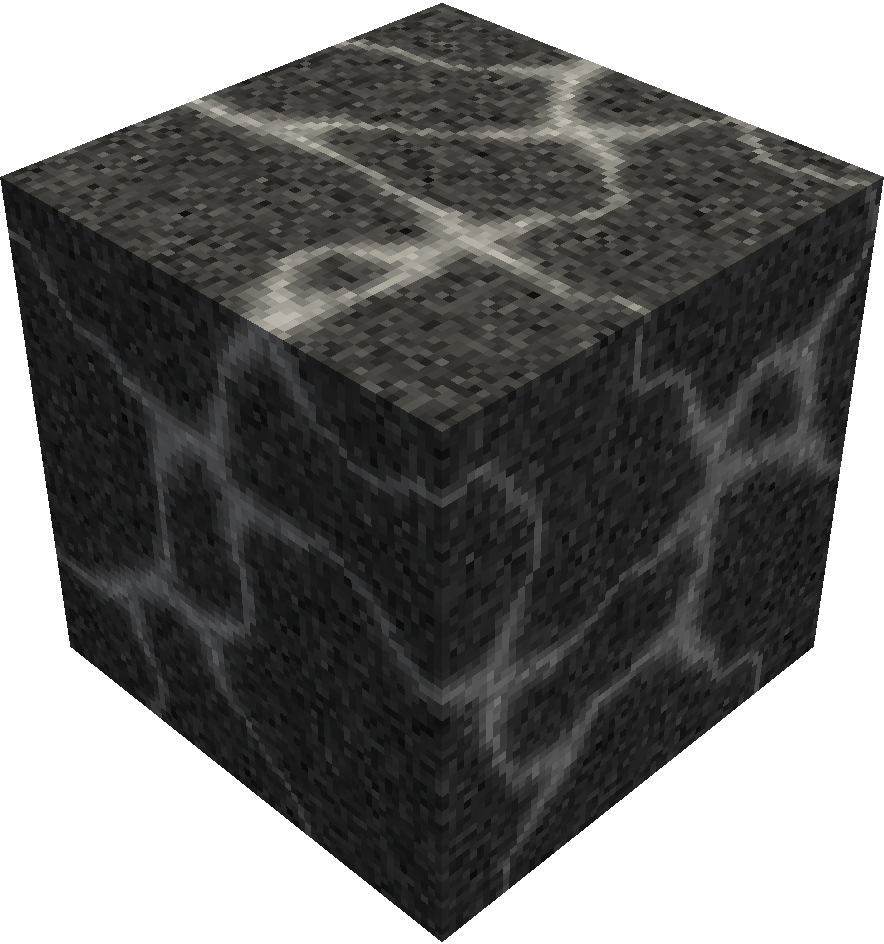} &
        \includegraphics[width=\linewidth]{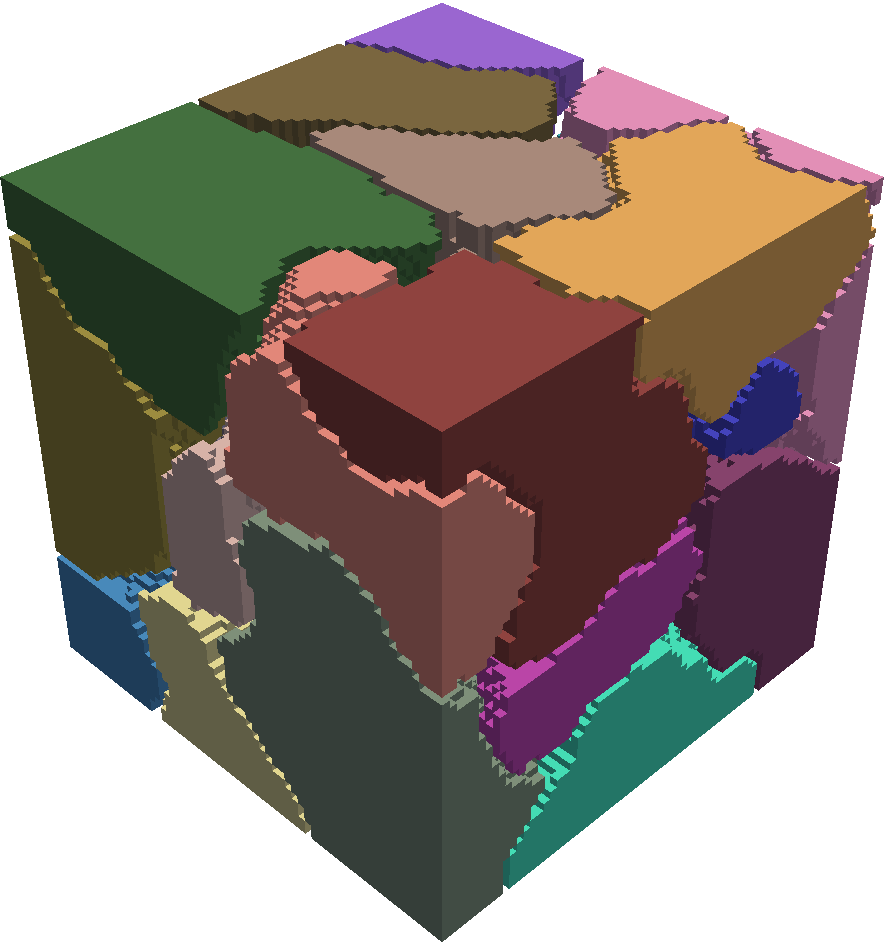} &
        \includegraphics[width=\linewidth]{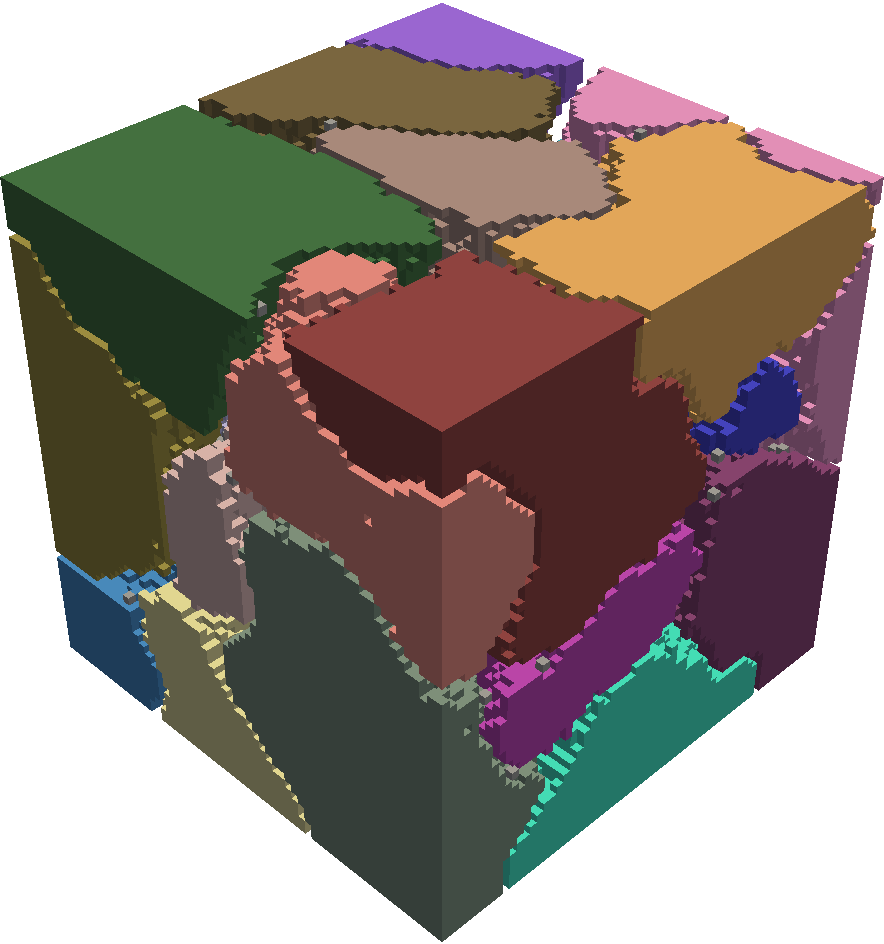} &
        \includegraphics[width=\linewidth]{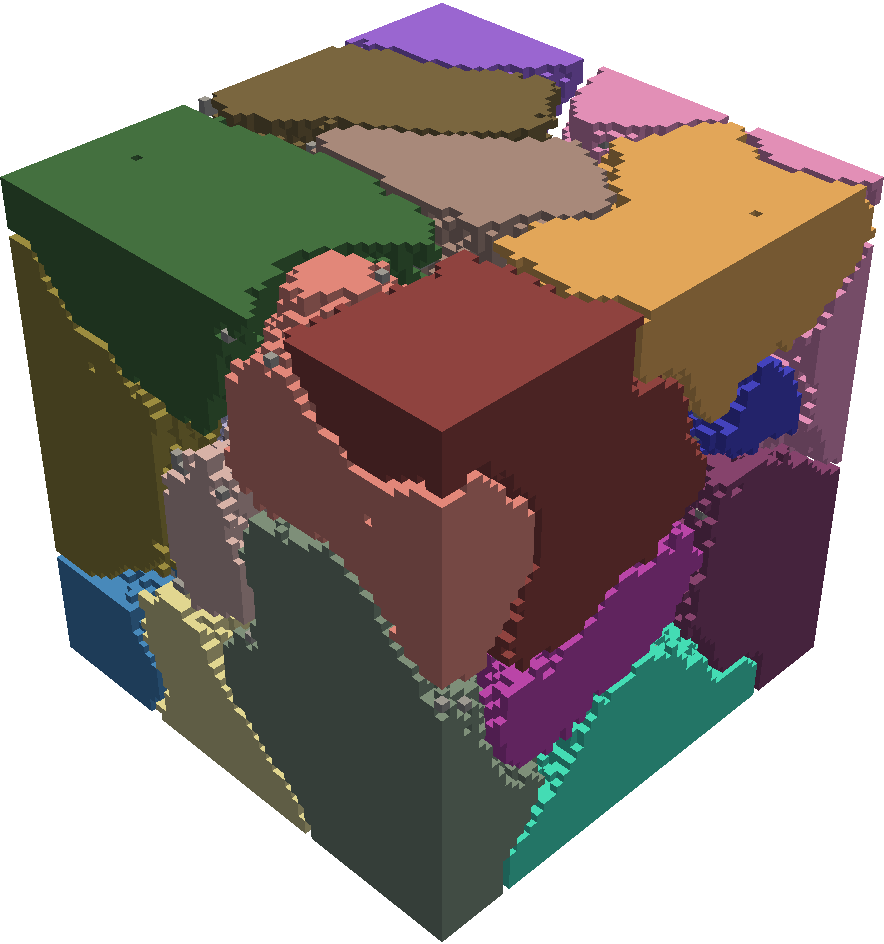} \\
        
        \rotatebox{90}{$t=0.25$} &
        \includegraphics[width=\linewidth]{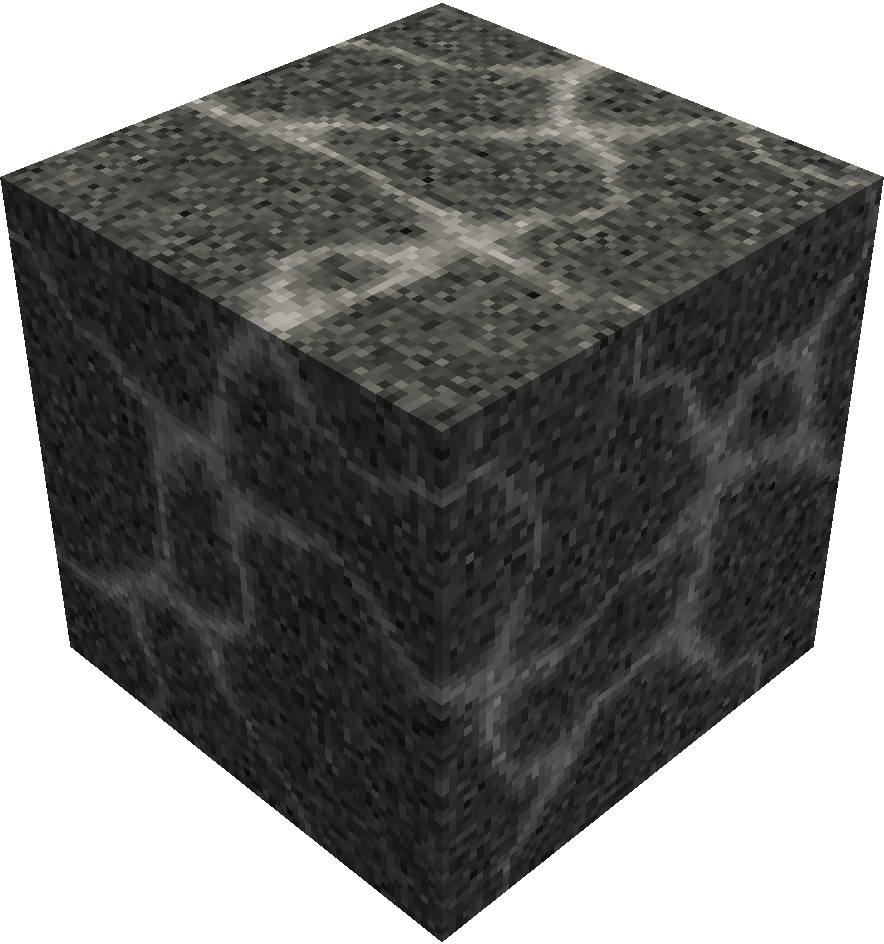} &
        \includegraphics[width=\linewidth]{images/renders/cell_1_gt.png} &
        \includegraphics[width=\linewidth]{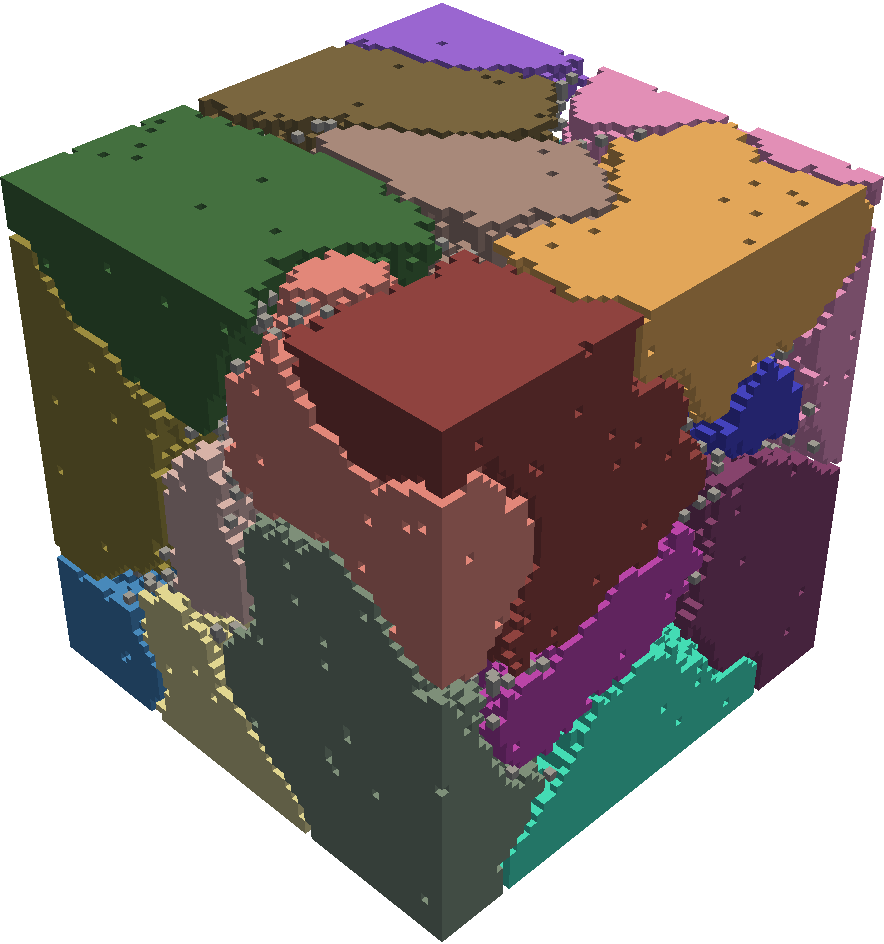} &
        \includegraphics[width=\linewidth]{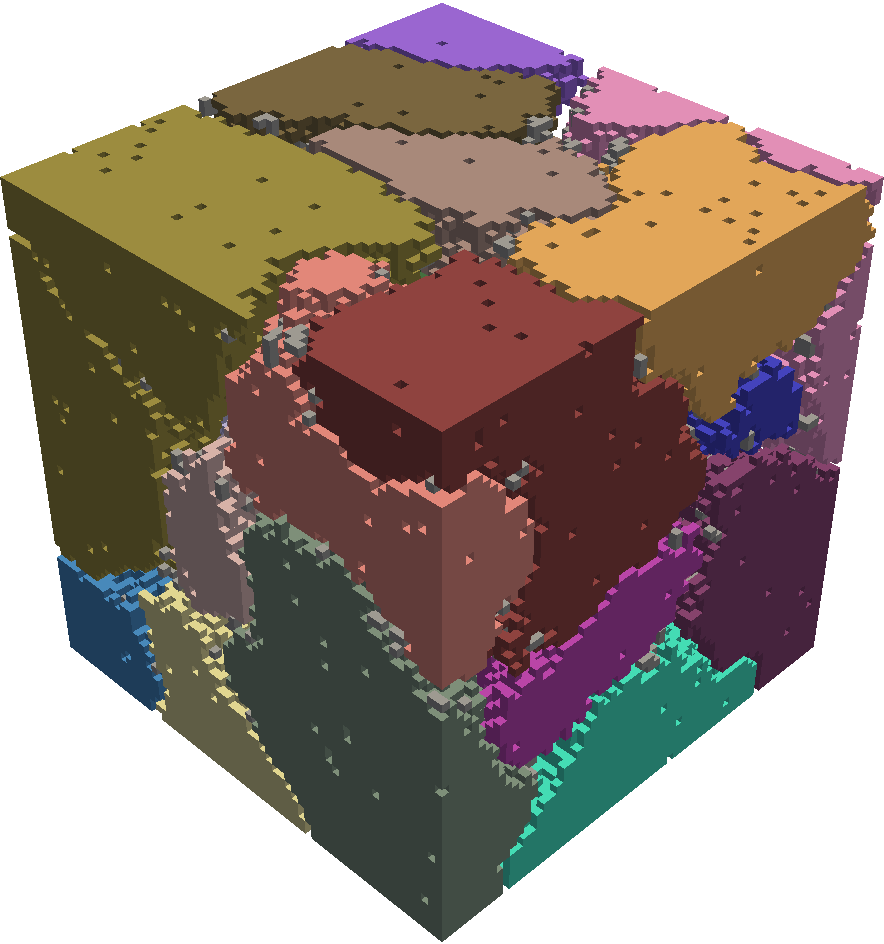} \\
        
        \rotatebox{90}{$t=0.5$} &
        \includegraphics[width=\linewidth]{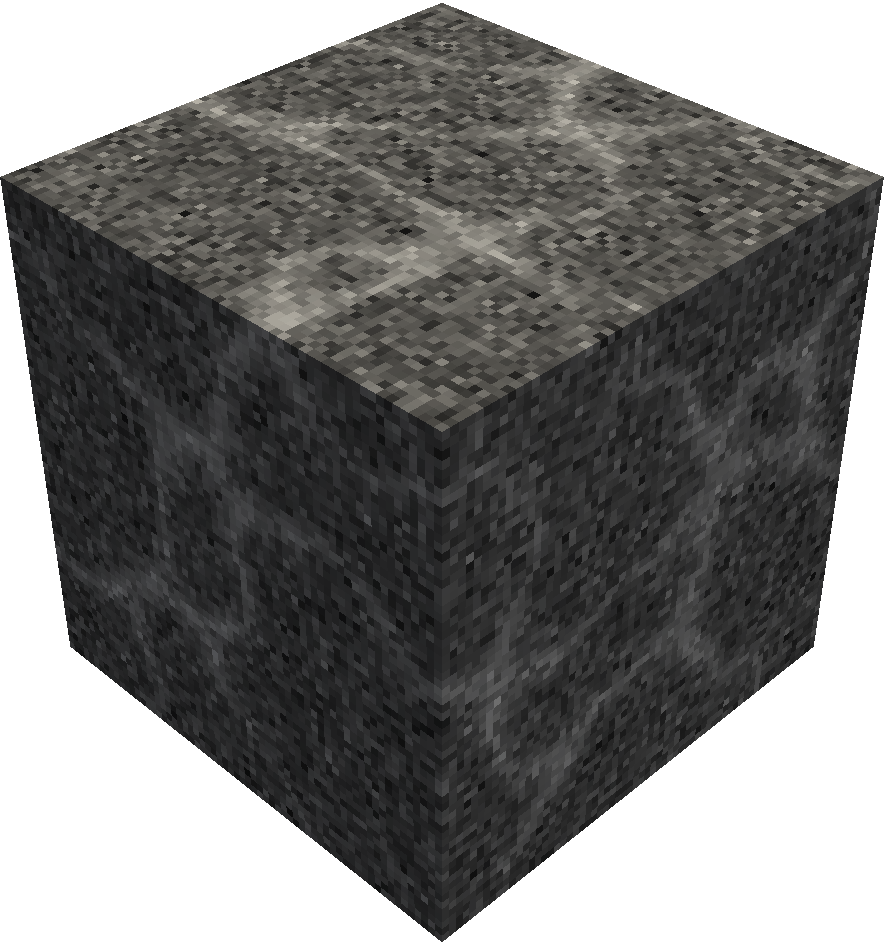} &
        \includegraphics[width=\linewidth]{images/renders/cell_1_gt.png} &
        \includegraphics[width=\linewidth]{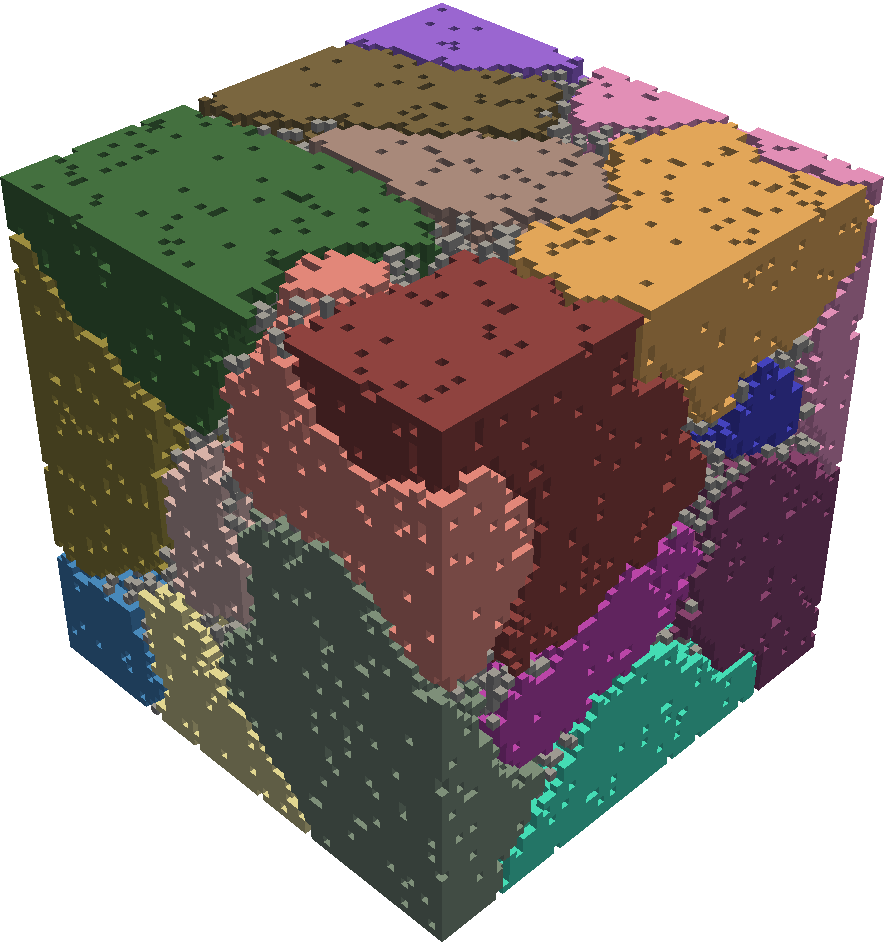} &
        \includegraphics[width=\linewidth]{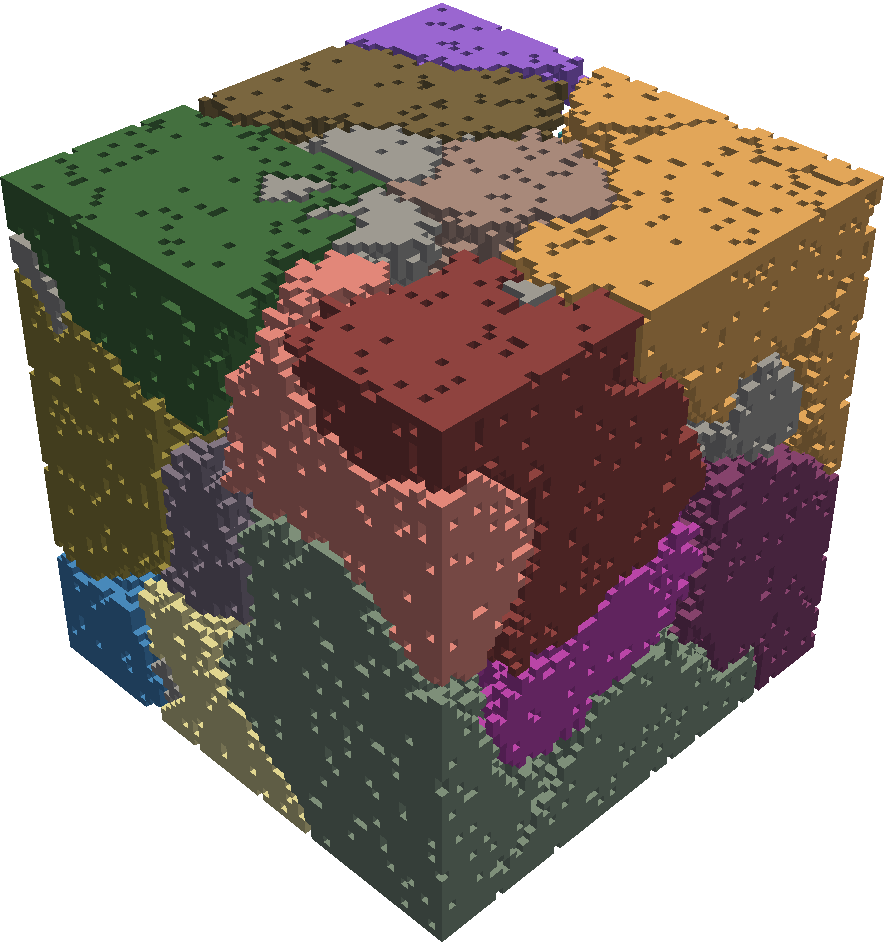} \\

        \rotatebox{90}{$t=0.75$} &
        \includegraphics[width=\linewidth]{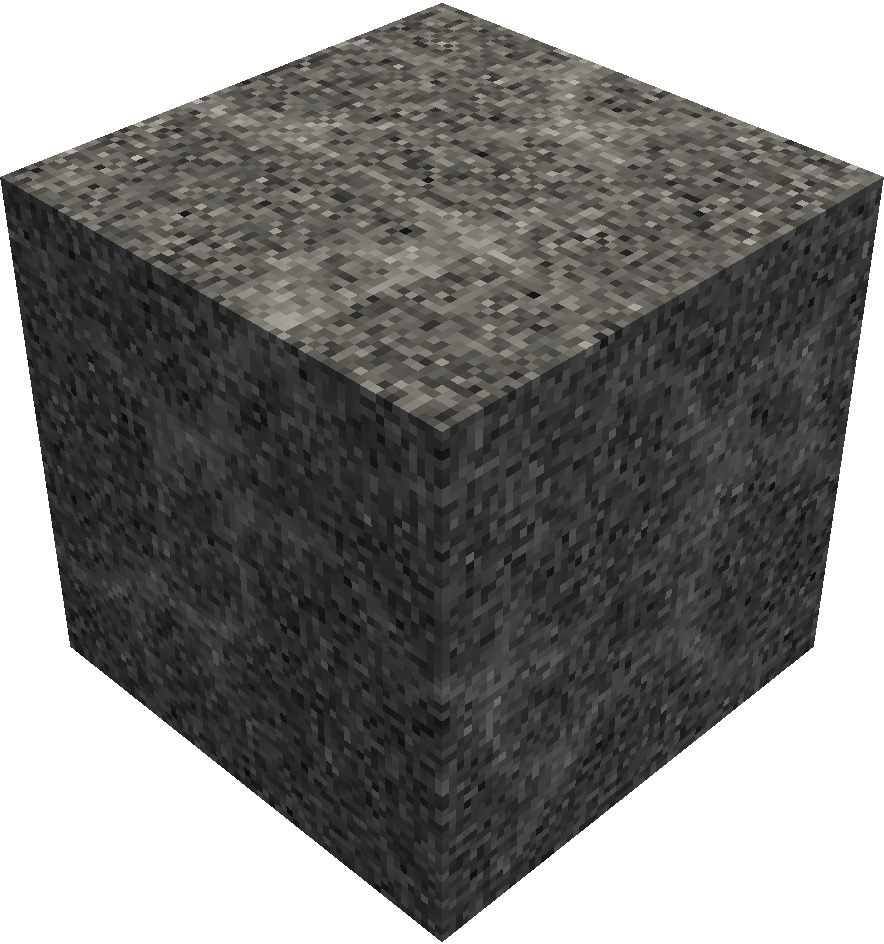} &
        \includegraphics[width=\linewidth]{images/renders/cell_1_gt.png} &
        \includegraphics[width=\linewidth]{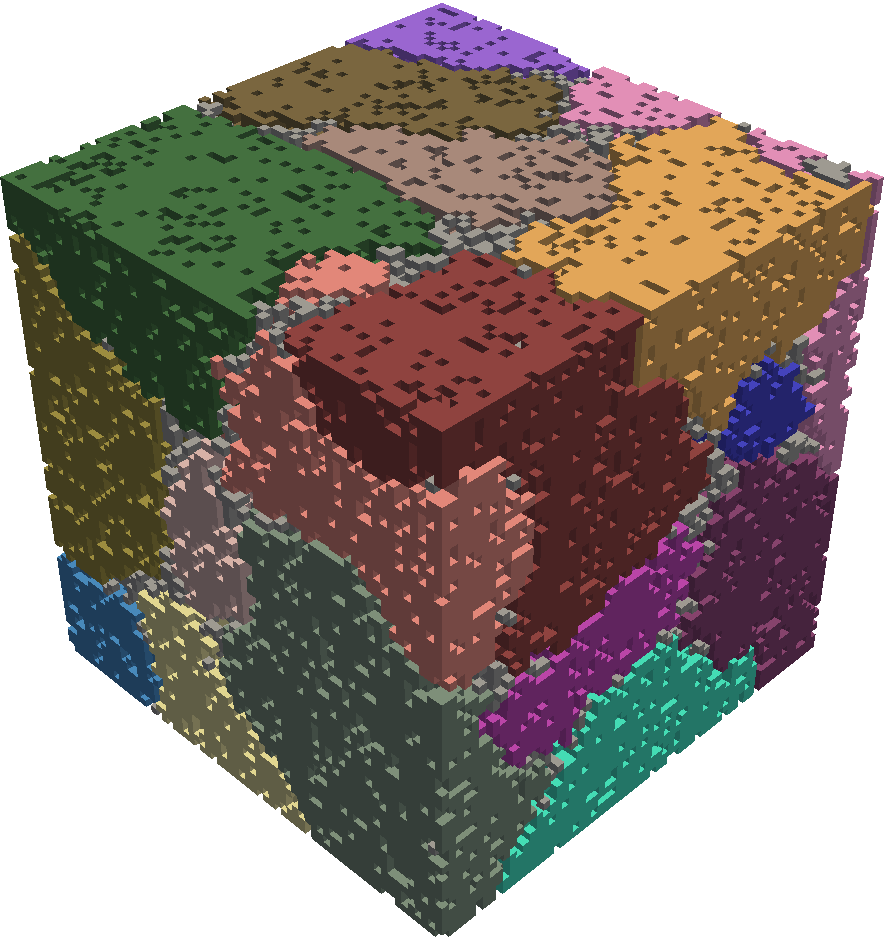} &
        \includegraphics[width=\linewidth]{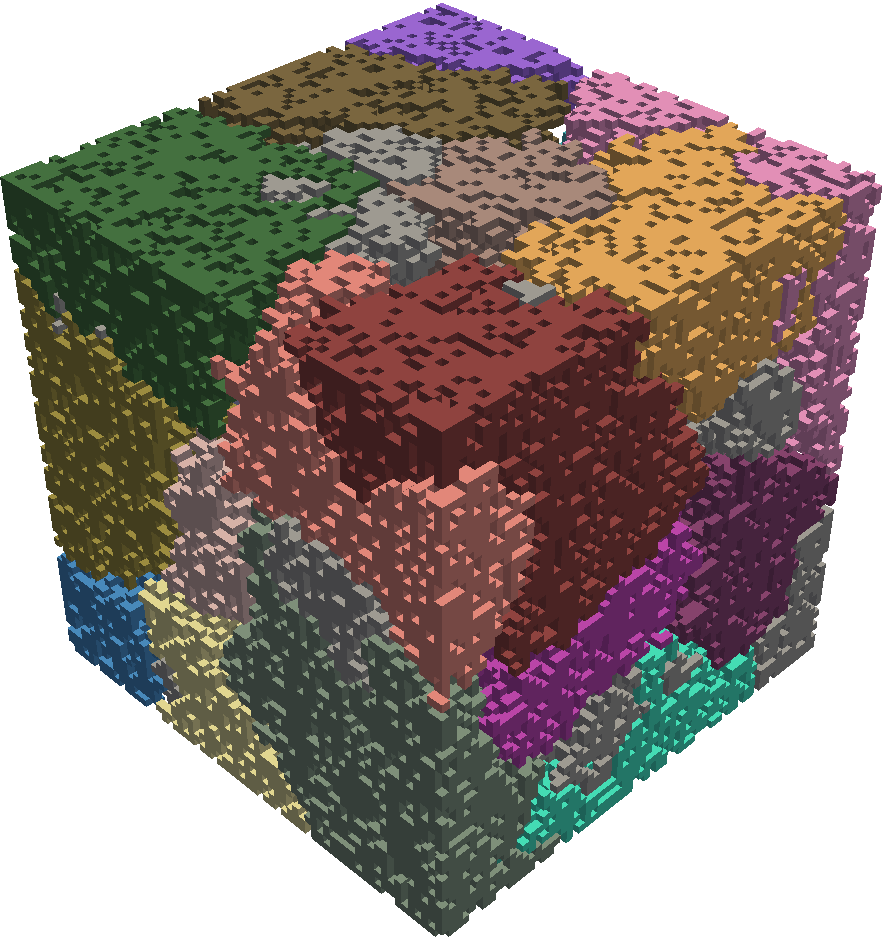} \\

        \rotatebox{90}{$t=1.0$} &
        \includegraphics[width=\linewidth]{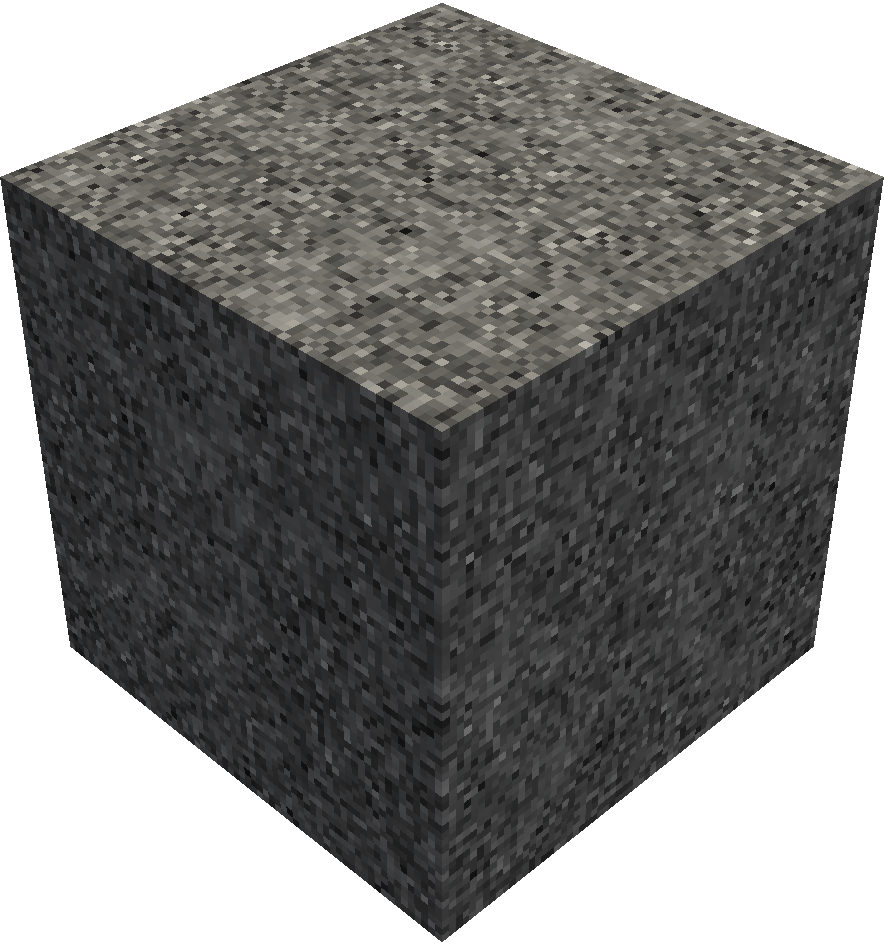} &
        \includegraphics[width=\linewidth]{images/renders/cell_1_gt.png} &
        \includegraphics[width=\linewidth]{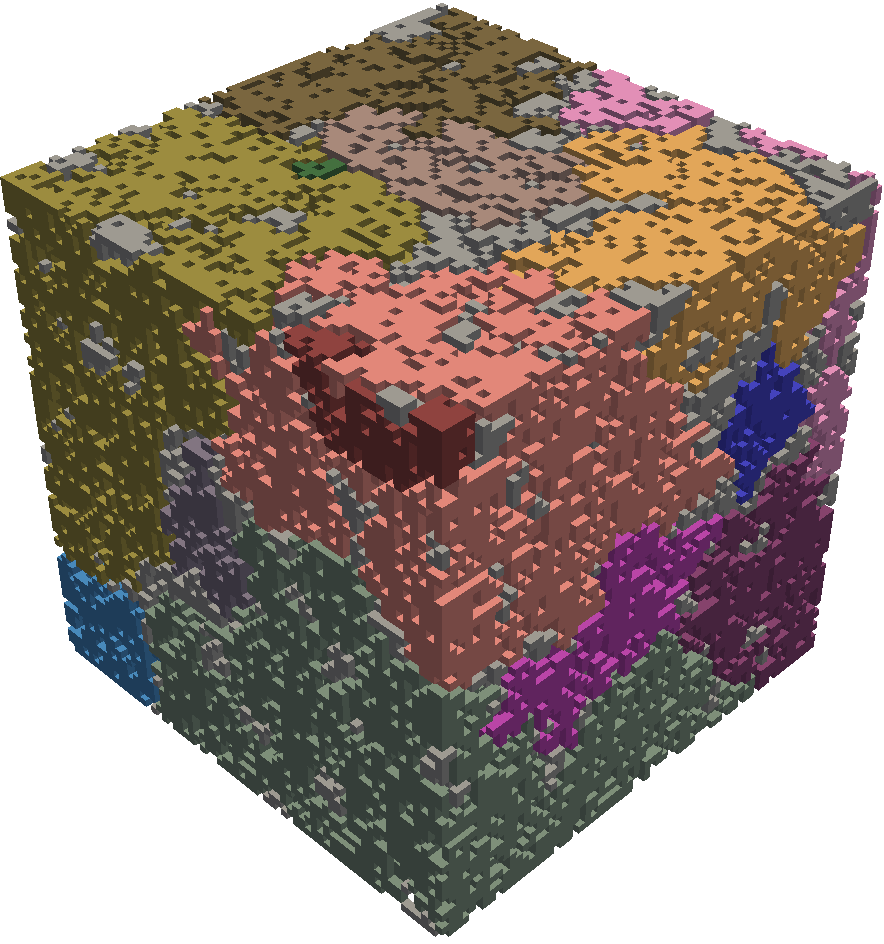} &
        \includegraphics[width=\linewidth]{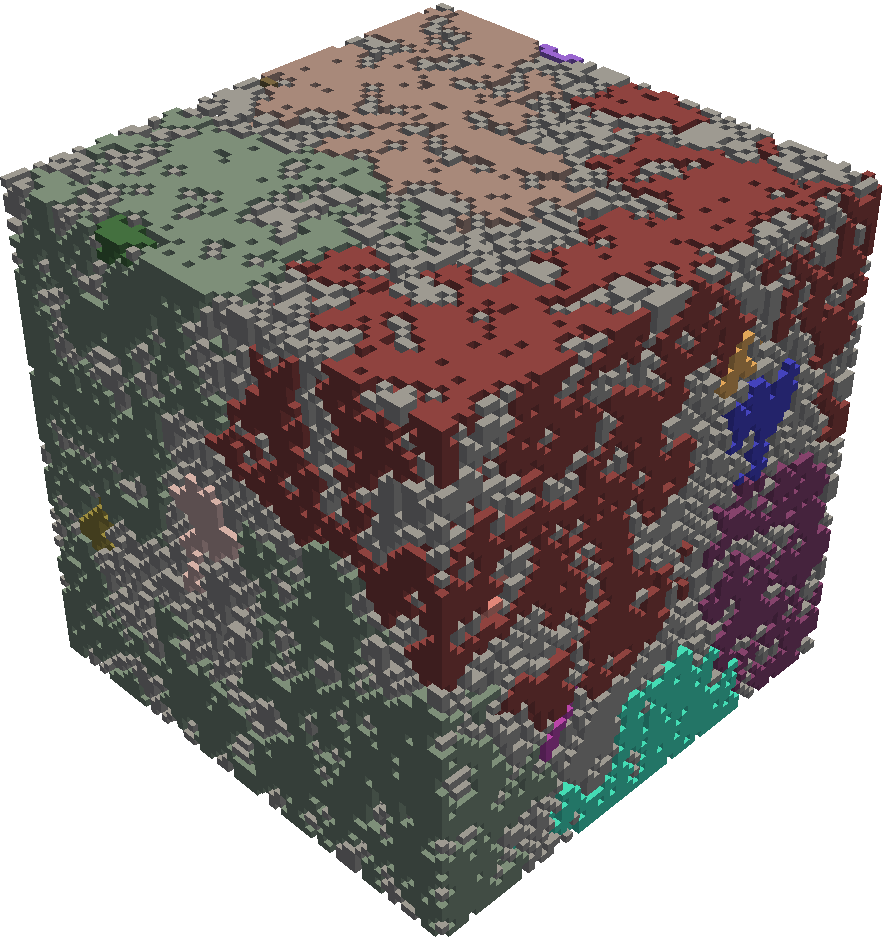}
    \end{tabular}
    \caption{Depicted above are multi-separators of volume images of simulated \textbf{foam cells} with five amounts of noise, $t$ (rows):
    The true multi-separator (Column 2), the multi-separator output by \Cref{algo:gss} (Column 3), and the multi-separator output by the watershed algorithm (Column 4).
    For each $t$, the parameters ($\theta_\text{start}$ and $\theta_\text{end}$ for the watershed algorithm, $b$ for \Cref{algo:gss}) are chosen so as to minimize the average $\viws$ across those images of the data set with the amount of noise $t$.
    Components that do not match with any true component are depicted in gray.}
    \label{fig:cell-example}
\end{figure}

\subsection{Volume images of simulated foams and filaments}\label{sec:synth-img-gen}

In order to examine the multi-separator problem defined in \Cref{sec:model}, in connection with the algorithms introduced in \Cref{sec:algorithms}, empirically, in a well-defined and adjustable setting, we synthesize two types of volume images, volume images of filaments (\Cref{fig:filament-example}), and volume images of foam cells (\Cref{fig:cell-example}).
Both types of volume images are constructed in two steps. 
The first step is to construct a binary volume image of $m \times m \times m$ voxels in which voxels labeled $0$ depict randomly constructed filaments or foam cells, and voxels labeled $1$ depict void or foam membranes separating these structures.
The second step is to construct from these binary volume images and a parameter $t \in [0,1]$ defining an amount of noise, gray scale volume images of the same size.
We repeat the first step in order to obtain volume images of different filaments or foam cells. 
We repeat the second step in order to obtain volume images with different amounts of noise.
The procedure is described in detail in \Cref{appendix:image-synthesis}.
Overall, we synthesize 10 volume images of filaments and 10 volume images of foam cells, each with 21 different amounts of noise, $t=\tfrac{i}{20}$ for $i=0,\dots,20$.
This defines a data set of 20 binary volume images and 420 gray scale volume images.
Examples are depicted in \Cref{fig:filament-example,fig:cell-example}.

The task posed by this data consists in reconstructing the binary volume image, henceforth also referred to as the \emph{truth}, from the associated gray scale volume image.
As with many synthetic data sets, this task is not necessarily difficult to solve if one exploits the complete knowledge of how the data is constructed.
We use it here to pose non-trivial problems by artificially ignoring much of this knowledge.

\subsection{Problem setup}\label{sec:msp-instance}

We state the task of reconstructing the true binary volume image from a gray scale volume image in the form of the multi-separator problem \eqref{eq:msp}.
%
To this end, we consider the grid graph consisting of one node for each voxel and edges connecting precisely the adjacent voxels.
For volume images of filaments, we consider as interactions the edges of the voxel grid graph as well as those pairs of voxels that have a rounded Euclidean distance of $8$ and a positive cost (defined below).
For volume images of foam cells, we define the set of interactions by a collection of offsets in the three directions of the voxel grid graph, namely $\mathcal{O} = \{(1, 0, 0)$, $(0, 1, 0)$, $(0, 0, 1)$, $(5, 0, 0)$, $(0, 5, 0)$, $(0, 0, 5)$, $(0, 4, 4)$, $(0, 4, -4)$, $(4, 4, 0)$, $(4, -4, 0)$, $(4, 0, 4)$, $(4, 0, -4)$, $(3, 3, 3)$, $(3, 3, -3)$, $(3, -3, 3)$, $(3, -3, -3)\}$. 
For any voxel $v$ at coordinates $(x, y, z)$ and any $(\delta_x,\delta_y,\delta_z) \in \mathcal{O}$, the voxel $v$ is connected by an interaction to the voxel at coordinates $(x+\delta_x,y+\delta_y,z+\delta_z)$ if these coordinates are within the bounds of the grid.

For a voxel $v$ with gray value $g \in (0,1)$, we define the node cost as $c_v = \log\bigl(\tfrac{1-g}{g}\bigr)$.

For an interaction $\{u,v\}$ between two voxels $u$ and $v$, we define the interaction cost $c_{\{u,v\}}$ with respect to the costs of the voxels that lie on the digital straight line connecting $u$ and $v$ in the volume image.
For volume images of filaments, we define $c_{\{u,v\}}$ as the median of the costs of the voxels on that line.
Thus, if at least one half of the voxels on the line have positive cost (i.e.~likely belong to a filament), then the interaction $\{u,v\}$ is attractive. 
Otherwise, it is repulsive.
For volume images of foam cells, we define $c_{\{u,v\}}$ as the minimum of the costs of the voxels on that line.
Thus, if there exists a voxel on the line that has negative cost (i.e.~likely belongs to a membrane), then the interaction $\{u,v\}$ is repulsive.
Otherwise, it is attractive.

The instances of the multi-separator problem thus constructed are artificial in several ways:
Firstly, we define the node costs with respect to the gray value of just one voxel, whereas most researchers working with real data, including \citet{lee2021learning}, consider such costs to depend on the gray values of voxel neighborhoods and estimate (learn) these dependencies from examples \citep{minaee2022image}.
Secondly, the node cost $c_v$ of a voxel $v$ labeled 1 in the true binary image, will, in expectation, be negative, while the cost $c_v$ of a voxel $v$ labeled 0 in the true binary image, will, in expectation, be positive.
This unbiasedness of the node costs holds by construction of the volume images, more specifically, because the average of the mean gray value of filaments and foam cells, on the one hand, and the mean gray value of void and foam membranes, on the other hand, is exactly $\frac{1}{2}$ (\Cref{sec:synth-img-gen} and \Cref{appendix:image-synthesis}).
We exploit this knowledge in the problem statement.
However, we do not limit our analysis to unbiased instances of the multi-separator problem.
Instead, we add an bias systematically, as described in \Cref{sec:empirical-procedure} below.

\subsection{Alternative}\label{section:watershed}

In order to compare the output of the algorithms we define in \Cref{sec:algorithms} to the output of an algorithm that is well-understood and widely-used, we compute multi-separators from each volume image also by means of a watershed algorithm \citep[cf.][for an overview]{roerdink2000watershed}, more specifically, the watershed algorithm and implementation by \citet{scikit-image}.
Therefor, we introduce two parameters, $\theta_\text{start}, \theta_\text{end} \in [0,1]$ with $\theta_\text{start} \leq \theta_\text{end}$.
$\theta_\text{start}$ defines seeds for the watershed algorithm as the maximal connected components of all voxels with gray value less than $\theta_\text{start}$.
$\theta_\text{end}$ defines the gray value at which the flood-filling procedure of the watershed algorithm is terminated.
We consider several combinations of these parameters, from intervals large enough to be clearly sub-optimal at the end-points, for all amounts of noise.
For filaments, we consider 41 values of $\theta_\text{start}$ equally spaced in $[0.45, 0.65]$, and 41 values of $\theta_\text{end}$ equally spaced in $[0.5, 0.7]$.
For foam cells, we consider 51 values of $\theta_\text{start}$ equally spaced in $[0.0, 0.5]$, and 21 values of $\theta_\text{end}$ equally spaced in $[0.4, 0.6]$.
The effect of the parameters $\theta_\text{start}$ and $\theta_\text{end}$ is depicted in \Cref{appendix:watershed-parameters}.

\subsection{Metric}\label{sec:metric}

In order to measure a distance between any multi-separator computed from a gray scale volume image, on the one hand, and the true multi-separator, i.e.~the set of voxels labeled 1 in the true binary volume image, on the other hand, we take two steps. 
Fristly, we consider partitions of the node set induced by these two separators.
Secondly, we measure the distance between these partitions by evaluating a metric known as the variation of information \citep{arabie-1973,meila-2007}.
The two steps are described in more detail below.

For any multi-separator $S$ of a graph $G=(V,E)$, we consider the partition of the node set $V$ consisting of (i) the inclusion-wise maximal subsets of $V\setminus S$ that induce  a component of the subgraph of $G$ induced by $V\setminus S$, and (ii) one singleton set $\{v\}$ for every node $v \in S$.

For 
any two partitions $\mathcal{A}, \mathcal{B}$ of $V$, 
any probability mass function $p \colon V \to [0,1]$ and
the extension of $p$ to subsets of $V$ such that for any $U \subseteq V$, we have $p(U) = \sum_{v \in U} p(v)$, 
the variation of information holds
\begin{align*}
\mathrm{VI}_p (\mathcal{A}, \mathcal{B}) 
& = 2 H_p(\mathcal{A}, \mathcal{B}) - H_p(\mathcal{A}) - H_p(\mathcal{B}) \\
& = H_p(\mathcal{A} \mid \mathcal{B}) + H_p(\mathcal{B} \mid \mathcal{A}) 
\end{align*}
with
\begin{align*}
H_p(\mathcal{A}) 
	& = - \sum_{A \in \mathcal{A}} p(A) \log_2 p(A)
\\
H_p(\mathcal{A}, \mathcal{B}) 
	& = - \sum_{A \in \mathcal{A}, B \in \mathcal{B}, A \cap B \neq \emptyset} \hspace{-5.5ex} p(A \cap B) \log_2 p(A \cap B) 
\\
H_p(\mathcal{A} \mid \mathcal{B}) 
	& = H_p(\mathcal{A}, \mathcal{B}) - H_p(\mathcal{B})
\end{align*}
the entropy, joint entropy and conditional entropy with respect to the probability mass function $p$.

Usually, one considers the variation of information with respect to the uniform probability mass $p(v) = |V|^{-1}$ for all $v \in V$ \citep{arabie-1973,meila-2007}.
In our experiments, there is an imbalance between the number of voxels in the true separator $T$ and the number of voxels in the complement $V \setminus T$.
For filaments, most voxels are in $T$.
For foam cells, most voxels are in $V \setminus T$.
We chose $p(v) = \tfrac{1}{2} |T|^{-1}$ for all $v \in T$ and $p(v) = \tfrac{1}{2} |V \setminus T|^{-1}$ for all $v \in V \setminus T$.
This way, nodes in the separator and nodes not in the separator both contribute $\tfrac{1}{2}$ to the total probability mass.
We refer to the metric thus obtained as the \emph{variation of information between separator-induced weighted partitions with singletons} and abbreviate it as $\viws$.
The conditional entropies $H_p(\mathcal{A} \mid \mathcal{B})$ and $H_p(\mathcal{B} \mid \mathcal{A})$ are indicative of false cuts and false joins of $\mathcal{A}$ compared to $\mathcal{B}$. We abbreviate these as $\fc$ and $\fj$.

Complementary to $\viws$, we report also the variation of information with respect to the uniform probability mass function and partitions of the set of those voxels that are neither in the true multi-separator nor in the computed multi-separator.
We refer to this metric as the \emph{variation of information with respect to non-separator nodes}.
We abbreviate this metric as $\vins$ and abbreviate the conditional entropies due to false cuts and false joins as $\fcns$ and $\fjns$.
The metric $\vins$ suffers from a degeneracy:
If the computed separator contains all nodes, $\vins$ evaluates to zero.
This degeneracy of $\vins$ has been the motivation for us to introduce $\viws$.

\subsection{Experiments}\label{sec:empirical-procedure}

For each volume image from the data set described in \Cref{sec:synth-img-gen}, we construct several instances of the multi-separator problem according to \Cref{sec:msp-instance}, each with a different bias $b \in \mathbb{R}$ added to all coefficients in the cost function.
A positive bias penalizes voxels being in the separator; 
a negative bias rewards voxels being in the separator.
We consider several biases, from intervals large enough to be clearly sub-optimal at the end-points, for all amounts of noise (\Cref{fig:inaccuracy-by-bias}).
For both filaments and foam cells, we consider 51 values equally spaced in $[-0.25, 0.25]$.
For each instance of the multi-separator problem thus obtained, we compute one multi-separator, using \Cref{algo:gsg}, for filaments, and \Cref{algo:gss}, for foam cells.

For each volume image and every combination of the parameters $\theta_\text{start}, \theta_\text{end}$, we compute one additional multi-separator by means of the watershed algorithm described in \Cref{section:watershed}.

We measure the distance between each multi-separator thus computed from a gray scale volume image, on the one hand, and the set of voxels labeled 1 in the true binary image, on the other hand, by evaluating the metric described in \Cref{sec:metric}.

In addition, we measure the runtime of our single-thread \cplusplus{} implementation of \Cref{algo:gsg,algo:gss} as a function of the size of a volume image, for additional volumes images of $m \times m \times m$ voxels, with $m \in \{20, \dots, 216\}$, synthesized as described in \Cref{sec:synth-img-gen} and \Cref{appendix:image-synthesis}, with noise $t=0.5$.
From these, we construct instances of the multi-separator problem as described in \Cref{sec:msp-instance}.
We measure the runtime on a Lenovo X1 Carbon laptop equipped with an Intel Core i7-10510U CPU @ 1.80GHz processor and 16 GB LPDDR3 RAM.

The complete \cplusplus{} code for reproducing these experiments is included as supplementary material.

\subsection{Results}

\begin{figure}
    \centering
    \includegraphics[width=\textwidth]{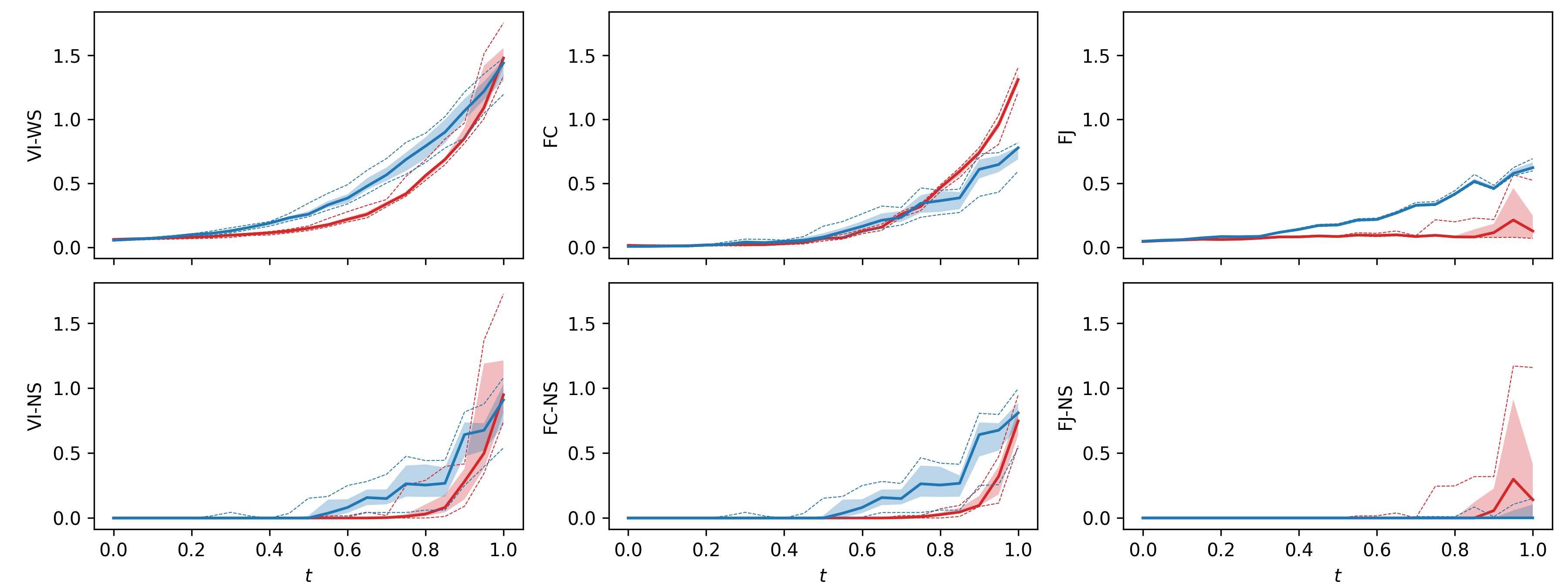}
    \caption{Depicted above is the inaccuracy (distance from the truth) of multi-separators computed by \Cref{algo:gsg} (red) or the watershed algorithm (blue), for volume images of \textbf{filaments} with different amounts of noise $t$.
    For each volume image, the parameters for both algorithms are chosen such that the multi-separator output by the algorithm minimizes the average $\viws$ across those images of the data set with the amount of noise $t$.
    Depicted on top from left to right are the $\viws$ as well as the conditional entropies due to false cuts ($\fc$) and false joins ($\fj$).
    Depicted on the bottom from left to right are the variation of information with respect to only those voxels that are not labeled as separator in both the true multi-separator and the computed multi-separator ($\vins$) as well as the respective conditional entropies due to false cuts ($\fcns$) and false joins ($\fjns$).
    Thick lines indicate the median across those images of the data set with the amount of noise $t$. 
    Shaded areas indicate the second and third quartile.
    Dashed lines indicate the 0.1 and 0.9-quantile.}
    \label{fig:filament-inaccuracy-by-noise}
\end{figure}

\begin{figure}
    \centering
    \includegraphics[width=\textwidth]{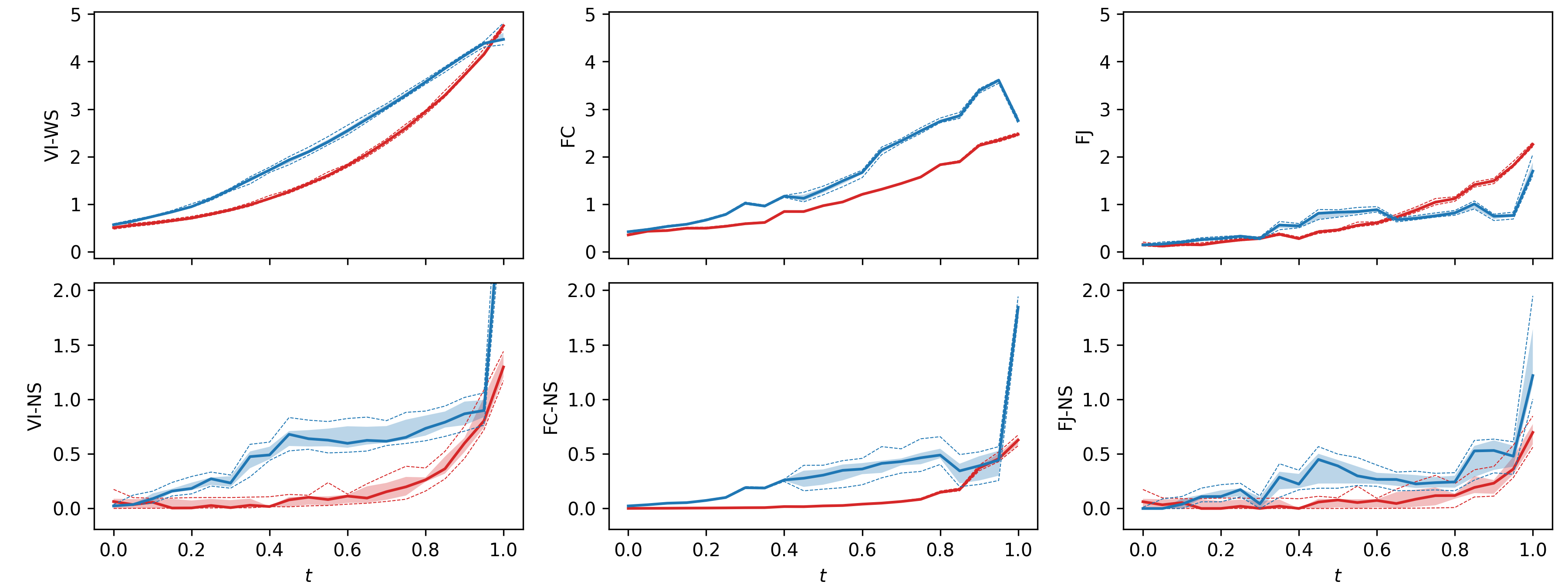}
    \caption{Depicted above is the inaccuracy (distance from the truth) of multi-separators computed by \Cref{algo:gsg} (red) or the watershed algorithm (blue), for volume images of \textbf{foam cells} with different amounts of noise $t$.
    The interpretation of the figure is completely analogous to \Cref{fig:filament-inaccuracy-by-noise}.}
    \label{fig:cell-inaccuracy-by-noise}
\end{figure}

\begin{figure}
    \centering
    \includegraphics[width=\textwidth]{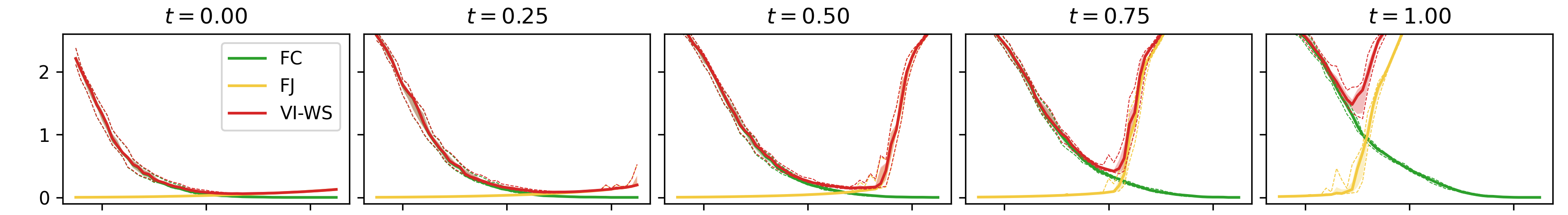}
    \includegraphics[width=\textwidth]{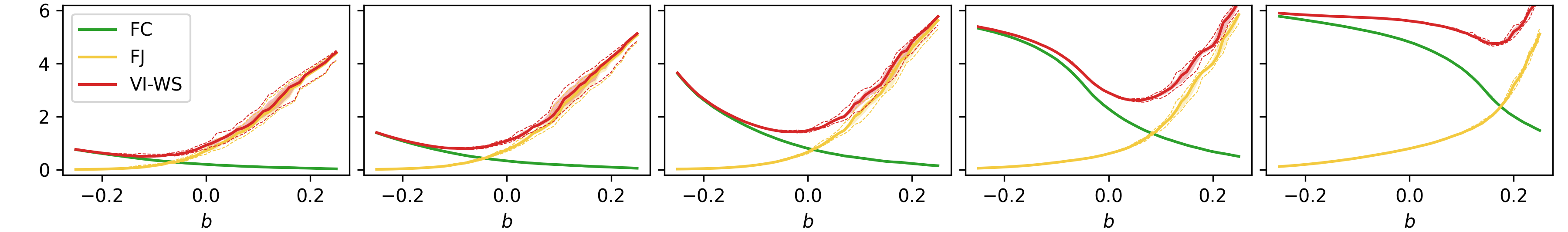}
    \caption{Depicted above is the inaccuracy (distance from the truth) of multi-separators computed by \Cref{algo:gsg} for volume images of \textbf{filaments} (top) and \Cref{algo:gss} for volume images of \textbf{foam cells} (bottom) with five amounts of noise $t$ (columns), as a function of the bias $b$.
    Depicted are the $\viws$ (red) as well as the conditional entropies due to false cuts ($\fc$, green) and false joins ($\fj$, yellow).
    Thick lines indicate the median across those images of the data set with the amount of noise $t$. 
    Shaded areas indicate the second and third quartile.
    Dashed lines indicate the 0.1 and 0.9-quantile.}
    \label{fig:inaccuracy-by-bias}
\end{figure}

\begin{figure}
    \centering
    \includegraphics[height=5cm]{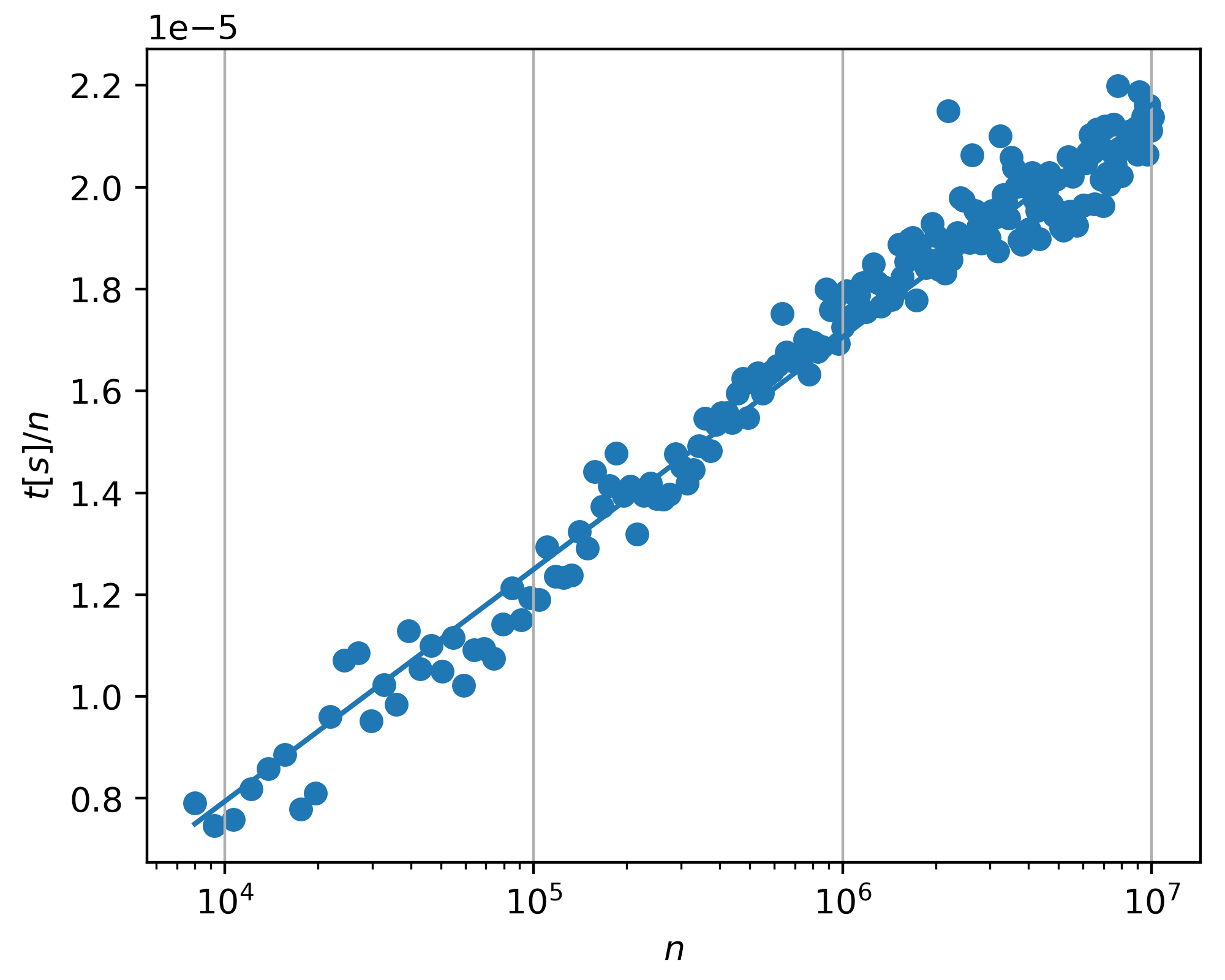}
    \includegraphics[height=5cm]{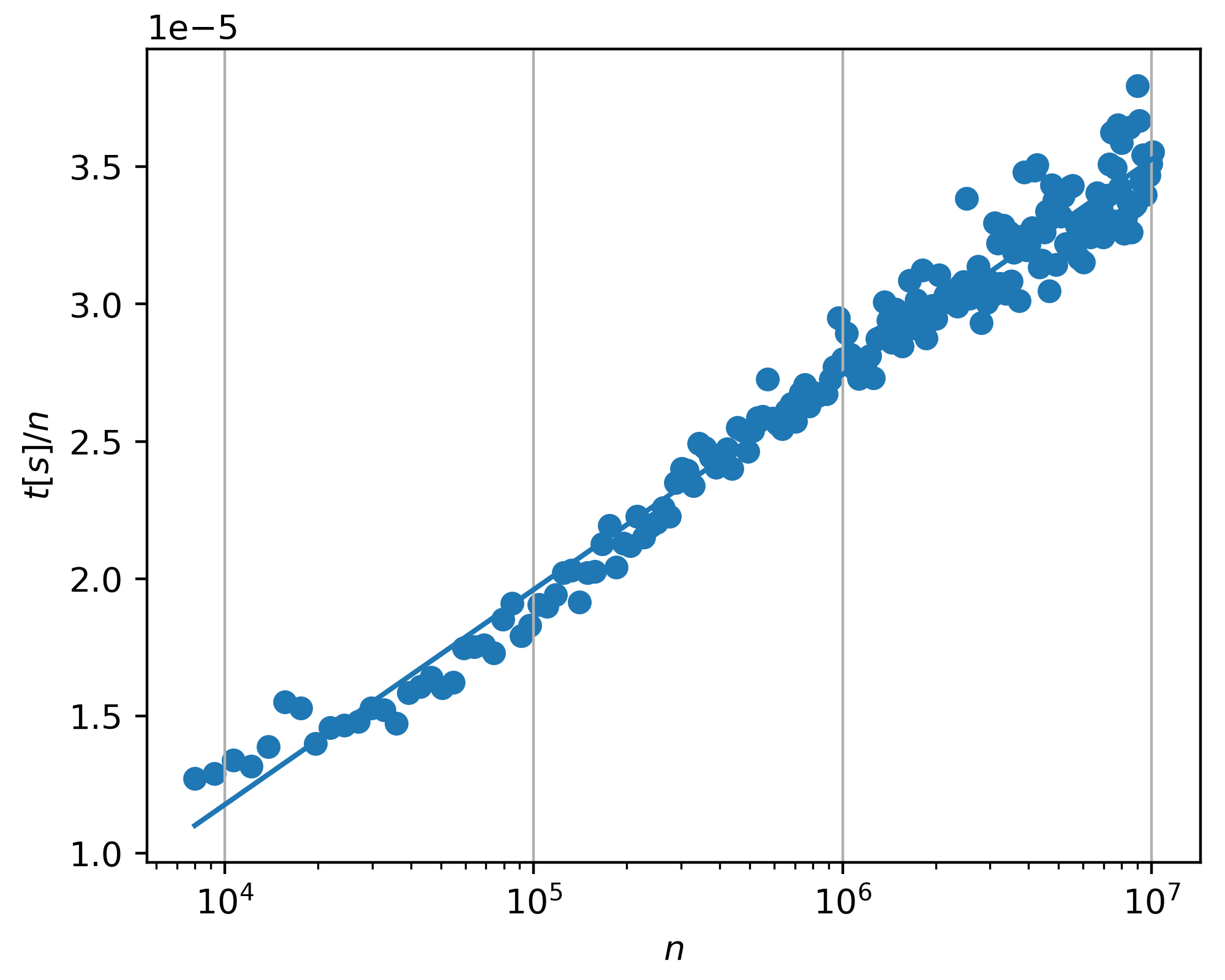}
    \caption{Depicted on the left is the runtime per voxel of our single-thread \cplusplus{} implementation of \Cref{algo:gss} for instances of the multi-separator problem from volume images of \textbf{foam cells}.
    Depicted on the right is the runtime per voxel of our single-thread \cplusplus{} implementation of \Cref{algo:gsg} for instances of the multi-separator problem from volume images of \textbf{filaments}.
    Both are for volume images of $n = m \times m \times m$ voxels, with $m \in \{20, \dots, 216\}$, synthesized as described in \Cref{sec:synth-img-gen} and \Cref{appendix:image-synthesis}, with the noise level $t = 0.5$ and bias $b = 0$.}
    \label{fig:runtime-analysis}
\end{figure}

Depicted in \Cref{fig:filament-inaccuracy-by-noise,fig:cell-inaccuracy-by-noise,fig:inaccuracy-by-bias} is the inaccuracy, i.e.~the distance from the truth, of multi-separators computed by \Cref{algo:gsg} (for filaments), by \Cref{algo:gss} (for foam cells) and by the watershed algorithm (for both), for volume images with different amounts of noise $t$.
Depicted in \Cref{fig:filament-example,fig:cell-example} are qualitative examples.

\Cref{fig:filament-inaccuracy-by-noise,fig:cell-inaccuracy-by-noise} show, for each amount of noise $t$, the distribution of distances across those volume images in the data set that exhibit this amount of noise.
Here, the parameters ($b$, for \Cref{algo:gsg,algo:gss}, $\theta_\text{start}$ and $\theta_\text{end}$ for the watershed algorithm) are chosen so as to minimize the average $\viws$ across those images of the data set with the amount of noise $t$.
It cannot be seen from \Cref{fig:filament-inaccuracy-by-noise,fig:cell-inaccuracy-by-noise} how sensitive the $\viws$ is to the choice of the bias parameter $b$.
To this end, \Cref{fig:inaccuracy-by-bias} shows the inaccuracy of multi-separators computed by \Cref{algo:gsg} (for filaments) or \Cref{algo:gss} (for foam cells) as a function of the bias $b$.

Finally, \Cref{fig:runtime-analysis} shows measurements of the runtime of our implementations of \Cref{algo:gsg,algo:gss} for instances of the multi-separator problem of varying size.

\subsection{Observations}

\subsubsection{General observations}

For volume images with little noise, $t \leq 0.1$, \Cref{algo:gsg,algo:gss} and the watershed algorithm output a multi-separator almost without false joins or false cuts, and those inaccuracies that do occur have a small effect on the variation of information (\Cref{fig:filament-inaccuracy-by-noise,fig:cell-inaccuracy-by-noise}).
For $t>0.1$, multi-separators output by \Cref{algo:gsg,algo:gss} can be significantly closer to the truth than those output by the watershed algorithm in terms of the distance $\viws$.
For $t \leq 0.5$, the multi-separators output by \Cref{algo:gsg,algo:gss} are not sensitive to small deviations of the bias $b$ from the optimum (\Cref{fig:inaccuracy-by-bias}, Columns 1 to 3).
For $t > 0.5$, multi-separators output by \Cref{algo:gsg,algo:gss} can still be significantly closer to the truth than those output by the watershed algorithm in terms of the distance $\viws$ (\Cref{fig:filament-inaccuracy-by-noise,fig:cell-inaccuracy-by-noise}, left).
However, the sensitivity of the multi-separators output by \Cref{algo:gsg,algo:gss} to the choice of the bias $b$ increases with increasing noise (\Cref{fig:inaccuracy-by-bias}, Columns 4 and 5).
For $t=1$, all multi-separators output by any of the algorithms are inaccurate.

A negative bias $b$ generally leads to more false cuts and less false joins (\Cref{fig:inaccuracy-by-bias}).
The optimal bias does not differ much across multiple volume images exhibiting the same amount of noise. This can be seen from \Cref{fig:inaccuracy-by-bias} where the area within the 0.1 and 0.9 quantile is small.

For instances of the multi-separator problem that witnessed the worst case time complexities established in \Cref{lem:gss-worst-case-runtime,lem:gsg-worstcase-runtime}, \Cref{algo:gsg,algo:gss} would not be practical.
For the instances we consider here, however, we observe an empirical runtime of approximately $\mathcal{O}(|V|\log|V|)$, for both algorithms (\Cref{fig:runtime-analysis}).

\subsubsection{Specific observations for filaments}

For volume images of filaments, segmentations are prone to false cuts:
If the gray values of just a few voxels on a filament are distorted due to noise, \emph{locally} it may appear as if there were two distinct filaments.
By modeling the segmentation task as a multi-separator problem with attractive \emph{long-range} interactions, it is possible to avoid false cuts due to such local distortions (\Cref{fig:filament-inaccuracy-by-noise}, bottom center).
Only for large amounts of noise ($t \geq 0.8$) do the multi-separators output by \Cref{algo:gsg} suffer from false cuts and false joins of filaments (\Cref{fig:filament-inaccuracy-by-noise} bottom).

The watershed algorithm computes multi-separators with the least inaccuracy for $\theta_\text{start} \approx \theta_\text{end}$ (\Cref{fig:ws-threshold-filament} in \Cref{appendix:watershed-parameters}).
Smaller values of $\theta_\text{start}$ can lead to multiple distinct seeds for one filament and thus to false cuts (\Cref{fig:ws-threshold-filament}, middle row).
Greater values of $\theta_\text{end}$ can lead to a later termination of the flood-filling procedure, which leads to thicker filaments and thus to false joins (\Cref{fig:ws-threshold-filament}, bottom row).
Moreover, a large value of $\theta_\text{start}$ leads to many voxels falsely output by the watershed algorithm as non-separator, which causes false joins (\Cref{fig:filament-inaccuracy-by-noise} top right and \Cref{fig:filament-example} fourth column).
These effects are one reason why fundamentally different techniques are used for reconstructing filaments (cf.~\Cref{sec:related-work}).

Numerical results corresponding to the examples depicted in \Cref{fig:filament-example} are reported in \Cref{tab:filament-example} in \Cref{appendix:detailed-numerical-results}.
For noise $t=0.5$, for instance, the multi-separator computed by \Cref{algo:gsg} has distance $\viws=0.144$ from the truth, with $\fc=0.059$ and $\fj = 0.084$ while the multi-separator output by the watershed algorithm has distance $\viws=0.250$ from the truth, with $\fc=0.075$ and $\fj = 0.174$.
For greater levels of noise, the quantitative difference between the multi-separators output by the watershed algorithm and \Cref{algo:gsg} is also reflected in the variation of information with respect to non-separator voxels.
For instance, for $t=0.75$, the multi-separator computed by \Cref{algo:gsg} has $\vins = 0$ due to no false cuts or joins of filaments.
In contrast, the multi-separator output by the watershed algorithm has $\vins = 0.435$ due to false cuts of some filaments (green and red filaments in row 4 column 4 of \Cref{fig:filament-example}).

\subsubsection{Specific observations for foam cells}

For volume images of foam cells, segmentations are prone to false joins:
If the gray values of just a few voxels on a membrane are distorted due to noise, \emph{locally} it may appear as if the two cells that are separated by that membrane are one cell.
By modeling the segmentation task as a multi-separator problem involving also \emph{long-range} interactions, it is possible to avoid false cuts due to such local distortions (\Cref{fig:cell-inaccuracy-by-noise}, bottom right).
Only for large amounts of noise ($t \geq 0.8$) do the multi-separators output by \Cref{algo:gss} suffer from false cuts and false joins of foam cells (\Cref{fig:cell-inaccuracy-by-noise} bottom).

For the watershed algorithm, there are two regions for the parameters $\theta_\text{start}$ and $\theta_\text{end}$ for which multi-separators with the least inaccuracy are computed for foam cells (\Cref{fig:ws-threshold-cell} in \Cref{appendix:watershed-parameters}):
In order for the watershed algorithm to avoid false cuts, there needs to be exactly one seed region per foam cell.
This can be archived by either large $\theta_\text{start} \approx 0.4$ which ideally leads to one large seed per foam cell, or by small $\theta_\text{start} \approx 0.15$ which ideally leads to one small seed per foam cell.
Intermediate values for $\theta_\text{start}$ lead to multiple seeds per foam cell which leads to false cuts (\Cref{fig:ws-threshold-cell}, middle row).
Very small or very large values of $\theta_\text{start}$ lead to false joins (\Cref{fig:ws-threshold-cell}, bottom row): 
If $\theta_\text{start}$ is too small, there need not be a seed in each foam cell.
During the flood filling step of the watershed algorithm, the seed from one foam cell can spill into another foam cell that does not have a seed. 
If $\theta_\text{start}$ is too large, the maximal components of all voxels with a value less than $\theta_\text{start}$ can already cross the membranes.
The tradeoff between false cuts and false joins in multi-separators computed by the watershed algorithm can be seen in the second and third column of \Cref{fig:cell-inaccuracy-by-noise}.
The parameter $\theta_\text{end}$ determines at which point the flood filling algorithm is terminated. 
A small $\theta_\text{end}$ leads to an early termination which results in smaller foam cells and thicker membrane while a large $\theta_\text{end}$ leads to a late termination which results in larger foam cells and a thinner membrane.
Too thin or thick membranes are penalized by the $\viws$ (as this results in to few or two many singleton sets in the computation of the $\viws$).

Numerical results corresponding to the examples depicted in \Cref{fig:cell-example} are reported in \Cref{tab:cell-example} in \Cref{appendix:detailed-numerical-results}.
For noise $t=0.5$, for instance, the multi-separator computed by \Cref{algo:gss} has distance $\viws=1.400$, with $\fc=0.970$ and $\fj=0.430$, while the multi-separator output by the watershed algorithm has distance $\viws=2.110$ from the truth, with $\fc=1.281$ and $\fj=0.828$.
The variation of information with respect to non-separator voxels is $\vins = 0.023$ with $\fcns = 0.022$ and $\fjns = 0.001$ for the multi-separator computed by \Cref{algo:gss} and $\vins = 0.671$ with $\fcns = 0.283$ and $\fjns = 0.388$ for multi-separator computed by the watershed algorithm.
The low $\vins$ value for the multi-separator computed by \Cref{algo:gss} indicates that there are very few false cuts and false joins of foam cells.
The higher $\viws$ value is a consequence of individual voxels wrongly identified as separator/non-separator.
In contrast, the multi-separator output by the watershed algorithm suffers from false cuts (gray segments in Row 3 Column 4 of \Cref{fig:cell-example}) and false joins (segments in bottom center and top right in Row 3 Column 4 of \Cref{fig:cell-example}) of foam cells as well as voxels that are wrongly identified as separator/non-separator.

The greedy separator shrinking algorithm (\Cref{algo:gss}) has a bias toward removing nodes from the separator that are adjacent to nodes that are not in the separator, instead of removing nodes from the separator whose neighbors are also part of the separator:
By the greedy paradigm, the node whose removal from the separator decreases the total cost maximally is removed.
If all neighbors of a node are in the separator, then, removing that node from the separator leaves all interactions adjacent to that node separated, i.e. the total cost is only changed by the cost of that node.
Otherwise, the total cost is changed by the cost of that node \emph{and} the cost of all interactions that are no longer separated.
Such a bias is not unique to \Cref{algo:gss} for the multi-separator problem but it also exists for greedy contraction based algorithms for the (lifted) multicut problem, like GAEC \citep{keuper2015efficient}.

In the context of segmenting foam cells, this bias can result in the algorithm finding poor local optima for the following reason:
Once a node is removed from the separator, the described bias leads to neighboring nodes being more likely to be removed next. 
The result can be that the component corresponding to one foam cell is growing before a component corresponding to a different foam is started. 
Consequently, it can happen that the component corresponding to one foam cell grows beyond its membrane because there does not yet exists a component corresponding to a neighboring foam cell (the existence of a component corresponding to a neighboring foam cell would result in a penalty for removing a node from the separator between the two components as many repulsive interactions between these components would no longer be separated).
This ultimately leads to false joins in the computed separator.
In our experiments, this behavior manifests only for low amounts of noise ($t \leq 0.2$) as can be seen from \Cref{fig:cell-inaccuracy-by-noise} (bottom right).
Moreover, it can be overcome easily, by starting \Cref{algo:gss} not with all but with most nodes in the separator, e.g.~those 90\% of the nodes with the lowest cost.
For consistency, we do not implement such heuristics and only report results for \Cref{algo:gss} initialized with all nodes in the separator.

\section{Conclusion}\label{sec:conclusion}
We have introduced the multi-separator problem, a combinatorial optimization problem with applications to the task of image segmentation. 
While the general problem is \textsc{np}-hard and even hard to approximate, we have identified two special cases of the objective function for which the problem can be solved efficiently.
For the general \textsc{np}-hard case, we have defined two efficient local search algorithms.
In experiments with synthetic volume images of simulated filaments and foam cells, we have seen for all but extreme amounts of noise, that the multi-separator problem in connection with \Cref{algo:gsg,algo:gss} can result in significantly more accurate segmentations than a watershed algorithm, with a practical runtime.
For moderate noise, these segmentations are not sensitive to small biases in objective function of the multi-separator problem, as long as the bias is positive, for filaments, and negative, for foam cells.
Given the popularity of the watershed algorithm for segmenting foam cells, and given that, so far, fundamentally different models and algorithms are used for segmenting filaments, (cf.~\Cref{sec:related-work}), the multi-separator problem contributes to a more symmetric treatment of these extreme cases of the image segmentation task and holds promise for volume images in which both structures are present, e.g., for volume images of nervous systems acquired in the field of connectomics \citep{lee2021learning}.

The results of this work motivate further studies of the multi-separator problem both theoretically and in connection with practical applications.
Theoretical questions concern, e.g., the complexity and approximability of the problem for special classes of graphs.
Specifically, planar graphs are of interest in the context of segmenting 2-dimensional images.
Also of interest are exact algorithms for the multi-separator problem.
To this end, the problem can be stated in the form of an integer linear program which then can be solved by a branch and cut algorithm.
Studying the polytope associated with the set of feasible solutions of the integer linear program will be of interest to improve exact and approximate algorithms.
Another open problem asks for an efficient implementation of a local search algorithm that adds and removes nodes from the separator.
To realize such an algorithm, findings on the dynamic connectivity problem \citep{holm2001poly} can be considered.

\bibliography{references}
\bibliographystyle{plainnat}

\appendix
\section{Appendix}
\subsection{Volume image synthesis (details)}\label{appendix:image-synthesis}

In order to synthesize binary volume images of filaments, we draw $n$ cubic splines randomly so as to intersect the unit cube and such that the minimal distance between any two splines is at least $d_{\text{min}}$. 
Regarding the parameters of the splines, regarding the distributions from which these parameters are drawn, and regarding the rejection of splines that violate the constraints, we refer the reader to the supplementary \cplusplus{} code.
Then, considering the canonical embedding of the 3-dimensional grid graph of $m \times m \times m$ voxels in the unit cube, we label voxels $0$ if and only if their Euclidean distance in the unit cube to the nearest spline is at most $r$.
In our experiments, $m=64$, $d_\text{min} = \tfrac{10}{m}$ and $r = \tfrac{0.75}{m}$.
Thus, the center lines of the filaments are at least $10$ voxels apart, and the filaments have a width of $1.5$ voxels.

In order to synthesize binary volume images of foam cells, we draw $n$ pairwise non-neighboring voxels uniformly at random from the $m \times m \times m$ voxel grid graph. 
These voxels are labeled $1, \dots, n$, respectively. 
All other voxels are labeled $0$. 
Next, we repeat the following randomized region growing procedure until termination: 
We draw a voxel $v$ uniformly at random from the set of voxels $w$ that hold three properties: 
(i) $w$ is labeled $0$, (ii) there is a non-zero label among the labels of the neighbors $w$, and (iii) this non-zero label is unique, called $i_w$.
If no such voxel exists, the procedure terminates.
Otherwise, $v$ is labeled $i_v$.
This results in a voxel labeling such that, for every non-zero label $i$, the set of voxels labeled $i$ induces a component of the voxel grid graph, and these components are separated by the set of voxels labeled $0$.
Next, we erode the components of voxels with a non-zero label with a spherical kernel of radius $\tfrac{d_{\text{min}}}{2} + r$.
In our experiments, $m=64$, $d_\text{min} = 8$ and $r = 0.75$.
If this erosion disconnects voxels with the same non-zero label, we label $0$ all except one maximal component of such voxels.
Next, we again grow components of voxels with a non-zero label, now deterministically by breath-first search, until all voxels labeled $0$ are adjacent to at last two voxels with distinct non-zero labels.
Then, we label $0$ also all voxels with a Euclidean distance of at most $r$ to any voxel labeled $0$.
This ensures that all components of voxels with a non-zero label (to-become foam cells) are separated by a $2\,r$-wide set of voxels labeled $0$ (to-become cell membranes).
Finally, we obtain a binary image by labeling $1$ precisely those voxels whose integer label is $0$.

\begin{figure}
    \centering
    \begin{tikzpicture}[scale=1]
    \def \wmax {0.9};
    \def \r {2};

    \begin{axis}[
        width=8cm, height=4cm,
        axis lines=left, xlabel=$d$, ylabel=$w(d)$, ymax=1.1, ymin=0, 
        xtick={\r}, xticklabels={$r$},
        ytick={0.5,\wmax, 1}, yticklabels={$\tfrac{1}{2}$, $w_{\text{max}}$, $1$}]
        \addplot[domain=0:5] (x,{1/(1 + (\wmax/(1-\wmax))^(x/\r-1))});
        \addplot[domain=0:\r, dashed] (x,0.5);
        \addplot[domain=0:0.5, dashed] (\r,x);
    \end{axis}
\end{tikzpicture}
    \caption{Depicted above its the weight $w$ as a function of the distance $d$.}
    \label{fig:weight-distribution}
\end{figure}
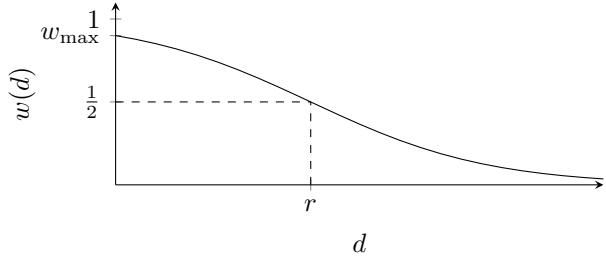

To obtain gray scale images from the binary images described above, we proceed as follows:
For images of filaments, we compute the distance from each voxel to the nearest spline. 
For images of foam cells, we compute the distance from each voxel to the nearest membrane, more specifically, to the nearest voxel labeled 0 during the synthesis of the binary image, at the end of the breadth-first search.
Let $d$ be this distance for a given voxel $v$.
We define the gray value of $v$ as $g = w(d) \, g_1 + (1-w(d)) \, g_2$, clipped to the interval $[0,1]$.
Here, $g_1$ and $g_2$ are samples from two normal distribution with means and standard deviations $\mu_1$, $\sigma_1$ and $\mu_2$, $\sigma_2$,
and $g$ is a weighted sum with the weight $w(d)$ depending on the distance $d$ according to the function written below that is depicted in \Cref{fig:weight-distribution}:
\[
    w(d) = \frac{1}{1 + \left(\frac{w_{\text{max}}}{1-w_{\text{max}}}\right)^{d/r - 1}}  
    \quad
    \text{with}
    \quad
    w_{\text{max}} \in (\tfrac{1}{2},1)
\]
This function decreases monotonically such that $w(0) = w_{\text{max}}$ and $w(r) = \tfrac{1}{2}$, and $w(d) \to 0$ for $d \to \infty$.
In our experiments, $w_{\text{max}} = 0.9$. 
For $\mu_1$, $\sigma_1$, $\mu_2$, and $\sigma_2$, we consider different numbers in order to vary the amount of noise, and thus, the difficulty of reconstructing the binary image from the gray scale image.
More specifically, we define $\mu_1^\text{start}$, $\mu_1^\text{end}$, $\sigma_1^\text{start}$, $\sigma_1^\text{end}$, $\mu_2^\text{start}$, $\mu_2^\text{end}$, $\sigma_2^\text{start}$, $\sigma_2^\text{end}$, and for the parameter $t \in [0, 1]$: $\mu_i = (1-t)\mu_i^\text{start} + t(\mu_i^\text{end})$ and $\sigma_i = (1-t)\sigma_i^\text{start} + t(\sigma_i^\text{end})$ for $i=1,2$.
For filaments, 
$
    \mu_1^\text{start} = 0.3, 
    \mu_1^\text{end} = 0.38, 
    \mu_2^\text{start} = 0.7, 
    \mu_2^\text{end} = 0.62, 
    \sigma_1^\text{start} = \sigma_2^\text{start} = 0.05, 
    \sigma_1^\text{end} = \sigma_2^\text{end} = 0.1
$.
For foam cells, 
$
    \mu_1^\text{start} = 0.7, 
    \mu_1^\text{end} = 0.55, 
    \mu_2^\text{start} = 0.3, 
    \mu_2^\text{end} = 0.45, 
    \sigma_1^\text{start} = \sigma_2^\text{start} = 0.05, 
    \sigma_1^\text{end} = \sigma_2^\text{end} = 0.1
$.
These parameters are chosen such that voxels labeled 1 in the binary image will, in expectation, be brighter than voxels labeled 0 in the binary image.
Moreover, the parameters are chosen such that, for $t=0$, all algorithms we consider output near perfect reconstructions of the binary images, and, for $t=1$, all algorithms we consider output binary images with severe mistakes.

\subsection{Detailed numerical results for renderings}\label{appendix:detailed-numerical-results}

\Cref{tab:filament-example} and \Cref{tab:cell-example} report the numerical results corresponding to the examples that are depicted in \Cref{fig:filament-example} and \Cref{fig:cell-example}, respectively.

\begin{table}
    \centering
    \small
    \begin{tabular}{l l l l l l l l l l l}
        \toprule
        $t$ & Alg. & $\theta_\text{start}$ & $\theta_\text{end}$ & $b$ & $\viws$ & $\fc$ & $\fj$ & $\vins$ & $\fcns$ & $\fjns$ \\
        \midrule
        0.00 & WS & 0.580 & 0.580 & & 0.056 & 0.008 & 0.048 & 0.000 & 0.000 & 0.000 \\
        & MS & & & 0.06 & 0.054 & 0.007 & 0.046 & 0.000 & 0.000 & 0.000 \\
        \hline
        0.25 & WS & 0.580 & 0.585 & & 0.104 & 0.022 & 0.082 & 0.000 & 0.000 & 0.000 \\
        & MS & & & 0.09 & 0.079 & 0.013 & 0.065 & 0.000 & 0.000 & 0.000 \\
        \hline
        0.50 & WS & 0.580 & 0.580 & & 0.250 & 0.075 & 0.174 & 0.000 & 0.000 & 0.000 \\
        & MS & & & 0.08 & 0.144 & 0.059 & 0.084 & 0.000 & 0.000 & 0.000 \\
        \hline
        0.75 & WS & 0.570 & 0.570 & & 0.750 & 0.421 & 0.330 & 0.435 & 0.435 & 0.000 \\
        & MS & & & 0.01 & 0.406 & 0.317 & 0.089 & 0.000 & 0.000 & 0.000 \\
        \hline
        1.00 & WS & 0.555 & 0.565 & & 1.447 & 0.781 & 0.666 & 1.045 & 0.907 & 0.138 \\
        & MS & & & -0.11 & 1.345 & 1.274 & 0.071 & 0.749 & 0.749 & 0.000 \\
        \bottomrule
        \end{tabular}
    \caption{This table summarizes the numerical results corresponding to the renderings in \Cref{fig:filament-example} for \textbf{filaments}.
    For the watershed algorithm (WS) the thresholds $\theta_\text{start}$ and $\theta_\text{end}$, and for the multi-separator (MS) algorithm the biases are chosen so as to minimize the average $\viws$ across those images of the data set with the amount of noise $t$.}
    \label{tab:filament-example}
\end{table}

\begin{table}
    \centering
    \small
    \begin{tabular}{l l l l l l l l l l l}
        \toprule
        $t$ & Alg. & $\theta_\text{start}$ & $\theta_\text{end}$ & $b$ & $\viws$ & $\fc$ & $\fj$ & $\vins$ & $\fcns$ & $\fjns$ \\
        \midrule
        0.00 & WS & 0.380 & 0.470 & & 0.566 & 0.421 & 0.144 & 0.023 & 0.023 & 0.000 \\
        & MS & & & -0.10 & 0.480 & 0.354 & 0.126 & 0.000 & 0.000 & 0.000 \\
        \hline
        0.25 & WS & 0.390 & 0.470 & & 1.083 & 0.787 & 0.296 & 0.209 & 0.101 & 0.108 \\
        & MS & & & -0.08 & 0.769 & 0.535 & 0.234 & 0.005 & 0.005 & 0.000 \\
        \hline
        0.50 & WS & 0.120 & 0.480 & & 2.110 & 1.281 & 0.828 & 0.671 & 0.283 & 0.388 \\
        & MS & & & -0.03 & 1.400 & 0.970 & 0.430 & 0.023 & 0.022 & 0.001 \\
        \hline
        0.75 & WS & 0.120 & 0.470 & & 3.273 & 2.540 & 0.734 & 0.657 & 0.472 & 0.185 \\
        & MS & & & 0.06 & 2.552 & 1.565 & 0.987 & 0.085 & 0.080 & 0.005 \\
        \hline
        1.00 & WS & 0.430 & 0.600 & & 4.526 & 2.810 & 1.716 & 3.275 & 1.941 & 1.334 \\
        & MS & & & 0.17 & 4.769 & 2.504 & 2.265 & 1.586 & 0.674 & 0.912 \\
        \bottomrule
        \end{tabular}
    \caption{This table summarizes the numerical results corresponding to the renderings in \Cref{fig:cell-example} for \textbf{foam cells}.
    For the watershed algorithm (WS) the thresholds $\theta_\text{start}$ and $\theta_\text{end}$, and for the multi-separator (MS) algorithm the biases are chosen so as to minimize the average $\viws$ across those images of the data set with the amount of noise $t$.}
    \label{tab:cell-example}
\end{table}

\subsection{Analysis of parameters of watershed algorithm}\label{appendix:watershed-parameters}

\Cref{fig:ws-threshold-filament,fig:ws-threshold-cell} depict the inaccuracy of multi-separators computed by the watershed algorithm for volume images of filaments and foam cells, respectively.

\begin{figure}
    \centering
    \includegraphics[height=4.5cm]{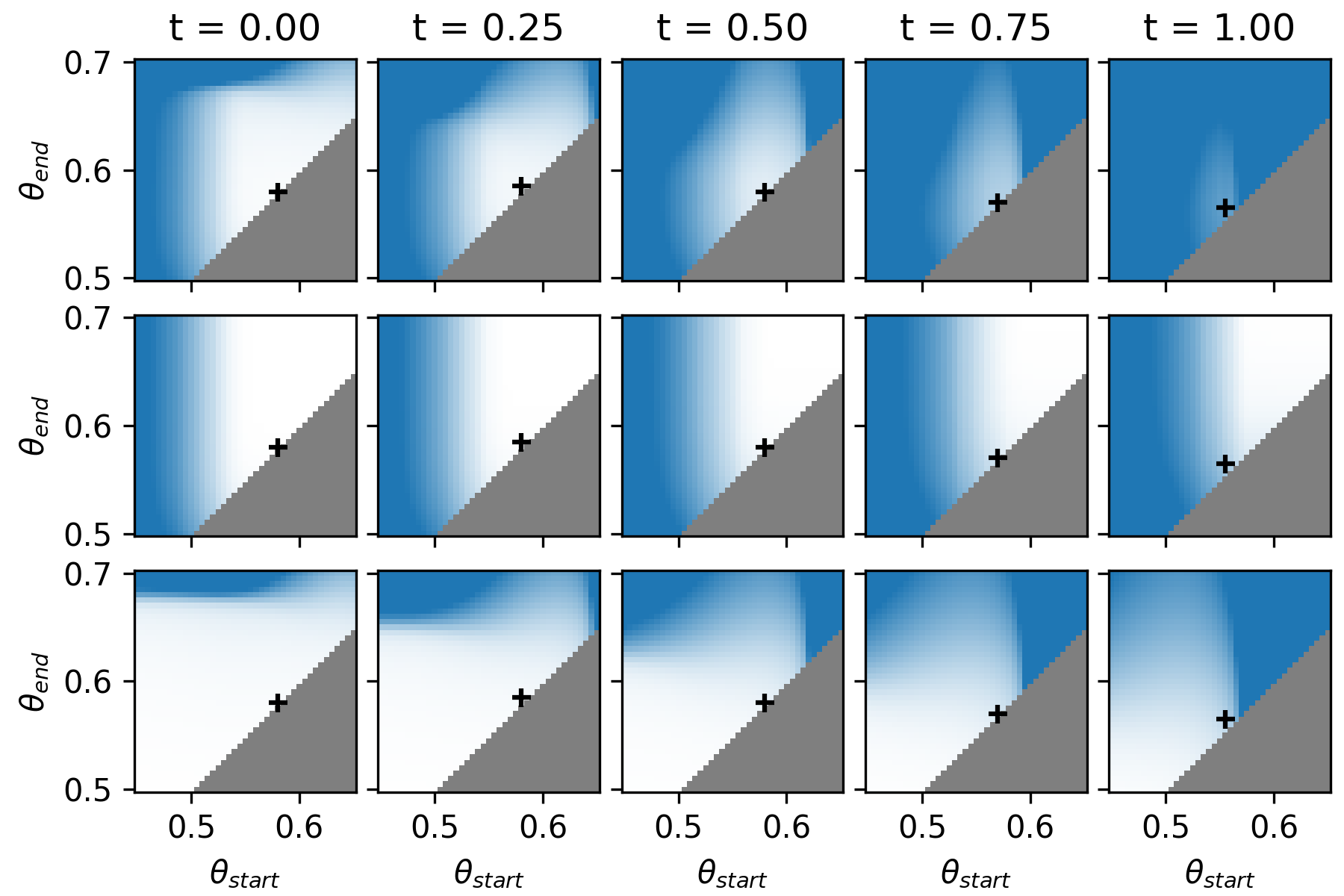}
    \caption{Depicted above is the inaccuracy (distance from truth) of multi-separators computed by the watershed algorithm (\Cref{section:watershed}) for volume images of \textbf{filaments} with five amounts of noise $t$ (columns) for all values of $\theta_\text{start}$ (x-axis) and $\theta_\text{end}$ (y-axis).
    The three rows depict the medians across those images of the data set with the amount of noise $t$ of the $\viws$ (top) and the conditional entropies due to false cuts ($\fc$, middle) and false joins ($\fj$, bottom).
    White indicates a value of $0$ and saturated blue indicates a value of $2$ or greater.
    Infeasible parameters (i.e. $\theta_\text{start} > \theta_\text{end}$) are depicted in gray.
    The + symbol indicates the parameters that archive the best median $\viws$.
    }
    \label{fig:ws-threshold-filament}
\end{figure}

\begin{figure}
    \centering
    \includegraphics[width=\textwidth]{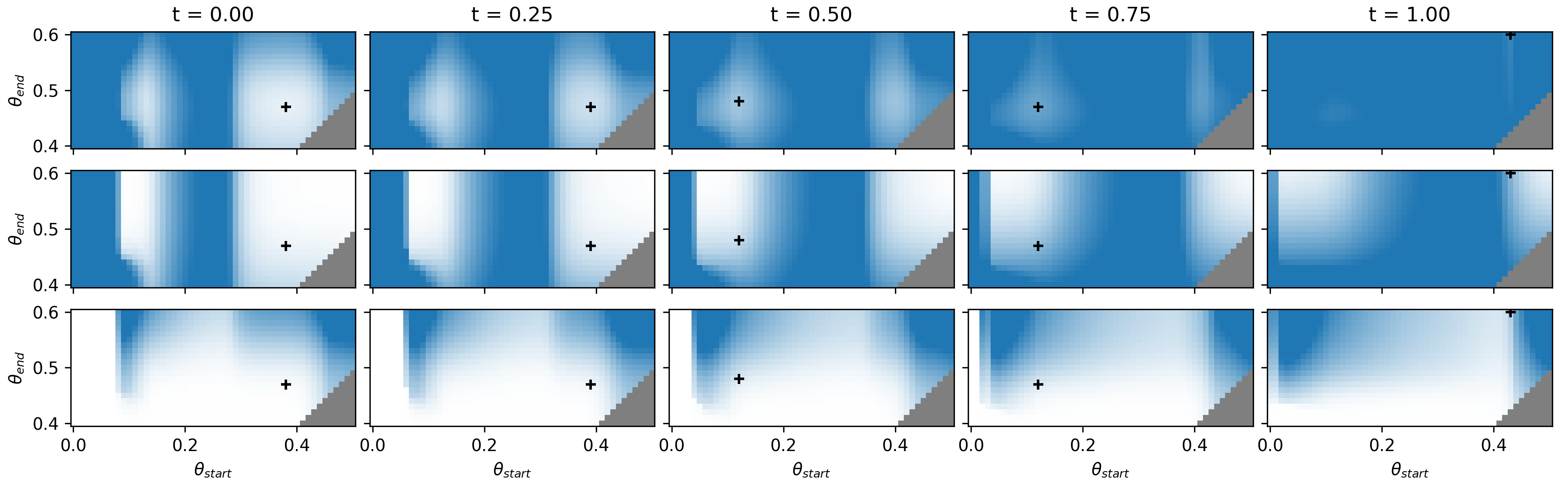}
    \caption{Depicted above is the inaccuracy (distance from truth) of multi-separators computed by the watershed algorithm (\Cref{section:watershed}) for volume images of \textbf{foam cells} completely analogous to \Cref{fig:ws-threshold-filament}.}
    \label{fig:ws-threshold-cell}
\end{figure}

\end{document}